  \newcommand{\mathmat}[1]{\mathbfit{#1}}
  \newcommand{\mathvec}[1]{\mathbfit{#1}}
  \newcommand{\mathvecgreek}[1]{\mathbfit{#1}}
  \newcommand{\mathmatgreek}[1]{\mathbfit{#1}}
  \newcommand{\boldupright}[1]{\symbfup{#1}}
  \newcommand{\mathrv}[1]{\mathsfit{#1}}
  \newcommand{\mathrvvec}[1]{\mathbfsfit{#1}}
  \newcommand{\mathrvgreek}[1]{\mathsfit{#1}}
  \newcommand{\mathrvvecgreek}[1]{\mathbfsfit{#1}}
  \newcommand{\mathmat}[1]{\mathbfit{#1}}
  \newcommand{\mathvec}[1]{\mathbfit{#1}}
  \newcommand{\mathvecgreek}[1]{\mathbfit{#1}}
  \newcommand{\mathmatgreek}[1]{\mathbfit{#1}}
  \newcommand{\boldupright}[1]{\mathbf{#1}}
  \newcommand{\mathrv}[1]{\mathsfit{#1}}
  \newcommand{\mathrvvec}[1]{\mathbfsfit{#1}}
  \newcommand{\mathrvgreek}[1]{\mathsfit{#1}}
  \newcommand{\mathrvvecgreek}[1]{\mathbfsfit{#1}}
\declaretheoremstyle[
    spaceabove    = \parsep,
    spacebelow    = \parsep,
    bodyfont      = \normalfont\itshape,
]{theoremsty}
\declaretheorem[name=Corollary,  style=theoremsty, mdframed={style = coloredstyle}]{corollary}
\declaretheorem[name=Lemma,      style=theoremsty, mdframed={style = coloredstyle}]{lemma}
\declaretheorem[name=Theorem,    style=theoremsty, mdframed={style = coloredstyle}, numbered=no]{theorem*}
\declaretheorem[name=Proposition,style=theoremsty, mdframed={style = coloredstyle}, numbered=no]{proposition*}
\declaretheorem[name=Corollary,  style=theoremsty, mdframed={style = coloredstyle}, numbered=no]{corollary*}
\declaretheorem[name=Lemma,      style=theoremsty, mdframed={style = coloredstyle}, numbered=no]{lemma*}
\declaretheorem[name=Conjecture,  style=theoremsty, mdframed={style = coloredstyle}]{conjecture}
\declaretheorem[name=Open Problem,style=theoremsty, mdframed={style = coloredstyle}, numbered=no]{openproblem*}
\declaretheorem[name=Conjecture,  style=theoremsty, mdframed={style = coloredstyle}, numbered=no]{conjecture*}
\declaretheoremstyle[
    spaceabove=\parsep,
    spacebelow=\parsep,
    bodyfont=\normalfont,
]{normalsty}
\declaretheorem[name=Remark,      style=normalsty]{remark}
\declaretheorem[name=Definition,  style=normalsty, mdframed={style = 
 coloredstyle}]{definition}
\declaretheorem[name=Assumption,  style=normalsty, mdframed={style = 
 coloredstyle} ]{assumption}
\declaretheorem[name=Example,     style=normalsty]{example}
\declaretheorem[name=Remark,      style=normalsty, numbered=no]{remark*}
\declaretheorem[name=Definition,  style=normalsty, mdframed={style = coloredstyle}, numbered=no]{definition*}
\declaretheorem[name=Assumption,  style=normalsty, mdframed={style = coloredstyle}, numbered=no]{assumption*}
\providecommand\IfFormatAtLeastTF{\@ifl@t@r\fmtversion}
  \renewcommand*{\backref}[1]{}
  \renewcommand*{\backrefalt}[4]{({\footnotesize%
  \ifcase #1 Not cited.%
    \or page~#2%
    \else pages #2%
  \fi%
  })}
\definecolor{linkcolor}{HTML}{6929C4}
\definecolor{citecolor}{HTML}{0043CE}
\crefname{assumption}{Assumption}{Assumption}
\crefname{condition}{Condition}{Condition}
\crefname{framedtheorem}{Theorem}{Theorems}
\crefname{framedproposition}{Proposition}{Propositions}
\crefname{framedlemma}{Lemma}{Lemmas}
\def\adl@drawiv#1#2#3{%
        \hskip.5\tabcolsep
        \xleaders#3{#2.5\@tempdimb #1{1}#2.5\@tempdimb}%
                #2\z@ plus1fil minus1fil\relax
        \hskip.5\tabcolsep}
\newcommand{\cdashlinelr}[1]{%
  \noalign{\vskip\aboverulesep
           \global\let\@dashdrawstore\adl@draw
           \global\let\adl@draw\adl@drawiv}
  \cdashline{#1}
  \noalign{\global\let\adl@draw\@dashdrawstore
           \vskip\belowrulesep}}
\DeclareMathOperator*{\minimize}{minimize}
\DeclareMathOperator*{\argmin}{arg\,min} 
\newcommand*\xbar[1]{%
  \hbox{%
    \vbox{%
      \hrule height 0.6pt 
      \kern0.33ex
      \hbox{%
        \kern-0.1em
        \ensuremath{#1}%
        \kern-0.1em
      }%
    }%
  }%
}
\newcommand{\DKL}[2]{\mathrm{D}_{\mathrm{KL}}(#1,#2)}
\newcommand{\norm}[1]{{\left\lVert #1 \right\rVert}}
\newcommand{\abs}[1]{{\left| #1 \right|}}
\def\do#1{\csdef{v#1}{\mathvec{#1}}}
\def\do#1{\csdef{v#1}{\mathvecgreek{\csname#1\endcsname}}}
\def\do#1{\csdef{rv#1}{\mathrv{#1}}}
\def\do#1{\csdef{rv#1}{\mathrvgreek{\csname#1\endcsname}}}
\def\do#1{\csdef{rvv#1}{\mathrvvec{#1}}}
\def\do#1{\csdef{rvv#1}{\mathrvvecgreek{\csname#1\endcsname}}}
\def\do#1{\csdef{rvm#1}{\mathrvvecgreek{#1}}}
\def\do#1{\csdef{m#1}{\mathmat{#1}}}
\def\do#1{\csdef{m#1}{\mathmatgreek{\csname#1\endcsname}}}
\newcommand{\inner}[2]{\left\langle #1, #2 \right\rangle}
\definecolor{cherry1}{rgb}{0.215686, 0.215686, 0.215686}
\definecolor{cherry2}{rgb}{0.563899, 0.155919, 0.156577}
\definecolor{cherry3}{rgb}{0.747389, 0.178584, 0.180272}
\definecolor{cherry4}{rgb}{0.836168, 0.264453, 0.26819}
\definecolor{cherry5}{rgb}{0.880144, 0.397868, 0.404399}
\definecolor{cherry6}{rgb}{0.911942, 0.567676, 0.576412}
\title{On the Convergence of\\Black-Box Variational Inference}
\author{%
  Kyurae Kim \\
  University of Pennsylvania\\
  \texttt{kyrkim@seas.upenn.edu} \\
  \And
  Jisu Oh \\
  North Carolina State University \\
  \texttt{joh26@ncsu.edu}
  \And
  Kaiwen Wu \\
  University of Pennsylvania \\
  \texttt{kaiwenwu@seas.upenn.edu} \\
  \AND
  Yi-An Ma \\
  University of California, San Diego \\
  \texttt{yianma@ucsd.edu}
  \And
  Jacob R. Gardner \\
  University of Pennsylvania \\
  \texttt{jacobrg@seas.upenn.edu} \\
}
\begin{document}

\maketitle

\begin{abstract}
  We provide the first convergence guarantee for black-box variational inference (BBVI) with the reparameterization gradient.
  While preliminary investigations worked on simplified versions of BBVI (\textit{e.g.}, bounded domain, bounded support, only optimizing for the scale, and such), our setup does not need any such algorithmic modifications.
  Our results hold for log-smooth posterior densities with and without strong log-concavity and the location-scale variational family.
  Notably, our analysis reveals that certain algorithm design choices commonly employed in practice, such as nonlinear parameterizations of the scale matrix, can result in suboptimal convergence rates.
  Fortunately, running BBVI with proximal stochastic gradient descent fixes these limitations and thus achieves the strongest known convergence guarantees.
  We evaluate this theoretical insight by comparing proximal SGD against other standard implementations of BBVI on large-scale Bayesian inference problems.
\end{abstract}

\section{Introduction}
Despite the practical success of black-box variational inference (BBVI; \citealp{ranganath_black_2014,titsias_doubly_2014,kucukelbir_automatic_2017}), also known as stochastic gradient variational Bayes and Monte Carlo variational inference, whether it converges under appropriate assumptions on the target problem have been an open problem for a decade.
While our understanding of BBVI has been advancing~\citep{challis_gaussian_2013, hoffman_blackbox_2020,bhatia_statistical_2022,domke_provable_2019,domke_provable_2020}, a full convergence guarantee that extends to the practical implementations as used in probabilistic programming languages (PPL) such as Stan \citep{carpenter_stan_2017}, Turing~\citep{ge_turing_2018}, Tensorflow Probability~\citep{dillon_tensorflow_2017}, Pyro~\citep{bingham_pyro_2019}, and PyMC~\citep{patil_pymc_2010} has yet to be demonstrated.

Due to our lack of understanding, a consensus on how we should implement our BBVI algorithms has yet to be achieved.
For example, when the variational family is chosen to be the location-scale family, the ``scale'' matrix can be parameterized linearly or nonlinearly, and both parameterizations are used by default in popular software packages. (See Table 1 in \citealt{kim_practical_2023}.)
Surprisingly, as we will show, seemingly innocuous design choices like these can substantially impact the convergence of BBVI.
This is critical as BBVI has been shown to be less robust (\textit{e.g.}, sensitive to initial points, stepsizes, and such) than competing inference methods such as Markov chain Monte Carlo (MCMC). (See~\citealp{yao_yes_2018,dhaka_robust_2020,welandawe_robust_2022,domke_provable_2020}.)
Instead, the evaluation of BBVI algorithms has been relying on expensive empirical evaluations~\citep{agrawal_advances_2020,dhaka_challenges_2021,yao_yes_2018,giordano_covariances_2018}.

To rigorously analyze the design of BBVI algorithms, we establish the first convergence guarantee for the implementations \textit{precisely} as used in practice. 
We provide results for BBVI with the reparameterization gradient (RP;~\citealp{titsias_doubly_2014,kingma_autoencoding_2014}) and the location-scale variational family, arguably the most widely used combination in practice.
Our results apply to log-smooth posteriors, which is a routine assumption for analyzing the convergence of stochastic optimization~\citep{garrigos_handbook_2023} and sampling algorithms~\citep[\S 2.3]{dwivedi_logconcave_2019}. 
The key is to show that evidence lower bound (ELBO;~\citealp{jordan_introduction_1999}) satisfies regularity conditions required by convergence proofs of stochastic gradient descent (SGD; \citealp{robbins_stochastic_1951,bottou_online_1999,nemirovski_robust_2009}), the workhorse underlying BBVI.

Our analysis reveals that nonlinear scale matrix parameterizations used in practice are suboptimal: they provably break strong convexity and sometimes even convexity.
Even if the posterior is strongly log-concave, the ELBO is not strongly convex anymore.
This contrasts with linear parameterizations, which guarantee the ELBO to be strongly convex if the posterior is strongly log-concave~\citep{domke_provable_2020}.
Under linear parameterizations, however, the ELBO is no longer smooth, making optimization challenging.
Because of this,~\citet{domke_provable_2020} proposed to use proximal SGD, which \citet[Appendix A]{agrawal_amortized_2021} report to have better performance than vanilla SGD with nonlinear parameterizations.
Indeed, we show that BBVI with proximal SGD achieves the \textit{fastest} known converges rates of SGD, unlike vanilla BBVI. 
Thus, we provide a concrete reason for employing proximal SGD.
We evaluate this insight on large-scale Bayesian inference problems by implementing an Adam-like~\citep{kingma_adam_2015} variant of proximal SGD proposed by~\citet{yun_adaptive_2021}.

Concurrently to this work, convergence guarantees on BBVI with the RP and the sticking-the-landing estimator (STL;~\citealp{roeder_sticking_2017}) under the linear parameterization were published by \citet{domke_provable_2023b}.
To achieve this, they show that a quadratic bound on the gradient variance is sufficient to guarantee the convergence of projected and proximal SGD.
In contrast, we focus on analyzing the ELBO under nonlinear parameterizations and connect it to existing analysis strategies.
A more in-depth comparison of the two works is provided in \cref{section:concurrent}.
\vspace{-1ex}
\begin{enumerate}[leftmargin=7mm]
    \item[\ding{182}] \textbf{Convergence Guarantee for BBVI:} \cref{thm:nonconvex_sgd} establishes a convergence guarantee for BBVI with assumptions matching the implementations used in practice. 
    That is, without algorithmic simplifications and unrealistic assumptions such as bounded domain or bounded support.
    
    \item[\ding{183}] \textbf{Optimality of Linear Parameterizations:} \cref{thm:energy_convex} shows that, for location-scale variational families, nonlinear scale parameterizations prevent the ELBO from being strongly-convex even when the target posterior is strongly log-concave.
    
    \item[\ding{184}] \textbf{Convergence Guarantee for Proximal BBVI:}  \cref{thm:complexity_strongly_convex_proximal_sgd_decgamma} guarantees that, if proximal SGD is used, BBVI on \(\mu\)-strongly log-concave posteriors can obtain a solution \(\epsilon\)-close to the global optimum with \(\mathcal{O}\left(1/\epsilon\right)\) iterations.
    
    \item[\ding{185}] \textbf{Evaluation of Proximal BBVI in Practice:} In \cref{section:experiment}, we evaluate the utility of proximal SGD on large-scale Bayesian inference problems.
\end{enumerate}


\vspace{-1.5ex}
\section{Background}\label{section:background}

\vspace{-1.5ex}
{
\paragraph{Notation}
Random variables are denoted in serif (\textit{e.g.}, \(\rvx\), \(\rvvx\)), vectors are in bold (\textit{e.g.}, \(\vx\), \(\rvvx\)), and matrices are in bold capitals (\textit{e.g.} \(\mA\)).
For a vector \(\vx \in \mathbb{R}^d\), we denote the inner product as \(\vx^{\top}\vx\) and \(\inner{\vx}{\vx}\), the \(\ell_2\)-norm as \(\norm{\vx}_2 = \sqrt{\vx^{\top}\vx}\).
For a matrix {\footnotesize\(\mA\), \(\norm{\mA}_{\mathrm{F}} = \sqrt{\mathrm{tr}\left(\mA^{\top} \mA\right)}\)} denotes the Frobenius norm.
\(\mathbb{S}^d_{++}\) is the set of positive definite matrices.
For some function \(f\), \(\mathrm{D}_i f\) denotes the \(i\)th coordinate of \(\nabla f\), and
\(\mathrm{C}^k\left(\mathcal{X},\mathcal{Y}\right)\) is the set of \(k\)-time differentiable continuous functions mapping from \(\mathcal{X}\) to \(\mathcal{Y}\).
}

\newcommand{\specialcell}[2][c]{%
  \begin{tabular}[#1]{@{}l@{}}#2\end{tabular}}

\vspace{-1ex}
\subsection{Black-Box Variational Inference}\label{section:bbvi}
\vspace{-1ex}
Variational inference (VI,~\citealp{zhang_advances_2019, blei_variational_2017, jordan_introduction_1999}) aims to minimize the exclusive (or backward/reverse) Kullback-Leibler (KL) divergence as:
{\begingroup
\setlength{\belowdisplayskip}{1.5ex} \setlength{\belowdisplayshortskip}{1.5ex}
\setlength{\abovedisplayskip}{1.5ex} \setlength{\abovedisplayshortskip}{1.5ex}
\begin{align*}
    \minimize_{\vlambda \in \Lambda}\; \mathrm{D}_{\mathrm{KL}}\left(q_{\vlambda}, \pi\right)
    \triangleq
    \mathbb{E}_{\rvvz \sim q_{\vlambda}} -\log \pi \left(\rvvz\right) -\mathbb{H}\left(q_{\vlambda}\right),
\end{align*}
\endgroup}
\begin{center}
  \vspace{-2ex}
  {\begingroup
    \setlength\tabcolsep{1.0ex} 
  \begin{tabular}{lllll}
    where & \(\mathrm{D}_{\mathrm{KL}}\left(q_{\vlambda}, \pi\right)\) & \specialcell{is the KL divergence,} & \(\mathbb{H}\)     & is the differential entropy, \\
          & \(\pi\) & is the (target) posterior distribution, and & \(q_{\vlambda}\) & is the variational distribution, \\
  \end{tabular}
  \endgroup}
\vspace{-2.ex}
\end{center}
%
While alternative approaches to VI~\citep{kim_markov_2022, naesseth_markovian_2020, dieng_variational_2017, hernandez-lobato_blackbox_2016} exist, so far, exclusive KL minimization has been the most successful.
We thus use ``exclusive KL minimization'' as a synonym for VI, following convention.

Equivalently, one minimizes the negative \textit{evidence lower bound} (ELBO,~\citealp{jordan_introduction_1999}) \(F\):
{%
\setlength{\belowdisplayskip}{1.ex} \setlength{\belowdisplayshortskip}{1.ex}
\setlength{\abovedisplayskip}{1.5ex} \setlength{\abovedisplayshortskip}{1.5ex}
\[
  \minimize_{\vlambda \in \Lambda}\; F\left(\vlambda\right)
  \triangleq
  \mathbb{E}_{\rvvz \sim q_{\vlambda}} -\log p \left(\vz, \vx\right) - \mathbb{H}\left(q_{\vlambda}\right),
\]
}%
where \( \log p \left(\vz, \vx\right)\) is the \textit{joint likelihood}, which is proportional to the posterior as \(\pi\left(\vz\right) \propto p\left(\vz, \vx\right) = p\left(\vx \mid \vz\right) p\left(\vz\right) \), where \(p\left(\vx \mid \vz\right)\) is the likelihood and \(p\left(\vz\right)\) is the prior.

\vspace{-1ex}
\subsection{Variational Family}\label{section:family}
\vspace{-1.5ex}
In this work, we focus on the following variational family. 
({\small\(\stackrel{\mathrm{d}}{=}\)} is equivalence in distribution.)
\begin{center}
  \vspace{-3ex}
\begin{minipage}[t]{0.48\textwidth}
\begin{definition}[\textbf{Reparameterized Family}]\label{def:family}
  Let \(\varphi\) be some \(d\)-variate distribution.
  Then, \(q_{\vlambda}\) that can be equivalently represented as
{%
\setlength{\belowdisplayskip}{.5ex} \setlength{\belowdisplayshortskip}{.5ex}
\setlength{\abovedisplayskip}{.0ex} \setlength{\abovedisplayshortskip}{.0ex}
  \begin{alignat*}{2}
    \rvvz \sim q_{\vlambda}  \quad\Leftrightarrow\quad &\rvvz \stackrel{\mathrm{d}}{=} \mathcal{T}_{\vlambda}\left(\rvvu\right); \quad \rvvu \sim  \varphi,
  \end{alignat*}
  }%
  is said to be part of a reparameterized family generated by the base distribution \(\varphi\) and the reparameterization function \(\mathcal{T}_{\vlambda}\).
\end{definition}
\end{minipage}
\hfill
\begin{minipage}[t]{0.51\textwidth}
\begin{definition}[\textbf{Location-Scale Reparameterization Function}]\label{def:reparam}
\(\mathcal{T}_{\vlambda} : \mathbb{R}^d \rightarrow \mathbb{R}^d\) defined as
{
\setlength{\belowdisplayskip}{.5ex} \setlength{\belowdisplayshortskip}{.5ex}
\setlength{\abovedisplayskip}{1.ex} \setlength{\abovedisplayshortskip}{1.ex}
  \begin{align*}
    &\mathcal{T}_{\vlambda}\left(\vu\right) \triangleq \mC \vu + \vm
  \end{align*}
}%
  with \(\vlambda\) containing the parameters for forming the location \(\vm \in \mathbb{R}^d\) and scale \(\mC = \mC\left(\vlambda\right) \in \mathbb{R}^{d \times d}\) is called the location-scale reparameterization function.
\end{definition}
\end{minipage}
\end{center}
\vspace{-2ex}
The location-scale family enables detailed theoretical analysis, as demonstrated by \citep{domke_provable_2019, domke_provable_2020, kim_practical_2023, fujisawa_multilevel_2021}, and includes the most widely used variational families such as the Student-t, elliptical, and Gaussian families~\citep{titsias_doubly_2014}.

\vspace{-1ex}
\paragraph{Handling Constrained Support} 
For common choices of the base distribution \(\varphi\), the support of \(q_{\vlambda}\) is the whole \(\mathbb{R}^d\).
Therefore, special treatment is needed when the support of \(\pi\) is constrained.
\citet{kucukelbir_automatic_2017} proposed to handle this by applying diffeomorphic transformation denoted with \(\psi\), often called \textit{bjectors}~\citep{dillon_tensorflow_2017, fjelde_bijectors_2020, leger_parametrization_2023}, to \(q_{\vlambda}\) such that 
{%
\setlength{\belowdisplayskip}{1ex} \setlength{\belowdisplayshortskip}{1ex}
\setlength{\abovedisplayskip}{.0ex} \setlength{\abovedisplayshortskip}{.0ex}
\[
  \rvvzeta \sim q_{\psi,\vlambda} \qquad\Leftrightarrow\qquad \rvvzeta \stackrel{d}{=} \psi^{-1}(\rvvz);\quad \rvvz \sim q_{\vlambda},
\]
}%
such that the support of \(q_{\psi,\vlambda}\) matches that of \(\pi\).
For example, when the support of \(\pi\) is \(\mathbb{R}_+\), one can choose \(\psi^{-1} = \exp\).
This approach, known as automatic differentiation VI (ADVI), is now standard in most modern PPLs.

\vspace{-1ex}
\paragraph{Why focus on posteriors with unconstrained supports?} 
When bijectors are used, the entropy of \(q_{\vlambda}\), \(\mathbb{H}\left(q_{\vlambda}\right)\), needs to be adjusted by the Jacobian of \(\psi\)~\citep{kucukelbir_automatic_2017}, \(\mJ_{\phi^{-1}}\).
However, applying the transformation to \(\pi\) instead of \(q_{\vlambda}\) is mathematically equivalent and more convenient.
In fact, bijectors can be automatically incorporated into our notation by implicitly setting
{%
\setlength{\belowdisplayskip}{1.0ex} \setlength{\belowdisplayshortskip}{1.0ex}
\setlength{\abovedisplayskip}{1.0ex} \setlength{\abovedisplayshortskip}{1.0ex}
\[
    p\left(\vx\mid\vz\right) 
    = 
    \widetilde{p}\left(\vx \mid \psi^{-1}\left(\vz\right)\right) 
    \quad\text{and}\quad
    p\left(\vz\right) 
    =
    \widetilde{p}\left(\psi^{-1}\left(\vz\right)\right) \abs{\boldupright{J}_{\psi^{-1}}\left(\vz\right)},
\]
}%
such that \(\widetilde{\pi}\left(\vzeta\right) \propto \widetilde{p}\left(\vx \mid\vzeta\right) \widetilde{p}\left(\vzeta\right) \), where \(\widetilde{\pi}\) is the constrained posterior that we are actually interested in.
Therefore, our setup in \cref{section:bbvi}, where the domain of \(\rvvz\) is taken to be the unconstrained \(\mathbb{R}^d\), already encompasses constrained posteriors through ADVI.

Lastly, we impose light assumptions on the base distribution \(\varphi\), which are already satisfied by most variational families used in practice. (\textit{i.i.d.}: independently and identically distributed.)
\begin{assumption}[\textbf{Base Distribution}]\label{assumption:symmetric_standard}
  \(\varphi\) is a \(d\)-variate distribution such that \(\rvvu \sim \varphi\) and \(\rvvu = \left(\rvu_1, \ldots, \rvu_d \right)\) with \textit{i.i.d.} components.
  Furthermore, \(\varphi\) is
  \begin{enumerate*}[label=\textbf{(\roman*)}]
      \item symmetric and standardized such that \(\mathbb{E}\rvu_i = 0\), \(\mathbb{E}\rvu_i^2 = 1\), \(\mathbb{E}\rvu_i^3 = 0\), and 
      \item has finite kurtosis \(\mathbb{E}\rvu_i^4 = k_{\varphi} < \infty\).
  \end{enumerate*}
\end{assumption}
\vspace{-1ex}
The assumptions on the variational family we will use throughout this work are collectively summarized in the following assumption:

\begin{assumption}\label{assumption:variation_family}
  The variational family is the location-scale family formed by \cref{def:family,def:reparam} with the base distribution \(\varphi\) satisfying \cref{assumption:symmetric_standard}.
\end{assumption}

\vspace{-1ex}
\subsection{Scale Parameterizations}\label{section:scale}
\vspace{-1ex}
For the ``scale'' matrix \(\mC\left(\vlambda\right)\) in the location-scale family, any parameterization that results in a positive-definite covariance \(\mC \mC^{\top} \in \mathbb{S}^d_{++}\) is valid.
However, for the ELBO to ever be convex, the entropy \(\mathbb{H}\left(q_{\vlambda}\right)\) must be convex, which requires the mapping \(\vlambda \mapsto \mC \mC^{\top}\) to be convex.
To ensure this, we restrict \(\mC\) to (lower) triangular matrices with strictly positive eigenvalues, essentially, Cholesky factors.
This leaves two of the most common parameterizations:

\begin{center}
  \vspace{-3ex}
\begin{minipage}[t]{0.42\textwidth}
  \begin{definition}[\textbf{Mean-Field Family.}]
    {%
\setlength{\belowdisplayskip}{.5ex} \setlength{\belowdisplayshortskip}{.5ex}
\setlength{\abovedisplayskip}{1ex} \setlength{\abovedisplayshortskip}{1ex}
  \[
    \mC = \mD_{\phi}\left(\vs\right)
  \]
    }%
  where the \(d\) elements of \(\vs\) forms the diagonal and \(\vlambda \in \Lambda\) such that
  {%
\setlength{\belowdisplayskip}{.0ex} \setlength{\belowdisplayshortskip}{.0ex}
\setlength{\abovedisplayskip}{1ex} \setlength{\abovedisplayshortskip}{1ex}
  \[
    \Lambda = \{ (\vm, \vs) \mid \vm \in \mathbb{R}^d, \vs \in \mathcal{S}\}.
  \]
  }%
  \end{definition}
\end{minipage}
\hfill
\begin{minipage}[t]{0.57\textwidth}
  \begin{definition}[\textbf{Full-Rank Cholesky Family}]
  {%
\setlength{\belowdisplayskip}{.5ex} \setlength{\belowdisplayshortskip}{.5ex}
\setlength{\abovedisplayskip}{1ex} \setlength{\abovedisplayshortskip}{1ex}
  \[
    \mC = \mD_{\phi}\left(\vs\right) + \mL,
  \]
  }%
  where the \(d\) elements of \(\vs\) forms the diagonal, \(\mL\) is \(d\)-by-\(d\) strictly lower triangular, and \(\vlambda \in \Lambda\) such that
  {%
\setlength{\belowdisplayskip}{.0ex} \setlength{\belowdisplayshortskip}{.0ex}
\setlength{\abovedisplayskip}{1ex} \setlength{\abovedisplayshortskip}{1ex}
  \[
    \Lambda = \left\{ (\vm, \vs, \mL) \mid \vm \in \mathbb{R}^d, \vs \in \mathcal{S}, \mathrm{vec}\left(\mL\right) \in \mathbb{R}^{(d+1) d / 2} \right\}.
  \]
  }%
  \end{definition}
\end{minipage}
  \vspace{-2ex}
\end{center}
Here, \(S\) is discussed in the next paragraph, \(\mD_{\phi}\left(\vs\right) \in \mathbb{R}^{d \times d}\) is a diagonal matrix such that \(\mD_{\phi}\left(\vs\right) \triangleq \mathrm{diag}\left(\vphi\left(\vs\right)\right) = \operatorname{diag}\left( \phi\left(s_{1}\right), \ldots, \phi\left(s_{d}\right) \right) \), and \(\phi\) is a function we call a \textit{diagonal conditioner}.

\vspace{-1ex}
\paragraph{Linear v.s. Nonlinear Parameterizations}
When the diagonal conditioner is a linear function \(\phi(x) = x\), we say that the covariance parameterization is \textit{linear}.
In this case, to ensure that \(\mC\) is a Cholesky factor, the domain of \(\vs\) is set as \(\mathcal{S} = \mathbb{R}_+^d\).
On the other hand, by choosing a nonlinear conditioner \(\phi : \mathbb{R} \rightarrow \mathbb{R}_+\), we can make the domain of \(\vs\) to be the unconstrained \(\mathcal{S} = \mathbb{R}^d\).
Because of this, nonlinear conditioners such as the \(\mathrm{softplus}\left(x\right) \triangleq \log\left(1 + \exp\left(x\right)\right)\)~\citep{dugas_incorporating_2000} are frequently used in practice, especially for mean-field. (See Table 1 by~\citealp{kim_practical_2023}).


\vspace{-1ex}
\subsection{Problem Structure of Black-Box Variational Inference}\label{section:taxonomy}
\vspace{-1ex}
Exclusive KL minimization VI is fundamentally a composite (regularized) optimization problem 
{\setlength{\belowdisplayskip}{1ex} \setlength{\belowdisplayshortskip}{1ex}
\setlength{\abovedisplayskip}{1.5ex} \setlength{\abovedisplayshortskip}{1.5ex}
\begin{alignat*}{2}
  F\left(\vlambda\right) = f\left(\vlambda\right) + h\left(\vlambda\right),
  &&\qquad\text{(ELBO)}
\end{alignat*}
}%
where \(f(\vlambda) \triangleq \mathbb{E}_{\rvvz \sim q_{\vlambda}} \ell\left(\rvvz\right) \) is the \textit{energy term}, \(\ell(\vz) \triangleq - \log p\left(\vz, \vx\right)\) is the negative joint log-likelihood, and \(h\left(\vlambda\right) \triangleq - \mathbb{H}\left(q_{\psi, \vlambda}\right)\) is the \textit{entropic regularizer}.
From here, BBVI introduces more structure.

\begin{wrapfigure}[28]{r}{0.4\textwidth}
\vspace{-3ex}
\centering
\begin{tikzpicture}
  \draw (0,0)           [white, very thick, fill=gray, opacity=0.23] circle (1.7cm)
      node[anchor=north east, xshift=1.25cm, yshift=1.25cm, black, opacity=1.0] {CP};
      
  \draw (-0.3cm,     0) [white, very thick, fill=gray, opacity=0.23] circle (1.3cm)
      node[anchor=east, xshift=1.1cm,  black, opacity=1.0] {VI};
      
  \draw (-1.1cm, 0.5cm) [white, very thick, fill=gray, opacity=0.23] circle (1.3cm)
      node[anchor=north west, xshift=-.9cm, yshift=.9cm,  black, opacity=1.0] {IS};
      
  \draw (-1.1cm,-0.5cm) [white, very thick, fill=gray, opacity=0.23] circle (1.3cm)
      node[anchor=south west, xshift=-.9cm, yshift=-.9cm, black, opacity=1.0] {RP};
  
  \draw (0.5cm, -1.8cm) [white, very thick, fill=gray, opacity=0.23] circle (1.1cm)
      node[anchor=south west, xshift=0.4cm, yshift=-0.3cm, black, opacity=1.0] {FS};
      
  \draw (0.3cm, -2.0cm) [white, very thick, fill=gray, opacity=0.23] circle (0.7cm)
      node[black, opacity=1.0] {ERM};

   \begin{scope}
      \clip   (-0.3cm,     0) circle (1.3cm); 
      \fill[opacity=0.3, fill=cherry5] 
              (-1.1cm, 0.5cm) circle (1.3cm);
    \end{scope}
    
   \begin{scope}
      \clip   (-0.3cm,     0) circle (1.3cm); 
      \clip   (-1.1cm,-0.5cm) circle (1.3cm); 
      \fill[opacity=1.0, fill=cherry5] 
              (-1.1cm, 0.5cm) circle (1.3cm) 
              node[xshift=0.4cm, yshift=-0.2cm, white] {BBVI};
    \end{scope}
\end{tikzpicture}  
\begin{center}
  \vspace{-3ex}
  {\begingroup
    \setlength\tabcolsep{1.0ex} 
  \begin{tabular}{ll}
    CP  & Composite \\
    IS  & Infinite Sum \\
    RP  & Reparameterized \\
    FS  & Finite Sum \\
    ERM & Empirical Risk Minimization 
  \end{tabular}
  \endgroup}
\vspace{-.5ex}
\end{center}
\caption{\textbf{Taxonomy of variational inference}. Within BBVI, this work only considers the reparameterization gradient (\(\mathrm{BBVI} \,\cap\, \mathrm{RP}\), shown in \textbf{\textcolor{cherry3}{dark red}}). 
This leaves out BBVI with the score gradient (\(\mathrm{BBVI} \setminus \mathrm{RP}\), shown in \textbf{\textcolor{cherry5}{light red}}).
The set \(\mathrm{VI} \,\cap\, \mathrm{FS}\) includes sparse variational Gaussian processes~\citep{titsias_variational_2009}, while the remaining set \(\mathrm{VI} \setminus \left( \mathrm{FS} \cup \mathrm{IS} \cup \mathrm{RP} \right)\)  includes coordinate ascent VI~\citep{blei_variational_2017}.
}\label{fig:taxonomy}
\end{wrapfigure}
An illustration of the taxonomy is shown in~\cref{fig:taxonomy}.
In particular, BBVI has an \textit{infinite sum} structure (IS). 
That is, it cannot be represented as a sum of finite subcomponents as in ERM.
Furthermore, 
{\setlength{\belowdisplayskip}{1.5ex} \setlength{\belowdisplayshortskip}{1.5ex}
\setlength{\abovedisplayskip}{1ex} \setlength{\abovedisplayshortskip}{1ex}
\begin{alignat*}{2}
    F\left(\vlambda\right) 
    &= \mathbb{E}_{\rvvu \sim \varphi} \, f\left(\vlambda; \rvvu\right) + h\left(\vlambda\right)
    &&\qquad\text{(\(\mathrm{CP} \cap \mathrm{IS}\))}
    \\
    &= \mathbb{E}_{\rvvu \sim \varphi} \, \ell\left( \mathcal{T}_{\vlambda}\left(\rvvu\right) \right) + h\left(\vlambda\right),
    &&\qquad\text{(\(\mathrm{CP} \cap \mathrm{IS} \cap \mathrm{RP}\))}
\end{alignat*}
}%
where \(f\left(\vlambda; \vu\right) \triangleq \ell\left(\mathcal{T}_{\vlambda}\left(\vu\right)\right)\).

\vspace{-1ex}
\paragraph{Theoretical Challenges}\label{section:theoretical_challenges}
The structure of BBVI has multiple challenges that have hindered its theoretical analysis:
\begin{enumerate*}[label=\textbf{(\roman*)}]
    \item the stochasticity of the Jacobian of \(\mathcal{T}\) and\label{item:problem_jacobian}
    \item The infinite sum structure.\label{item:problem_inifinite_sum}
\end{enumerate*}

For \cref{item:problem_jacobian}, we can see that in
{\setlength{\belowdisplayskip}{1.0ex} \setlength{\belowdisplayshortskip}{1.0ex}
\setlength{\abovedisplayskip}{1.0ex} \setlength{\abovedisplayshortskip}{1.0ex}
\[
  \nabla_{\vlambda} \ell\left( \mathcal{T}_{\vlambda}\left(\vu\right)\right)
  =
  \frac{\partial \mathcal{T}_{\vlambda}\left(\vu\right)}{ \partial \vlambda}
  \nabla \ell\left( \mathcal{T}_{\vlambda}\left(\vu\right)\right)
  =
  \frac{\partial \mathcal{T}_{\vlambda}\left(\vu\right)}{ \partial \vlambda}
  g\left(\vlambda; \vu\right),
\]
}%
where \(g\left(\vlambda; \vu\right) \triangleq \left(\nabla \ell \circ \mathcal{T}_{\vlambda}\right)\left(\rvvu\right)\),
both the Jacobian of \(\mathcal{T}_{\vlambda}\) and the gradient of the log-likelihood, \(g\), depend on the randomness \(\vu\).
Effectively decoupling the two is 
 a major challenge to analyzing the properties of the ELBO and its gradient estimators~\citep{domke_provable_2019,domke_provable_2020}.

For \cref{item:problem_inifinite_sum}, the problem is that recent analyses of SGD~\citep{nguyen_sgd_2018, vaswani_fast_2019, gower_sgd_2019, garrigos_handbook_2023} have increasingly been relying on the assumption that \(f\left(\vlambda; \vu\right)\) is smooth for all \(\vu\) such that
\begin{equation*}
  \norm{
    \nabla_{\vlambda} f\left(\vlambda; \vu \right)
    -
    \nabla_{\vlambda} f\left(\vlambda^{\prime}; \vu \right)
  }
  \leq
  L \norm{\vlambda - \vlambda^{\prime}}
\end{equation*}
for some \(L < \infty\).
This is sensible if the support of \(\rvvu\) is bounded, which is true for the ERM setting but not for the class of infinite sum (IS) problems.
Previous works circumvented this issue by assuming 
\begin{enumerate*}[label=\textbf{(\roman*)}]
    \item that the support of \(\rvvu\) is bounded~\citep{fujisawa_multilevel_2021} which implicitly changes the variational family, or
    \item that the gradient \(\nabla f\) is bounded by a constant~\citep{buchholz_quasimonte_2018, liu_quasimonte_2021} which contradicts strong convexity~\citep{nguyen_sgd_2018}.
\end{enumerate*}


\vspace{-1ex}
\section{The Evidence Lower Bound Under Nonlinear Scale Parameterizations}\label{section:regularity}
\vspace{-1.5ex}
Under the linear parameterization (\(\phi(x) = x\)), the properties of the ELBO, such as smoothness and convexity, have been previously analyzed by~\citet{domke_provable_2020,titsias_doubly_2014,challis_gaussian_2013}. We generalize these results to nonlinear conditioners.

\vspace{-1.5ex}
\subsection{Technical Assumptions}
\vspace{-1.5ex}
Let \(g_i\left(\vlambda; \rvvu\right)\) be the \(i\)th coordinate of \(\rvvg\left(\vlambda; \rvvu\right)\) and recall that \(\rvu_i\) denote the \(i\)th element of \(\rvvu\).
Establishing convexity and smoothness of the ELBO under nonlinear parameterizations depends on a pair of necessary and sufficient assumptions.
To establish smoothness:
\begin{assumption}\label{assumption:peculiar_smoothness}
The gradient of \(\ell\) under reparameterization, \(g\), satisfies
{\setlength{\belowdisplayskip}{.5ex} \setlength{\belowdisplayshortskip}{.5ex}
 \setlength{\abovedisplayskip}{.5ex} \setlength{\abovedisplayshortskip}{.5ex}
\[
  \abs{ \mathbb{E} g_i\left(\vlambda; \rvvu\right) \rvu_i  \phi''\left(s_i\right) } \leq L_{s}
\]
}%
for every coordinate \(i = 1, \ldots d\), any \(\vlambda \in \Lambda\), and some \(0 < L_s < \infty\).
\end{assumption}
\vspace{-2ex}
Here, \(\phi''\) is the second derivative of \(\phi\).
The next one is required to establish convexity:
\begin{assumption}\label{assumption:peculiar_convexity}
The gradient of \(\ell\) under reparameterization, \(g\), satisfies
{\setlength{\belowdisplayskip}{0.5ex} \setlength{\belowdisplayshortskip}{0.5ex}
 \setlength{\abovedisplayskip}{0.5ex} \setlength{\abovedisplayshortskip}{0.5ex}
\[
  \mathbb{E} g_i\left(\vlambda; \rvvu\right) \rvu_i \geq 0
\]
}%
for every coordinate \(i = 1, \ldots d\).
\end{assumption}
\vspace{-2ex}
Intuitively, these assumption control how much \(\nabla \ell\) and \(\mathcal{T}_{\vlambda}\) \textit{rotate} the randomness \(\rvvu\).
(Notice that the assumptions are closely related to the matrix \(\mathrm{Cov}\left(g\left(\vlambda; \rvvu\right), \rvvu\right)\), the covariance between \(g\) and \(\rvvu\).)
However, the peculiar aspect of these assumptions is that they are not implied by the convexity and smoothness of \(\ell\).
Especially, \cref{assumption:peculiar_smoothness} strongly depends on the internals of \(\nabla \ell\).

\vspace{-1.5ex}
\subsection{Smoothness of the Entropy}\label{section:entropy}
\vspace{-1.5ex}
Under the linear parameterization,~\citet{domke_provable_2020} has previously shown that the entropic regularizer term \(h\) is not smooth.
This fact immediately implies the ELBO is not smooth. However, certain nonlinear conditioners do result in a smooth regularizer.

\begin{theoremEnd}[proof end,category=regularity]{lemma}\label{thm:entropy_smoothness}
  If the diagonal conditioner \(\phi\) is \(L_{h}\)-log-smooth, then the entropic regularizer \(h\left(\vlambda\right)\) is \(L_{h}\)-smooth.
\end{theoremEnd}
\vspace{-1.5ex}
\begin{proofEnd}
  The entropic regularizer is
  \[
    h\left(\vlambda\right) = -\mathrm{H}\left(\varphi\right) - \sum^{d}_{i=1} \log \phi\left(s_{i}\right),
  \]
  and depends only on the diagonal elements \(s_{1}, \ldots, s_{d}\) of \(\mC\).
  The Hessian of \(h\) is then a diagonal matrix, where only the entries that correspond to \(s_{1}, \ldots, s_{d}\) are non-zero.
  The Lipschitz smoothness constant is then the constant \(L_h < \infty\) that satisfies
  \[
     \frac{\partial^2 h\left(\vlambda\right)}{\partial s_{i}^2} = -\frac{\mathrm{d}^2 \log \phi}{ \mathrm{d} s_{i}^2} < L_h
  \]
  for all \(i = 1, \ldots, d\), which is the smoothness constant of \(s_{i} \mapsto \log \phi\left(s_{i}\right)\).
\end{proofEnd}

\begin{example}
  The following diagonal conditioners result in a smooth entropic regularizer:
  \begin{enumerate}
      \vspace{-1ex}
      \setlength\itemsep{-1.ex}
      \item Let \(\phi\left(x\right) = \mathrm{softplus}\left(x\right)\). Then, \(h\) is \(L_h\)-smooth with \(L_h \approx 0.167096\).
      \item Let \(\phi\left(x\right) = \exp\left(x\right)\). Then, \(h\) is \(L_h\)-smooth for arbitrarily small \(L_h\).
  \end{enumerate}
  \vspace{-2ex}
\end{example}

This might initially suggest that diagonal conditioners are a promising way of making the ELBO globally smooth. 
Unfortunately, the properties of the \textit{energy}, \(f\), change unfavorably.

\vspace{-1.5ex}
\subsection{Smoothness of the Energy}
\vspace{-1.5ex}
\paragraph{Inapplicability of Existing Proof Strategy}
Previously, \citet[Theorem 1]{domke_provable_2020} have proven that the energy is smooth when \(\phi\) is linear.
The key step was to use Bessel's inequality based on the observation that the partial derivatives of the reparameterization function \(\mathcal{T}\) form unit bases in expectation.
That is, 
{
\setlength{\belowdisplayskip}{.5ex} \setlength{\belowdisplayshortskip}{.5ex}
\setlength{\abovedisplayskip}{-0.5ex} \setlength{\abovedisplayshortskip}{-0.5ex}
\[
\mathbb{E}
\inner{
  \frac{\partial \mathcal{T}_{\vlambda}\left(\rvvu\right)}{\partial \lambda_i}
}{
  \frac{\partial \mathcal{T}_{\vlambda}\left(\rvvu\right)}{\partial \lambda_j}
} = \mathbb{1}_{i=j},
\]
}%
where \(\mathbb{1}_{i=j}\) is an indicator function that is 1 only when \(i = j\) and 0 otherwise.

Unfortunately, when \(\phi\) is nonlinear, the partial derivatives \(\nicefrac{\partial \mathcal{T}_{\vlambda}\left(\rvvu\right)}{\partial \lambda_i}\) for \(i = 1, \ldots, p\) no longer form unit bases:
while they are still orthogonal in expectation, the \textit{lengths} change nonlinearly depending on \(\vlambda\).
This leaves Bessel's inequality inapplicable.
To circumvent this challenge, we establish a replacement for Bessel's inequality: 

\begin{theoremEnd}[category=smoothnesskey]{lemma}\label{thm:orthogonal_decomp_expectation}
    Let \(\rvmH\) be a \(n \times n\) symmetric random matrix, where it is bounded as
    \(
        {\lVert \rvmH \rVert}_2 \leq L < \infty
    \)
    almost surely.
    Also, let \(\rvmJ\) be an \(m \times n\) random matrix such that \({\lVert \mathbb{E} \rvmJ^{\top} \rvmJ \rVert}_2 < \infty\).
    Then,
{%
\setlength{\belowdisplayskip}{.5ex} \setlength{\belowdisplayshortskip}{.5ex}
\setlength{\abovedisplayskip}{0.5ex} \setlength{\abovedisplayshortskip}{0.5ex}
    \[
      {\lVert \mathbb{E} \rvmJ^{\top} \rvmH \rvmJ \rVert}_2 \leq L {\lVert \mathbb{E} \rvmJ^{\top} \rvmJ \rVert}_2.
    \]
}
\end{theoremEnd}
\vspace{-1.5ex}
\begin{proofEnd}
  By the property of the Rayleigh quotients, for a symmetric matrix \(\mA\), its maximum eigenvalue is given in the variational form
  \[
     \sup_{ \norm{\vx} \leq 1 } \vx^{\top} \mH \vx 
     =
     \sigma_{\mathrm{max}}\left(\mH\right)
     \leq
     \sqrt{ {\sigma_{\mathrm{max}}\left(\mH\right)}^2 }
     =
     \norm{\mH}_{2},
  \]
  where \(\sigma_{\mathrm{max}}\left(\mA\right)\) is the maximal eigenvalue of \(\mA\).
  Notice the relationship with the \(\ell_2\)-operator norm.
  The inequality is strict only if all eigenvalues are negative.

  From the property above,
  \begin{alignat*}{2}
    {\lVert \mathbb{E} \rvmJ^{\top} \rvmH \rvmJ \rVert}_2
    &=
    \sup_{\norm{\vx}_2 \leq 1}  
    \vx^{\top} \left( \mathbb{E}\rvmJ^{\top} \rvmH \rvmJ \right) \vx.
\shortintertext{By reparameterizing as \(\rvvy = \rvmJ \vx\), }
    &=
    \sup_{\norm{\vx}_2 \leq 1}  
    \mathbb{E} {\rvvy}^{\top} \rvmH \rvvy,
\shortintertext{and the property of the \(\ell_2\)-operator norm,}
    &\leq
    \sup_{\norm{\vx}_2 \leq 1}  
    \mathbb{E} \norm{\rvmH}_2 \norm{\rvvy}_2^2
    =
    \sup_{\norm{\vx}_2 \leq 1}  
    \mathbb{E} \norm{\rvmH}_2 \norm{\rvmJ \vx}_2^2.
\shortintertext{From our assumption about the maximal eigenvalue of \(\mH\),}
    &\leq
    L
    \sup_{\norm{\vx}_2 \leq 1}  
    \mathbb{E} \norm{\rvmJ \vx}_2^2,
\shortintertext{denoting the \(\ell_2\) vector norm as a quadratic form as, }
    &=
    L
    \sup_{\norm{\vx}_2 \leq 1}  
    \vx^{\top} \left( \mathbb{E} \rvmJ^{\top} \rvmJ \right) \vx,
\shortintertext{again, by the property of the \(\ell_2\)-operator norm,}
    &\leq
    L
    {\lVert \mathbb{E} \rvmJ^{\top} \rvmJ \rVert}_2
    \sup_{\norm{\vx}_2 \leq 1}  \norm{\vx}_2^2
    \\
    &=
    L {\lVert \mathbb{E} \rvmJ^{\top} \rvmJ \rVert}_2.
  \end{alignat*}
\end{proofEnd}

\DeclareRobustCommand\reddotted{\tikz[baseline=-0.6ex]\draw[very thick,dotted,cherry3] (0,0)--(0.4,0);}
\DeclareRobustCommand\graydotted{\tikz[baseline=-0.6ex]\draw[very thick,dotted,cherry1] (0,0)--(0.4,0);}

\begin{remark}
By assuming that the joint log-likelihood \(\ell\) is smooth and twice-differentiable, we retrieve Theorem 1 of \citet{domke_provable_2020} by setting \(\rvmJ\) to be the Jacobian of \(\mathcal{T}\), and \(\rvmH\) to be the Hessian of \(\ell\) under reparameterization.
\end{remark}
\vspace{.5ex}
\begin{remark}
While our reparameterization function's partial derivatives still form orthogonal bases, they need not be; unlike Bessel's inequality, \cref{thm:orthogonal_decomp_expectation} does not require this.
This implies that \cref{thm:orthogonal_decomp_expectation} is a strategy more general than Bessel's inequality.
\vspace{-1ex}
\end{remark}

Equipped with \cref{thm:orthogonal_decomp_expectation}, we present our main result on smoothness:

\begin{theoremEnd}[category=smoothnessboundedjacobian,all end]{lemma}\label{thm:reparam_jacobian_bounded}
    For a \(1\)-Lipschitz diagonal conditioner \(\phi\), the Jacobian of the location-scale reparameterization function \(\mathcal{T}_{\vlambda}\) satisfies
    \[
        \norm{
        \mathbb{E} \, {\left( \frac{ \partial \mathcal{T}_{\vlambda}\left(\rvvu\right) }{ \partial \vlambda } \right)}^{\top}
        \frac{ \partial \mathcal{T}_{\vlambda}\left(\rvvu\right) }{ \partial \vlambda }
        }_2
        \leq 1.
    \]
\end{theoremEnd}
\begin{proofEnd}
  For notational clarity, we will occasionally represent \(\mathcal{T}_{\vlambda}\) as
  \[
    \mathcal{T}_{\vlambda}\left(\rvvu\right) = \mathcal{T}\left(\vlambda; \rvvu\right),
  \]
  such that \(\mathcal{T}_i\left(\vlambda; \rvvu\right)\) denotes the \(i\)th component of \(\mathcal{T}_{\vlambda}\).

  From the definition of \(\mathcal{T}_{\vlambda}\), it is straightforward to notice that its Jacobian is the concatenation of 3 block matrices 
  \[
     \mJ_{\vm} = \frac{\partial \mathcal{T}_{\vlambda}\left(\vu\right)}{ \partial \vm }, 
     \qquad
     \mJ_{\vs} = \frac{\partial \mathcal{T}_{\vlambda}\left(\vu\right)}{ \partial \vs},\; \text{and} \quad 
     \mJ_{\mL} = \frac{\partial \mathcal{T}_{\vlambda}\left(\vu\right)}{ \partial \mathrm{vec}\left(\mL\right)}.
  \]

  The \(\vm\) block form a deterministic identity matrix
  \[
     \mJ_{\vm}
     =
     \frac{\partial \mathcal{T}_{\vlambda}\left(\vu\right)}{ \partial \vm }
     =
     \boldupright{I},
  \]
  which is shown by \citep[Lemma 4]{domke_provable_2020}.

  The proof strategy is as follows: we will directly compute the squared Jacobian through block matrix multiplication.
  The key is that, after expectation, the resulting matrix becomes diagonal. 
  Then, the \(\ell_2\) operator norm, or maximal eigenvalue, follows trivially as the maximal diagonal element.

  First,
  \begin{align*}
    \mathbb{E} \, {\left(\frac{\partial \mathcal{T}\left(\vlambda; \rvvu\right) }{\partial \vlambda}\right)}^{\top}
    \frac{\partial \mathcal{T}\left(\vlambda;\rvvu\right) }{\partial \vlambda}
    &=
    \mathbb{E}
    {\begin{bmatrix}
      \mJ_{\vm}^{\top} \\
      \rvmJ_{\vs}^{\top} \\
      \rvmJ_{\mL}^{\top} 
    \end{bmatrix}}
    \begin{bmatrix}
      \mJ_{\vm} 
      & \rvmJ_{\vs} 
      & \rvmJ_{\mL}
    \end{bmatrix}
    \\
    &=
    \mathbb{E}
    \begin{bmatrix}
      \mJ_{\vm}^{\top} \mJ_{\vm} &
      \mJ_{\vm}^{\top} \rvmJ_{\vs} &
      \mJ_{\vm}^{\top} \rvmJ_{\mL}
      \\
      \rvmJ_{\vs}^{\top} \mJ_{\vm} &
      \rvmJ_{\vs}^{\top} \rvmJ_{\vs} &
      \rvmJ_{\vs}^{\top} \rvmJ_{\mL}
      \\
      \rvmJ_{\mL}^{\top} \rvmJ_{\vm} &
      \rvmJ_{\mL}^{\top} \rvmJ_{\vs} &
      \rvmJ_{\mL}^{\top} \rvmJ_{\mL}
    \end{bmatrix}
    \\
    &=
    \begin{bmatrix}
      \boldupright{I} &
      \mathbb{E} \rvmJ_{\vs} &
      \mathbb{E} \rvmJ_{\mL}
      \\
      \mathbb{E} \rvmJ_{\vs}^{\top}  &
      \mathbb{E} \rvmJ_{\vs}^{\top} \rvmJ_{\vs} &
      \mathbb{E} \rvmJ_{\vs}^{\top} \rvmJ_{\mL}
      \\
      \mathbb{E} \rvmJ_{\mL}^{\top}  &
      \mathbb{E} \rvmJ_{\mL}^{\top} \rvmJ_{\vs} &
      \mathbb{E} \rvmJ_{\mL}^{\top} \rvmJ_{\mL}
    \end{bmatrix}.
  \end{align*}

  For \(\rvmJ_{\vs}\), the entries are
  \[
    \frac{ \partial {\mathcal{T}}_i\left(\rvvu\right) }{ \partial s_j }
    =
    \phi^{\prime}\left(s_i\right) \rvu_j \mathbb{1}_{i = j},
  \]
  which is a diagonal matrix.
  Thus, by~\cref{assumption:symmetric_standard}, 
  \[
    \mathbb{E}\rvmJ_{\vs} = \boldupright{O},\qquad \mathbb{E} \rvmJ_{\vs}^{\top} \rvmJ_{\vs} = {\mathrm{diag}\left({\vphi^{\prime}\left(\vs\right)}\right)}^2.
  \]
  
  For \(\mJ_{\vs}\), the entries are
  \[
    \frac{ \partial {\mathcal{T}}_i\left(\vlambda; \rvvu\right) }{ \partial L_{jk} }
    =
    u_{k} \mathbb{1}_{i = j}.
  \]
  To gather some intuition, the case of \(d=4\) looks like the following:
  \[
    \rvmJ_{\mL}
    =
    {\setcounter{MaxMatrixCols}{20}
    \begin{bmatrix}
      \rvu_2 & \rvu_3 & \rvu_4 &
             &        &        &
             &        &        &
             &        &         
      \\
             &        &        &
      \rvu_1 & \rvu_3 & \rvu_4 &
             &        &        &
             &        &         
      \\
             &        &        &
             &        &        & 
      \rvu_1 & \rvu_2 & \rvu_4 &
             &        &        
      \\
             &        &        &
             &        &        &
             &        &        &
      \rvu_1 & \rvu_2 & \rvu_3
    \end{bmatrix}.
    }
  \]
  It is crucial to notice that the \(i\)th row does \textit{not} include \(\rvu_i\).
  This means that, the matrix \(\rvmJ_{\vs}^{\top} \rvmJ_{\mL}\) has entries that are either 0, or \(\phi^{\prime}\left(s_i\right) \rvu_i \rvu_j \) for \(i \neq j\), which is \( \mathbb{E} \phi^{\prime}\left(s_i\right) \rvu_i \rvu_j = 0\) by \cref{assumption:symmetric_standard}.
  Therefore, 
  \[
     \mathbb{E} \rvmJ_{\vs}^{\top} \rvmJ_{\mL} = \boldupright{O}.
  \]
  Finally, the elements of \(\rvmJ_{\mL}^{\top} \rvmJ_{\mL}\) are
  \[
    \mathbb{E}
    \sum^d_{i=0}
    \frac{ \partial {\mathcal{T}}_i\left(\vlambda; \rvvu\right) }{ \partial L_{jk} }
    \frac{ \partial {\mathcal{T}}_i\left(\vlambda; \rvvu\right) }{ \partial L_{lm} }
    =
    \mathbb{E}
    \sum^d_{i=0}
    \rvu_{k} 
    \rvu_{m} \mathbb{1}_{i = j} \, \mathbb{1}_{i = l} 
    =
    \mathbb{1}_{j = l} 
    \left(
    \mathbb{E}
    \rvu_{k} 
    \rvu_{m}
    \right)
    =
    \mathbb{1}_{j = l}
    \mathbb{1}_{k = m},
  \]
  where the last equality follows from~\cref{assumption:symmetric_standard}, which forms an identity matrix as
  \[
    \mathbb{E}\rvmJ_{\mL}^{\top} \rvmJ_{\mL} = \boldupright{I}.
  \]
  
  Therefore, the expected-squared Jacobian is now
  \begin{align*}
    \mathbb{E} \, {\left(\frac{\partial \mathcal{T}_{\vlambda}\left(\rvvu\right) }{\partial \vlambda}\right)}^{\top}
    \frac{\partial \mathcal{T}_{\vlambda}\left(\rvvu\right) }{\partial \vlambda}
    &=
    \begin{bmatrix}
      \boldupright{I} &
      \mathbb{E} \rvmJ_{\vs} &
      \mathbb{E} \rvmJ_{\mL}
      \\
      \mathbb{E} \rvmJ_{\vs}^{\top}  &
      \mathbb{E} \rvmJ_{\vs}^{\top} \rvmJ_{\vs} &
      \mathbb{E} \rvmJ_{\vs}^{\top} \rvmJ_{\mL}
      \\
      \mathbb{E} \rvmJ_{\mL}^{\top}  &
      \mathbb{E} \rvmJ_{\mL}^{\top} \rvmJ_{\vs} &
      \mathbb{E} \rvmJ_{\mL}^{\top} \rvmJ_{\mL}
    \end{bmatrix}
    \\
    &=
    \begin{bmatrix}
      \boldupright{I} & &
      \\
       & {\mathrm{diag}\left(\vphi\left(\vs\right)\right)}^2 &
      \\
      & &
      \boldupright{I}
    \end{bmatrix},
  \end{align*}
  which, conveniently, is a diagonal matrix.
  The maximal singular value of a block-diagonal matrix is the maximal singular value of each block.
  And since each block is diagonal with only positive entries, the largest element forms the maximal singular value.
  As we assume that \(\phi\) is 1-Lipchitz, the element of all blocks is lower-bounded by 0 and upper-bounded by 1.
  Therefore, the maximal singular value of the expected-squared Jacobian is bounded by 1.
\end{proofEnd}

\begin{theoremEnd}[category=smoothness]{theorem}\label{thm:energy_smooth}
  Let \(\ell\) be \(L_{\ell}\)-smooth and twice differentiable. 
  Then, the following results hold:
  \begin{enumerate}[label=\textbf{(\roman*)},itemsep=-1ex]
  \vspace{-1.5ex}
    \item If \(\phi\) is linear, the energy \(f\) is \(L_{\ell}\)-smooth.\label{item:linear_phi_energy_smooth}
    \item If \(\phi\) is 1-Lipschitz, the energy \(\ell\) is \((L_{\ell} + L_s)\)-smooth if and only if \cref{assumption:peculiar_smoothness} holds.\label{item:nonlinear_phi_energy_smooth}
  \end{enumerate}
\end{theoremEnd}
\vspace{-1.5ex}
\begin{proofEnd}
  For notational clarity, we will occasionally represent \(\mathcal{T}_{\vlambda}\) as
  \[
    \mathcal{T}_{\vlambda}\left(\rvvu\right) = \mathcal{T}\left(\vlambda; \rvvu\right),
  \]
  such that \(\mathcal{T}_i\left(\vlambda; \rvvu\right)\) denotes the \(i\)th component of \(\mathcal{T}_{\vlambda}\).


  By the Leibniz and chain rule, the Hessian of the energy \(f\) follows as
  \begin{align*}
    \nabla^2 f\left(\vlambda\right)
    &=
    \mathbb{E} \nabla^2_{\vlambda} \ell\left(\mathcal{T}_{\vlambda}\left(\rvvu\right)\right)
    \\
    &=
    \underbrace{
    \mathbb{E}
    {\left(\frac{\partial \mathcal{T}_{\vlambda}\left(\rvvu\right) }{\partial \vlambda}\right)}^{\top}
    \nabla^2 \ell \left(\mathcal{T}_{\vlambda}\left(\rvvu\right)\right)
    \frac{\partial \mathcal{T}_{\vlambda}\left(\rvvu\right) }{\partial \vlambda}
    }_{\triangleq T_{\text{lin}}}
    \;+\;
    \underbrace{
    \mathbb{E}
    \sum^{d}_{i=1}
    \mathrm{D}_i \ell\left( \mathcal{T}_{\vlambda}\left(\rvvu\right) \right)
    \frac{ \partial^2 \mathcal{T}_{i}\left(\vlambda; \rvvu\right) }{ \partial \vlambda^2 }
    }_{\triangleq T_{\text{non}}}.
  \end{align*}
  When \(\mathcal{T}\) is linear with respect to \(\vlambda\), it is clear that we have
  \begin{equation}
    \frac{ \partial^2 \mathcal{T}_{i}\left(\vlambda; \rvvu\right) }{ \partial \vlambda^2 } = \boldupright{0}.
    \label{thm:energy_smooth_second_derivative}
  \end{equation}
  Then, \(T_{\text{non}}\) is zero.
  In contrast, \(T_{\text{lin}}\) appears for both the linear and nonlinear cases.
  Therefore, \(T_{\text{non}}\) fully characterizes the effect of nonlinearity in the reparameterization function.
  
  Now, the triangle inequality yields
  \begin{align*}
    {\lVert \nabla^2 f\left(\vlambda\right) \rVert}_2
    =
    \norm{T_{\text{lin}} + T_{\text{non}}}_2
    \leq
    \norm{T_{\text{lin}}}_2
    +
    \norm{T_{\text{non}}}_2,
  \end{align*}
  where equality is achieved when either term is 0.
  On the contrary, the reverse triangle inequality states that
  \begin{align*}
    \abs{
      \norm{T_{\text{lin}}}_2 - \norm{T_{\text{non}}}_2
    }
    \leq
    {\lVert \nabla^2 f\left(\vlambda\right) \rVert}_2.
  \end{align*}
  This implies that, if either \(T_{\text{lin}}\) or \(T_{\text{non}}\) is unbounded, the Hessian is not bounded.
  Thus, ensuring that \(T_{\text{lin}}\) and \(T_{\text{non}}\) are bounded is sufficient and necessary to establish that \(f\) is smooth.

  \paragraph{Proof of \labelcref*{item:linear_phi_energy_smooth}}
  The bound on the linear part, \(T_{\text{lin}}\), follows from \cref{thm:orthogonal_decomp_expectation} as
  \begin{align*}
    \norm{T_{\text{lin}}}_2
    &=
    \norm{
    \mathbb{E}
    {\left(\frac{\partial \mathcal{T}_{\vlambda}\left(\rvvu\right) }{\partial \vlambda}\right)}^{\top}
    \nabla^2 \ell \left(\mathcal{T}_{\vlambda}\left(\rvvu\right)\right)
    \frac{\partial \mathcal{T}_{\vlambda}\left(\rvvu\right) }{\partial \vlambda}
    }_2
    \\
    &\leq
    L_{\ell}
    \norm{
    \mathbb{E}
    {\left(\frac{\partial \mathcal{T}_{\vlambda}\left(\rvvu\right) }{\partial \vlambda}\right)}^{\top}
    \frac{\partial \mathcal{T}_{\vlambda}\left(\rvvu\right) }{\partial \vlambda}
    }_2,
\shortintertext{and from the 1-Lipschitzness of \(\phi\), \cref{thm:reparam_jacobian_bounded} yields}
    &\leq
    L_{\ell}.
  \end{align*}

  When \(\phi\) is linear, it immediately follows from \cref{thm:energy_smooth_second_derivative} that
  \[
    {\lVert \nabla^2 f\left(\vlambda\right) \rVert}_2
    =
    \norm{
      T_{\text{lin}}
    }_2
    \leq
    L_{\ell},
  \]
  which is tight as shown by \citet[Theorem 6]{domke_provable_2020}.

  \paragraph{Proof of \labelcref*{item:nonlinear_phi_energy_smooth}}
  For the nonlinear part \(T_{\text{non}}\), we use the fact that \({\mathcal{T}}_{i}\left(\vlambda; \rvvu\right)\) is given as
  \[
    {\mathcal{T}}_{i}\left(\vlambda; \rvvu\right) = m_i + \phi\left(s_i\right) \rvu_i +\sum_{j \neq i} L_{ij} \rvu_j.
  \]
  The second derivative of \(\mathcal{T}_{i}\) is clearly non-zero only for the nonlinear part involving \(s_1, \ldots, s_d\).
  Thus, \(T_{\text{non}}\) follows as
  \begin{align*}
    T_{\text{non}} 
    &=
    \mathbb{E}
    \sum^{d}_{i=1}
    \mathrm{D}_i \ell\left( \mathcal{T}_{\vlambda}\left(\rvvu\right) \right)
    \frac{ \partial^2 {\mathcal{T}}_{i}\left(\vlambda; \rvvu\right) }{ \partial \vlambda^2 }
    \\
    &=
    \mathbb{E}
    \sum^{d}_{i=1}
    g_i\left(\vlambda; \rvvu\right)
    \frac{ \partial^2 {\mathcal{T}}_{i}\left(\vlambda; \rvvu\right) }{ \partial \vlambda^2 }
    \\
    &=
    \begin{bmatrix}
      \cdot & \cdot & \cdot \\
      \cdot & 
      \mathbb{E}
      \sum^{d}_{i=1} 
      g_{i}\left(\vlambda; \rvvu\right)
      \frac{ \partial^2 \mathcal{T}_{i}\left(\vlambda; \rvvu\right) }{ \partial \vs^2 } & \cdot \\
      \cdot & \cdot &  \cdot
    \end{bmatrix}.
  \end{align*}
  Furthermore, the second-order derivatives with respect to \(s_1, \ldots, s_d\) are given as
  \[ 
    \frac{
      \partial^2 \mathcal{T}_i\left(\vlambda; \rvvu\right)
    }{
      \partial s_j^2
    } 
    =
    \mathbb{1}_{i = j} \phi''\left(s_j\right).
  \]
  Considering this, the only non-zero block of \(T_{\mathrm{non}}\) forms a diagonal matrix as
  \begin{align*}
    \mathbb{E} \sum^{d}_{i=1} g_{i}\left(\vlambda; \rvvu\right) \frac{ \partial^2 {\mathcal{T}}_{i}\left(\vlambda; \rvvu\right) }{ \partial \vs }
    &=
    \begin{bmatrix}
       \mathbb{E} g_1\left(\vlambda; \rvvu\right)
       \frac{ \partial^2 {\mathcal{T}}_{1}\left(\vlambda; \rvvu\right) }{ \partial s_1^2 }
       & & \\
       &
       \ddots
       & \\
       & & 
       \mathbb{E} g_d\left(\vlambda; \rvvu\right)
       \frac{ \partial^2 {\mathcal{T}}_{d}\left(\vlambda; \rvvu\right) }{ \partial s_d^2 }
    \end{bmatrix}
    \\
    &=
    \begin{bmatrix}
       \mathbb{E} g_1\left(\vlambda; \rvvu\right) \, \phi^{\prime \prime}\left(s_1\right) \rvu_1
       & & \\
       &
       \ddots
       & \\
       & & 
       \mathbb{E} g_d\left(\vlambda; \rvvu\right) \, \phi^{\prime \prime}\left(s_d\right) \rvu_d
    \end{bmatrix}
  \end{align*}
  This implies that the only non-zero entries of \(T_{\mathrm{non}}\) lie on its diagonal.
  Since the \(\ell_2\) norm of a diagonal matrix is the value of the maximal diagonal element,
  \[
    \norm{T_{\mathrm{non}}}_2
    \leq
    \max_{i=1, \ldots, d} \mathbb{E} g_i\left(\vlambda; \rvvu\right) \, \phi^{\prime \prime}\left(s_i\right) \rvu_i
    \leq
    L_s,
  \]
  where \(L_s\) is finite constant if \cref{assumption:peculiar_smoothness} holds.
  On the contrary, if a finite \(L_s\) does not exist, \(\norm{T_{\mathrm{non}}}_2\) cannot be bounded.
  Therefore, the energy is smooth if and only if \cref{assumption:peculiar_smoothness} holds.
  When it does, the energy \(f\) is \(L_f + L_s\) smooth.
\end{proofEnd}


Combined with \cref{thm:entropy_smoothness}, this directly implies that the overall ELBO is smooth.
\begin{corollary}[Smoothness of the ELBO]\label{thm:elbo_smooth}
  Let \(\ell\) be \(L_{\ell}\)-smooth and~\cref{assumption:peculiar_smoothness} hold.
  Furthermore, let the diagonal conditioner be 1-Lipschitz continuous, and \(L_{\phi}\)-log-smooth.
  Then, the ELBO is \((L_{\ell} + L_s + L_{\phi})\)-smooth.
\end{corollary}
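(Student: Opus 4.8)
The plan is to decompose the ELBO into its two additive pieces and combine the smoothness guarantees already established for each. Recall from \cref{section:taxonomy} that $F\left(\vlambda\right) = f\left(\vlambda\right) + h\left(\vlambda\right)$, where $f$ is the energy and $h$ is the entropic regularizer. Since gradient-Lipschitz (smoothness) constants are additive under summation, it suffices to bound the smoothness of $f$ and $h$ separately and then sum the constants.

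First I would handle the energy term. The hypotheses of the corollary include that $\ell$ is $L_{\ell}$-smooth, that $\phi$ is 1-Lipschitz, and that \cref{assumption:peculiar_smoothness} holds. These are exactly the conditions needed to invoke part \labelcref{item:nonlinear_phi_energy_smooth} of \cref{thm:energy_smooth}, which yields that $f$ is $\left(L_{\ell} + L_s\right)$-smooth. Next I would handle the entropic regularizer: since $\phi$ is assumed $L_{\phi}$-log-smooth, \cref{thm:entropy_smoothness} gives that $h$ is $L_{\phi}$-smooth. (The lemma states this constant as $L_h$; under the corollary's naming convention it is $L_{\phi}$, i.e.\ the smoothness constant of $s_i \mapsto \log\phi\left(s_i\right)$.)

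Finally I would combine the two bounds. Because both $f$ and $h$ are twice differentiable under the stated assumptions, $\nabla^2 F\left(\vlambda\right) = \nabla^2 f\left(\vlambda\right) + \nabla^2 h\left(\vlambda\right)$, and the triangle inequality on the operator norm gives $\norm{\nabla^2 F\left(\vlambda\right)}_2 \leq \norm{\nabla^2 f\left(\vlambda\right)}_2 + \norm{\nabla^2 h\left(\vlambda\right)}_2 \leq \left(L_{\ell} + L_s\right) + L_{\phi}$, uniformly over $\vlambda \in \Lambda$. Equivalently, at the level of gradients, the sum of an $A$-Lipschitz and a $B$-Lipschitz gradient map has an $\left(A+B\right)$-Lipschitz gradient. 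Hence $F$ is $\left(L_{\ell} + L_s + L_{\phi}\right)$-smooth, as claimed.

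There is essentially no genuine obstacle here: the corollary is a direct assembly of \cref{thm:energy_smooth} and \cref{thm:entropy_smoothness} via the elementary additivity of smoothness constants. The only point requiring a moment's care is the bookkeeping of constants—matching the $L_h$ appearing in the entropy lemma with the $L_{\phi}$ stated in the corollary, and checking that the 1-Lipschitz and log-smoothness hypotheses on $\phi$ are simultaneously available so that both component lemmas apply to the same $\phi$.
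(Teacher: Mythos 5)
Your proposal is correct and matches the paper's route exactly: the paper states this corollary without a separate proof precisely because it is the direct combination of \cref{thm:energy_smooth}\labelcref{item:nonlinear_phi_energy_smooth} (giving \(f\) as \(\left(L_{\ell}+L_s\right)\)-smooth under \cref{assumption:peculiar_smoothness} and 1-Lipschitz \(\phi\)) with \cref{thm:entropy_smoothness} (giving \(h\) as \(L_{\phi}\)-smooth), summed by additivity of gradient-Lipschitz constants. Your bookkeeping identifying the lemma's \(L_h\) with the corollary's \(L_{\phi}\) is also the intended reading.
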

\vspace{-1ex}
The increase of the smoothness constant implies that we need to use a smaller stepsize to guarantee convergence when using a nonlinear \(\phi\).
Furthermore, even on simple \(L\)-smooth examples \cref{assumption:peculiar_smoothness} may not hold:


\begin{theoremEnd}[category=examplesmoothness]{example}\label{thm:smoothness_example}
  Let \(\ell\left(\vz\right) = \left(\nicefrac{1}{2}\right) \vz^{\top} \mA \vz \) and the diagonal conditioner be \(\phi\left(x\right) = \mathrm{softplus}\left(x\right)\).
  Then,
  \begin{enumerate}[label=\textbf{(\roman*)},itemsep=-1ex]
      \vspace{-1ex}
      \item if \(\mA\) is dense and the variational family is the mean-field family or 
      \item if \(\mA\) is diagonal and the variational family is the Cholesky family,
      \vspace{-1ex}
  \end{enumerate}
  \cref{assumption:peculiar_smoothness} holds with \(L_s \approx 0.26034 \left( \max_{i =1, \ldots, d} {A_{ii}} \right)\).
  \begin{enumerate}[itemsep=-1ex]
      \vspace{-1ex}
      \item[\textbf{(iii)}] If \(\mA\) is dense but the Cholesky family is used, \cref{assumption:peculiar_smoothness} does not hold.
  \end{enumerate}
\end{theoremEnd}
\vspace{-1ex}
\begin{proofEnd}
  Since the gradient is 
  \[
    \nabla \ell\left(\vz\right) = \mA \vz,
  \]
  combined with reparameterization, we have
  \begin{align*}
    g\left(\vlambda; \rvvu\right)
    = \mA \left( \mC \rvvu + \vm \right)
  \end{align*}
  Then, for each coordinate \(i = 1, \ldots, d\), we have
  \begin{align*}
    \mathbb{E}g_i\left(\vlambda; \rvvu\right) \rvu_i \phi''(s_i)
    &= \mathbb{E} \left( \sum_{j} \sum_{k \leq j} A_{ij} C_{jk} \rvu_{k}  + \sum_{j} A_{ij} m_{j} \right) \rvu_i  \phi''(s_i) 
    \\
    &= \sum_{j} \sum_{k \leq j} A_{ij} C_{jk} \mathbb{E} \rvu_{k} \rvu_i \phi''(s_i) + \sum_{j} A_{ij} m_{j} \mathbb{E} \rvu_i \phi''(s_i),
\shortintertext{and from \cref{assumption:symmetric_standard},}
    &= \phi''(s_i) \sum_{j} \sum_{k \leq j} A_{ij} C_{jk} \mathbb{1}_{k=i}
    \\
    &= \phi''(s_i) \sum_{j} A_{ij} C_{ji}.
  \end{align*}
  Furthermore, the diagonal of \(\mC\) involves \(\phi\) such that
  \begin{align*}
    \mathbb{E}g_i\left(\vlambda; \rvvu\right) \rvu_i \phi''(s_i)
    &=
    \underbrace{A_{ii} \phi\left(s_i\right) \phi''\left(s_i\right)}_{T_{\mathrm{diag}}} + \underbrace{ \sum_{j < i} A_{ij} C_{ji} \phi''(s_i) }_{T_{\mathrm{off}}}.
  \end{align*}
  
  For the softplus function, we have 
  \[
    0  < \phi''\left(s\right) < 1 
  \]
  for any finite \(s\), and we have
  \[
    \sup_{s} \phi\left(s\right) \phi^{\prime \prime}\left(s\right)
    \approx
    0.26034,
  \]
  where the supremum was numerically approximated.
  Then, it is clear that \(T_{\mathrm{diag}}\) is finite as long as the diagonals of \(\mA\) are finite.
  Furthermore, we have the following:
  \begin{enumerate}[label=\textbf{(\roman*)}]
      \item If \(\mA\) is diagonal, then \(T_{\mathrm{off}}\) is 0.
      \item If \(\mA\) is dense but \(\mC\) is diagonal due to the use of the mean-field family, \(T_{\mathrm{off}}\) is again 0.
      \item However, when both \(\mA\) and \(\mC\) are not diagonal, \(T_{\mathrm{off}}\)  can be made arbitrarily large.
  \end{enumerate}

\end{proofEnd}


%
\begin{wrapfigure}[18]{r}{0.45\textwidth}
  \vspace{-7.5ex}
  \centering
  \begin{tikzpicture}[
    declare function={c2       = 0.894427;},
    declare function={c3       = 0.447214;},
    declare function={sigma11  = 1;},
    declare function={sigma12  = -2;},
    declare function={sigma22  = 5;},
    declare function={logZ     = 1.8378770664093453;},
    declare function={ent      = 1.4189385332046727;},
    declare function={trace(\c) = sigma11*\c*\c + 2*sigma12*c2*\c + sigma22*(c2*c2 + c3*c3) ;},
    declare function={negenergy(\c) = (trace(\c) + 2*logZ)/2;},
    declare function={h(\c) = -2*ent - (ln(\c + 1e-15) + ln(c3)) ;},
]
    \begin{axis}[
        axis lines = left,
        xmin=-5, xmax=7,
        ymin=0, ymax=12,
        axis line style = thick,
        every tick/.style={black,thick}, 
        height = 3.8cm,
        width  = 4.5cm,
        xlabel = {\(s_1\)},
        ylabel = {\(F\left(s_1\right)\)},
        xlabel near ticks,
        ylabel near ticks,
        axis on top,
        legend style={
          legend image post style={scale=0.7},
          at={(0.5,1.8)},
          anchor=north,
          legend cell align=left,
          line width=0.8pt,
          draw=none 
        },
      ]
      \addplot [domain=-30:30, samples=100, very thick, color=cherry1] {
        negenergy(ln(1 + exp(x))) + h(ln(1 + exp(x)))
      };
      \addlegendentry{\footnotesize ELBO with \(\phi(x) = \mathrm{softplus}(x)\)}

      \addplot [domain=0:30,samples=1000, very thick, color=cherry3] {
        negenergy(x) + h(x)
      };
      \addlegendentry{\footnotesize ELBO with \(\phi(x) = x\)}

      \addplot [domain=-30:30,samples=100, dotted, very thick, color=cherry3] {
        0.5*(x - 2.23607)^2
      };
      \addlegendentry{\footnotesize Lower-Bounding Quadratic}

      \addplot [domain=-30:30,samples=100, dotted, very thick, color=cherry1] {
        7.296087425226789 + (-1.00857741744953*(x+5))
      };
      \addlegendentry{\footnotesize Tangent Line at \(s_1 = -5\)}
    \end{axis}
  \end{tikzpicture} 
  \vspace{-1ex}
  \caption{ 
    \textbf{
      Optimization landscape resulting from different \(\phi\) on a strongly-convex \(\ell\).
    }%
    \(\ell\) is the counter-example of \cref{thm:gradient_covariance_sign} \cref{thm:gradient_covariance_sign_item2}.
    \(\phi(x) = x\) preserves strong convexity as shown by the lower-bounding quadratic (red dotted line \reddotted).
    \(\phi = \operatorname{softplus}\) violates the first-order condition of convexity (black dotted line \graydotted). 
  }\label{fig:not_strongly_convex}
\end{wrapfigure}
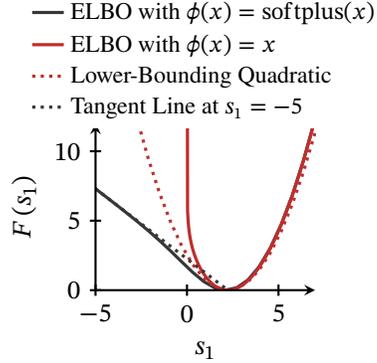
\cref{thm:smoothness_example} illustrates that establishing the smoothness of the energy becomes non-trivial under nonlinear parameterizations.
Even when smoothness does hold, the increased smoothness constant implies that BBVI will be less robust to initialization and stepsizes.
Furthermore, in the next section, we will show a much more grave problem: nonlinear parameterizations may affect the convergence \textit{rate}.



\vspace{-1ex}
\subsection{Convexity of the Energy}
\vspace{-1ex}
The convexity of the ELBO under linear parameterizations has first been established by \citet[Proposition 1]{titsias_doubly_2014} and \citet[Theorem 9]{domke_provable_2020}.
In particular,~\citet{domke_provable_2020} show that, when \(\phi\) is linear, if \(\ell\) is \(\mu\)-strongly convex, the energy is also \(\mu\)-strongly convex.
However, when using a nonlinear \(\phi\) with a co-domain of \(\mathbb{R}_+\), which is the whole point of using a nonlinear conditioner, strong convexity of \(\ell\) \textit{never} transfers to \(f\).

\begin{theoremEnd}[all end,category=convexitylemma]{lemma}\label{thm:innerproduct_z_t}
  Let \(\ell\) be convex.
  Then, for a convex nonlinear \(\phi\), the inequality
  \[
    \inner{
      \nabla_{\vlambda} f\left(\vlambda\right)
    }{
      \vlambda
      -
      \vlambda^{\prime}
    }
    \leq
    \mathbb{E}
    \inner{
      \nabla g\left(\vlambda; \rvvu \right)
    }{
      \mathcal{T}_{\vlambda}\left(\rvvu\right)
      -
      \mathcal{T}_{\vlambda^{\prime}}\left(\rvvu\right)
    }
  \]
  holds for all \(\vlambda \in \Lambda\) if and only if \cref{assumption:peculiar_convexity} holds.
  For the linear parameterization, the inequality becomes equality.
\end{theoremEnd}
\begin{proofEnd}
  First, notice that the left-hand side is
  \begin{align}
    &\inner{ \nabla_{\vlambda} f\left(\vlambda\right) }{ \vlambda - \vlambda^{\prime} }
    \nonumber
    =
    \sum_{i = 1}^p
    \inner{
      \nabla_{\vlambda} \mathbb{E} \ell\left(\mathcal{T}_{\vlambda}\left(\rvvu\right)\right)
    }{
      \vlambda - \vlambda^{\prime}
    }
    \nonumber
    =
    \mathbb{E}
    \sum_{i = 1}^p
    \inner{
        { \left( \frac{ \partial \mathcal{T}_{\vlambda}\left(\rvvu\right) }{\partial \vlambda_i} \right) }
        g\left( \vlambda; \rvvu\right)
    }{
      \vlambda - \vlambda^{\prime}
    }.
    \nonumber
\shortintertext{By restricting us to the location-scale family, we then get}
    &=
    \mathbb{E}\,
    \Bigg(
    \underbrace{
    \sum_{i}
      {\left( \frac{ \partial \mathcal{T}_{\vlambda}\left(\rvvu\right) }{\partial m_i} \right) } \, g\left( \vlambda; \rvvu\right)
    {\left(
      m_i - m^{\prime}_i
    \right)}
    }_{\text{convexity with respect to \(\vm\)}}
    +
    \underbrace{
    \sum_{ij}
      { \left( \frac{ \partial \mathcal{T}_{\vlambda}\left(\rvvu\right) }{\partial L_{ij}} \right) } \, g\left( \vlambda; \rvvu\right)
      \left( L_{ij} - L^{\prime}_{ij} \right)
    }_{\text{convexity with respect to \(\mL\)}}
    \nonumber
    \\
    &\qquad+
    \underbrace{
    \sum_{i}
      \mathbb{E} \left( \frac{ \partial \mathcal{T}_{\vlambda}\left(\rvvu\right) }{\partial s_{i}} \right)  \, g\left( \vlambda; \rvvu\right)
    {\left(
      s_{i} - s^{\prime}_{i}
    \right)}
    }_{\text{convexity with respect to \(\vs\)}}
    \Bigg),
    \nonumber
\shortintertext{and plugging the derivatives of the reparameterization function,}
    &=
    \mathbb{E}\,
    \Bigg(
    \sum_{i}
      g_i\left( \vlambda; \rvvu\right) \,
    {\left(
      m_i - m^{\prime}_i
    \right)}
    +
    \sum_{i} \sum_{j < i}
      \rvu_j  \,
      g_i\left( \vlambda; \rvvu\right) \,
      \left( L_{ij} - L^{\prime}_{ij} \right)
    +
    \sum_{i}
      \phi^{\prime}\left(s_{i}\right)
      \rvu_i  \, g_i\left( \vlambda; \rvvu\right) \,
    \left(
      s_{i} - s^{\prime}_{i}
    \right)
    \Bigg).
    \nonumber
  \end{align}

  On the other hand, the right-hand side follows as
  \begin{align*}
    &\mathbb{E}\,
      \big\langle{
        \nabla g\left( \vlambda; \rvvu\right),\;
        \mathcal{T}_{\vlambda}\left(\rvvu\right) - \mathcal{T}_{\vlambda^{\prime}}\left(\rvvu\right)
      \big\rangle} 
      \nonumber
      \\
      &\;=
    \mathbb{E}\,
    \bigg(
      \big\langle{
        g\left( \vlambda; \rvvu\right),\;
      }{
        \vm - \vm^{\prime}
      \big\rangle} \,
      +
      \big\langle{
        g\left( \vlambda; \rvvu\right),\;
      }{
        \left( \mL - \mL^{\prime} \right) \rvvu
      \big\rangle}
      +
      \big\langle{
        g\left( \vlambda; \rvvu\right),\;
        \left( \mPhi\left( \vs \right) - \mPhi\left( \vs^{\prime} \right) \right) \rvvu
      \big\rangle}
    \bigg)
    \\
    &\;=\mathbb{E}\,
    \Bigg(
    \sum_{i}
      g_i\left( \vlambda; \rvvu\right) \,
    {\left(
      m_i - m^{\prime}_i
    \right)}
    +
    \sum_{i}
    \sum_{j < i}
      \rvu_j 
      g_i\left( \vlambda; \rvvu\right) \,
      \left( L_{ij} - L^{\prime}_{ij} \right)
    +
    \sum_{i}
      g_i\left( \vlambda; \rvvu\right) \,
      \rvu_i
      \left(
        \phi\left( s_{i} \right) - \phi\left( s^{\prime}_{i} \right)
      \right)
    \Bigg).
  \end{align*}
  The convexity with respect to the \(\vm\) and \(\mL\) is clear from the first two terms; they are equal.
  The statement is now up to the last term.
  That is, the statement holds if  
  \begin{equation}
    \mathbb{E}
    \sum_{i}
    g_i\left( \vlambda; \rvvu\right) \, \rvu_i \, 
    \phi^{\prime}\left(s_{i}\right)
    \left(
      s_{i} - s^{\prime}_{i}
    \right)
    \;\leq\;
    \mathbb{E}
    \sum_{i}
    g_i\left( \vlambda; \rvvu\right) \,
    \rvu_i
    \left(
      \phi\left( s_{i} \right) - \phi\left( s^{\prime}_{i} \right)
    \right).
    \label{eq:innerproduct_z_to_t}
  \end{equation}
  For this, we will show that \cref{assumption:peculiar_convexity} is both necessary and sufficient.
  
  \paragraph{Proof of sufficiency}
  \cref{eq:innerproduct_z_to_t} holds if 
  \[
    \mathbb{E} g_i\left( \vlambda; \rvvu\right) \, \rvu_i \geq 0
  \]
  for all \(i = 1, \ldots, d\), which is non other than \cref{assumption:peculiar_convexity}.
  
  \paragraph{Proof of necessity}
   Suppose that the inequality 
   \[
    \inner{
      \nabla_{\vlambda} f\left(\vlambda\right)
    }{
      \vlambda
      -
      \vlambda^{\prime}
    }
    \leq
    \mathbb{E}
    \inner{
      g\left(\vlambda; \rvvu \right)
    }{
      \mathcal{T}_{\vlambda}\left(\rvvu\right)
      -
      \mathcal{T}_{\vlambda^{\prime}}\left(\rvvu\right)
    }
  \]
  holds for all \(\vlambda \in \Lambda\), implying
  \[
    \sum_i \mathbb{E} \, \rvu_i g_i\left( \vlambda; \rvvu\right) \phi'(s_{i}) (s_{i} - s'_{i}) 
    \le
    \sum_i \mathbb{E} \, \rvu _i g_i\left( \vlambda; \rvvu\right) \big( (\phi(s_{i})-\phi(s'_{i}) \big).
  \]
  
  For any \(\vlambda\), we are free to set any \(\vlambda' \in \Lambda\) and check whether we can retrieve \cref{assumption:peculiar_convexity} for this specific \(\vlambda\).
  Now, for each axis $i$, set $s_{j}' = s_{j}$ for all $j\neq i$, then 
  \[
    \mathbb{E} \, \rvu_i \, g_i\left( \vlambda; \rvvu\right) \phi'(s_{i}) (s_{i} - s'_{i}) 
    \le
    \mathbb{E} \, \rvu _i \, g_i\left( \vlambda; \rvvu\right) \left(\phi(s_{i}) - \phi(s'_{i}) \right).
  \]
  Since \(\phi\) is assumed to be convex such that
  \begin{align*}
     \phi^{\prime}\left(s_{i}\right)
     \left(
        s_{i} - s^{\prime}_{i}
     \right)
     \leq
     \phi\left(s_{i}\right) - \phi\left(s_{i}^{\prime}\right).
  \end{align*}
  it follows that
  \begin{align}
    \mathbb{E} \, \rvu_i \, g_i\left( \vlambda; \rvvu\right)
    \geq 0.
  \end{align}
  Therefore, for any \(\vlambda \in \Lambda\) it must be that \cref{assumption:peculiar_convexity} holds.
\end{proofEnd}

\begin{theoremEnd}[all end,category=convexityanalysis]{lemma}\label{thm:L_does_not_exist}
  For any function $f \in \mathrm{C}^1(\mathbb{R}, \mathbb{R}_+)$ , there is no constant \(0 < L < \infty\) such that
  \[
    \abs{ f(x) - f(y) } \geq L \abs{x - y}.
  \]
\end{theoremEnd}
\begin{proofEnd}

  




Suppose for the sake of contradiction that such $L > 0$ exists.
Letting $y \to x$ gives $\abs{f^\prime(x)} \geq L$ for all $x \in \mathbb{R}$.
For each $x$, either $f^\prime(x) \leq -L$ or $f^\prime(x) \geq L$ holds.
We discuss two cases based on the value of $f^\prime(0)$.

If $f^\prime(0) \geq L$, we claim that $f^\prime(x) \geq L$ for all $x \in \mathbb{R}$.
Otherwise, $f^\prime(x) < L$ for some $x$ implies $f^\prime(x) \leq - L$.
By the intermediate value theorem ($f^\prime$ is continuous), there exists a point $y$ between $0$ and $x$ that attains the value $f^\prime(y) = 0$, which is a contradiction.

Now that $f^\prime(x) \geq L > 0$ for all $x$, $f$ is an increasing function.
For any $x < 0$, we have
\begin{align*}
f(x) & = f(x) - f(0) + f(0) \\
& = - \abs{f(x) - f(0)} + f(0) \\
& \leq - L \abs{x} + f(0).
\end{align*}
Here, we can plug \(x^{\prime} = - \frac{f(0)}{L}\) as
\[
f(x^{\prime})
= - L \abs{- \frac{f(0)}{L}} + f(0)
= - \abs{f(0) } + f(0)
\leq 0,
\]
which implies that \(f(x^{\prime}) \notin \mathbb{R}_+\), which is a contradiction.

Now we discuss the second case $f^\prime(0) \leq -L$.
By a similar argument, $f^\prime(x) \leq -L$ for all $x \in \mathbb{R}$.
Thus, $f$ is a decreasing function.
For any $x > 0$, we have
\begin{align*}
f(x) & = f(x) - f(0) + f(0) \\
& = - \abs{f(x) - f(0)} + f(0) \\
& \leq - L x + f(0).
\end{align*}
Picking $x^\prime = \frac{f(0)}{L}$ results in $f(x^\prime) \notin \mathbb{R}_+$, which is a contradiction.
\end{proofEnd}

\begin{theoremEnd}[category=convexity]{theorem}\label{thm:energy_convex}
  Let \(\ell\) be \(\mu\)-strongly convex. Then, we have the following:
  \begin{enumerate}[label=\textbf{(\roman*)},itemsep=-1ex]
  \vspace{-1.5ex}
      \item If \(\phi\) is linear, the energy \(f\) is \(\mu\)-strongly convex.
      \item If \(\phi\) is convex, the energy \(f\) is convex if and only if \cref{assumption:peculiar_convexity} holds.
      \item If \(\phi\) is such that \(\phi \in \mathrm{C}^1\left(\mathbb{R},\mathbb{R}_+\right)\), the energy \(f\) is not strongly convex.
  \end{enumerate}
\end{theoremEnd}
\vspace{-1ex}
\begin{proofEnd}
  The special case \textbf{(i)} is proven by \citet[Theorem 9]{domke_provable_2020}.
  We focus on the general statement \textbf{(ii)}.

  If \(\ell\) is \(\mu\)-strongly convex, the inequality 
  \begin{align}
    \ell\left(\vz\right) - \ell\left(\vz'\right) 
    \geq
    \inner{ \nabla \ell\left(\vz'\right) }{ \vz -\vz^{\prime} }
    +
    \frac{\mu}{2} \norm{\vz - \vz^{\prime}}_2^2
    \label{eq:log_concave_strong_convexity_inequality}
  \end{align}
  holds, where the general convex case is obtained as a special case with \(\mu = 0\).
  The goal is to relate this to the (\(\mu\)-strong-)convexity of the energy with respect to the variational parameters given by 
  \begin{align*}
    f\left( \vlambda\right)
    -
    f\left( \vlambda' \right)
    \geq
    \inner{ \nabla_{\vlambda} f\left( \vlambda^{\prime} \right) }{ \vlambda - \vlambda^{\prime} }
    +
    \frac{\mu}{2} \norm{ \vlambda - \vlambda^{\prime} }_2^2.
    \nonumber
  \end{align*}

  \paragraph{Proof of (ii)}
  Plugging the reparameterized latent variables to \cref{eq:log_concave_strong_convexity_inequality} and taking the expectation, we have
  \begin{alignat*}{2}
    &
    &
    \mathbb{E} \ell\left( \mathcal{T}_{\vlambda}\left(\rvvu\right) \right) - \mathbb{E} \ell\left( \mathcal{T}_{\vlambda^{\prime}}\left(\rvvu\right) \right) 
    &\geq
    \mathbb{E} \inner{ \nabla \ell\left( \mathcal{T}_{\vlambda^{\prime}}\left(\rvvu\right)\right) }{ \mathcal{T}_{\vlambda}\left(\rvvu\right) - \mathcal{T}_{\vlambda^{\prime}}\left(\rvvu\right) }
    +
    \frac{\mu}{2} \mathbb{E} \norm{ \mathcal{T}_{\vlambda}\left(\rvvu\right) - \mathcal{T}_{\vlambda^{\prime}}\left(\rvvu\right) }_2^2
    \\
    &\Leftrightarrow\qquad&
    f\left(\vlambda\right) - f\left(\vlambda^{\prime}\right)
    &\geq
    \mathbb{E} \inner{ \nabla \ell\left( \mathcal{T}_{\vlambda^{\prime}}\left(\rvvu\right)\right) }{ \mathcal{T}_{\vlambda}\left(\rvvu\right) - \mathcal{T}_{\vlambda^{\prime}}\left(\rvvu\right) }
    +
    \frac{\mu}{2} \mathbb{E} \norm{ \mathcal{T}_{\vlambda}\left(\rvvu\right) - \mathcal{T}_{\vlambda^{\prime}}\left(\rvvu\right) }_2^2
    \\
    &\Leftrightarrow\qquad&
    f\left(\vlambda\right) - f\left(\vlambda^{\prime}\right)
    &\geq
    \mathbb{E} \inner{ \nabla g\left( \vlambda^{\prime}; \rvvu\right) }{ \mathcal{T}_{\vlambda}\left(\rvvu\right) - \mathcal{T}_{\vlambda^{\prime}}\left(\rvvu\right) }
    +
    \frac{\mu}{2} \mathbb{E} \norm{ \mathcal{T}_{\vlambda}\left(\rvvu\right) - \mathcal{T}_{\vlambda^{\prime}}\left(\rvvu\right) }_2^2
  \end{alignat*}
  Thus, the energy is convex if and only if 
  \begin{align*}
    \mathbb{E}
    \inner{
      g \left( \vlambda; \rvvu \right)
    }{
      \mathcal{T}_{\vlambda}\left(\rvvu\right)
      -
      \mathcal{T}_{\vlambda^{\prime}}\left(\rvvu\right)
    }
    \geq
    \inner{
      \nabla f\left(\vlambda\right)
    }{
      \vlambda
      -
      \vlambda^{\prime}
    }
  \end{align*}
  holds.
  This is established by \cref{thm:innerproduct_z_t}.
  
  \paragraph{Proof of (iii)}
  We now prove that, under the nonlinear parameterization, the energy cannot be strongly convex.
  When the energy is convex, it is also strongly convex if and only if
  \[
    \frac{\mu}{2} \mathbb{E} \norm{ \mathcal{T}_{\vlambda}\left(\rvvu\right) - \mathcal{T}_{\vlambda^{\prime}}\left(\rvvu\right) }_2^2
    \geq
    \frac{\mu}{2} \norm{ \vlambda - \vlambda^{\prime} }_2^2.
  \]

  From the proof of \citet[Lemma 5]{domke_provable_2020}, it follows that
  \begin{align*}
    \mathbb{E} \norm{\mathcal{T}_{\vlambda}\left(\rvvu\right) - \mathcal{T}_{\vlambda^{\prime}}\left(\rvvu\right)}_2^2
    &=
    \norm{\mC - \mC^{\prime}}_{\mathrm{F}}^2 + \norm{ \vm - \vm^{\prime} }_2^2.
  \end{align*}
  Furthermore, under nonlinear parameterizations,
  \begin{align}
    &\norm{\mC - \mC^{\prime}}_{\mathrm{F}}^2 + \norm{ \vm - \vm^{\prime} }_2^2
    \nonumber
    \\
    &\;=
    \norm{
      \left( \mD_{\phi}\left(\vs\right) - \mD_{\phi}\left(\vs^{\prime}\right) \right)
      -
      \left( \mL - \mL^{\prime}\right)
    }_{\mathrm{F}}^2 + \norm{ \vm - \vm^{\prime} }_2^2,
    \nonumber
\shortintertext{expanding the quadratic,}
    &\;=
    \norm{
      \mD_{\phi}\left(\vs\right) - \mD_{\phi}\left(\vs^{\prime}\right)
    }_{\mathrm{F}}^2
    +
    \norm{
      \mL - \mL^{\prime}
    }_{\mathrm{F}}^2
    -
    2 \inner{ \mD_{\phi}\left(\vs\right) - \mD_{\phi}\left(\vs^{\prime}\right) }{ \mL - \mL^{\prime} }_{\mathrm{F}} 
    + \norm{ \vm - \vm^{\prime} }_2^2,
    \nonumber
    \shortintertext{and since \(\mD_{\phi}\left(\vs\right)\) and \(\mL\) reside in different sub-spaces, they are orthogonal. Thus,}
    &\;=
    \norm{
      \mD_{\phi}\left(\vs\right) - \mD_{\phi}\left(\vs^{\prime}\right)
    }_{\mathrm{F}}^2
    +
    \norm{
      \mL - \mL^{\prime}
    }_{\mathrm{F}}^2
    +
    \norm{ \vm - \vm^{\prime} }_2^2
    \nonumber
    \\
    &\;=
    \norm{
      \phi\left(\vs\right) - \phi\left(\vs^{\prime}\right)
    }_{2}^2
    +
    \norm{
      \mL - \mL^{\prime}
    }_{\mathrm{F}}^2 + \norm{ \vm - \vm^{\prime} }_2^2.
    \label{eq:thm_energy_log_concave_eq1}
  \end{align}
  For the energy term to be strongly convex, \cref{eq:thm_energy_log_concave_eq1} must be bounded \textit{below} by \(\norm{\vlambda - \vlambda^{\prime}}_2^2\).
  Evidently, this implies that a necessary and sufficient condition is that
  \[
    \abs{
      \phi\left(s_{ii}\right) - \phi\left(s_{ii}^{\prime}\right)
    }
    \geq
    L \, \abs{ s_{ii} - s_{ii}^{\prime} }
  \]
  by some constant \(0 < L < \infty\).
  Notice that the direction of the inequality is reversed from the Lipschitz condition.
  Unfortunately, there is no such continuous and differentiable function \(\phi : \mathbb{R} \rightarrow \mathbb{R}_+\), as established by \cref{thm:L_does_not_exist}.
  Thus, for any diagonal conditioner \(\phi \in \mathrm{C}^1\left(\mathbb{R}, \mathbb{R}_+\right)\), the energy cannot be strongly convex.
\end{proofEnd}


\vspace{-1ex}

The following proposition provides some conditions for \cref{assumption:peculiar_convexity} to hold or not hold.

\begin{theoremEnd}[category=convexitylemma]{proposition}\label{thm:gradient_covariance_sign}
  We have the following:
  \begin{enumerate}[label=\textbf{(\roman*)},itemsep=-1ex]
    \vspace{-1.5ex}
    \item If \(\ell\) is convex, then for the mean-field family, \cref{assumption:peculiar_convexity} holds.
    \item For the Cholesky family, there exists a convex \(\ell\) where {\cref{assumption:peculiar_convexity}} does not hold.\label{thm:gradient_covariance_sign_item2}
  \end{enumerate}
\end{theoremEnd}
\vspace{-2ex}
\begin{proofEnd}
  For \textbf{(i)}, the key property is the monotonicity of the gradient.
  \paragraph{Proof of (i)}
  For the mean-field family, recall that 
  \[
    C_{ii} = \phi\left(s_i\right).
  \]
  Also, observe that 
  \[  
    \mC \rvvu + \vm = (C_{11} \rvu_1 + m_1, \ldots,  C_{dd} \rvu_d + m_d). 
  \]
  By the property of convex functions, \(\nabla \ell\) is monotone such that
  \[
    \inner{ \nabla \ell\left(\vz\right) - \nabla \ell\left(\vz^{\prime}\right) }{ \vz - \vz^{\prime} } \geq 0.
  \]
  Now, by setting $\rvvz = \mC \rvvu + \vm$ and $\rvvz' = \mC \rvvu + \vm - C_{ii} \rvu_i \boldupright{e}_i$, we obtain
  \[
    \inner{ \nabla \ell\left( \mC \rvvu + \vm \right) - \nabla \ell\left( \mC \rvvu + \vm - C_{ii} \rvu_i \boldupright{e}_i \right) }{ C_{ii} \rvu_i \boldupright{e}_i } \geq 0
  \]
  for every \(i = 1, \ldots, d\).

  For the mean-field family, \(\mC \rvvu + \vm - C_{ii} \rvu_i \boldupright{e}_i\) is now independent of \(\rvu_i\).
  Thus, 
  \begin{align*}
    \mathbb{E}\, C_{ii} \rvu_i \mathrm{D}_i \ell\left(\mC \rvvu + \vm\right)
    &\geq 
    \mathbb{E}\, C_{ii} \rvu_i \, \mathrm{D}_i \ell\left(\mC \rvvu + \vm - \boldupright{e}_i C_{ii} \rvu_i \right)
    \\
    &=
     C_{ii} \left( \mathbb{E}\, \rvu_i \right) \left( \mathbb{E} \, \mathrm{D}_i f \left(\mC \rvvu + \vm - \boldupright{e}_i C_{ii} \rvu_i \right) \right)
    \\
    &=
    0,
  \end{align*}
  where \(\mathrm{D}_i f\) denotes the \(i\)th axis of \(\nabla f\).
  Since \(C_{ii} > 0\) by design, 
  \[
    \mathbb{E}\, C_{ii} \rvu_i \mathrm{D}_i f \left(\mC \rvvu + \vm\right) > 0
    \quad\Leftrightarrow\quad
    \mathbb{E}\, \rvu_i \mathrm{D}_i f \left(\mC \rvvu + \vm\right) > 0,
  \]
  which is \cref{assumption:peculiar_convexity}.

  \paragraph{Proof of (ii)}
  We provide an example that proves the statement.
  Let \(\ell(\vz) = \frac{1}{2} \vz^{\top} \mA \vz\).
  Then, 
  \begin{align*}
    g\left(\vlambda; \rvvu\right) 
    = \ell\left(\mathcal{T}_{\vlambda}\left(\rvvu\right)\right)
    = \mA \left( \mC \rvvu + \vm \right)
    = \mA \mC \rvvu + \mA \vm.
  \end{align*}
  Suppose that we choose \(\vlambda\) such that 
  \[
    \mC =  \begin{bmatrix}
      1 & 0 \\
      1 & 1
    \end{bmatrix}
  \]
  and \(\vm = \boldupright{0}\).
  Also, setting
  \[
    \mA = \begin{bmatrix}
       1 & -2 \\
      -2 & 5
    \end{bmatrix},
  \]
  we get a strongly convex function \(\ell\).
  Then,
  \begin{align*}
    g\left(\vlambda; \rvvu\right) 
    = \mA \mC \rvvu
    =
    \begin{bmatrix}
       1 & -2 \\
      -2 & 5
    \end{bmatrix}
    \begin{bmatrix}
      1 & 0 \\
      1 & 1
    \end{bmatrix}
    \begin{bmatrix}
      \rvu_1 \\ \rvu_2
    \end{bmatrix}
    = 
    \begin{bmatrix}
      -1 & -2 \\
      3 & 5
    \end{bmatrix}
    \begin{bmatrix}
      \rvu_1 \\ \rvu_2
    \end{bmatrix}
    =
    \begin{bmatrix}
      -\rvu_1 - 2 \rvu_2 \\
      3 \rvu_1 + 5 \rvu_2
    \end{bmatrix}
  \end{align*}
  Finally, we have
  \begin{align*}
    \mathbb{E} g_1\left(\vlambda; \rvvu\right) \rvu_1
    &=
    \mathbb{E}\left( -\rvu_1 - 2 \rvu_2 \right) \rvu_1
    =
    -1
    < 0,
  \end{align*}
  which violates \cref{assumption:peculiar_convexity}.
\end{proofEnd}

For any continuous, differentiable nonlinear conditioner that maps only to non-negative reals, the strong convexity of \(\ell\) does lead to a strongly-convex ELBO.
This phenomenon is visualized in \cref{fig:not_strongly_convex}.
The loss surface becomes flat near the optimal scale parameter.
This problem becomes more noticeable as the optimal scale becomes smaller.

  

\vspace{-1ex}
\paragraph{Nonlinear conditioners are suboptimal.}
As the dataset grows, Bayesian posteriors are known to ``contract'' as characterized by the Bernstein-von Mises theorem~\citep{vaart_asymptotic_1998}.
That is, the posterior variance becomes close to 0.
This behavior also applies to misspecified variational posteriors as shown by~\citet{wang_variational_2019}.
Thus, for large datasets, nonlinear conditioners mostly operate in the regime where they are 
suboptimal (locally less strongly convex).
But linear conditioners result in a non-smooth entropy~\citep{domke_provable_2020}.
This dilemma originally motivated~\citeauthor{domke_provable_2020} to consider proximal SGD, which we analyze in \cref{section:proximal_sgd}.

\vspace{-1.5ex}
\section{Convergence Analysis of Black-Box Variational Inference}
\vspace{-1.5ex}
\subsection{Black-Box Variational Inference}\label{section:bbviconvergence}
\vspace{-1ex}
BBVI with SGD repeats the steps:
{\setlength{\belowdisplayskip}{0ex} \setlength{\belowdisplayshortskip}{0ex}
  \setlength{\abovedisplayskip}{-.5ex} \setlength{\abovedisplayshortskip}{-.5ex}
\begin{equation}
  \vlambda_{t+1} = \vlambda_{t} - \gamma_t \left( \widehat{\nabla f}\left(\vlambda_t\right) + \nabla h\left(\vlambda_t\right) \right),
  \quad\text{where}\quad
  \widehat{\nabla f}\left(\vlambda_t\right) = \frac{1}{M} {\sum^{M}_{m=1}} \nabla_{\vlambda} \ell\left(\mathcal{T}_{\vlambda}\left(\rvvu_m\right)\right)
  \label{eq:sgd}
\end{equation}
}%
with \(\rvvu_m \sim \varphi\) is the \(M\)-sample reparameterization gradient estimator and \(\gamma_t\) is the stepsize.
(See~\citealp{kucukelbir_automatic_2017} for algorithmic details.)

With our results in \cref{section:regularity} and the results of~\citet{khaled_better_2023, kim_practical_2023}, we obtain a convergence guarantee.
To apply the result of \citet{kim_practical_2023}, which bounds the gradient variance, we require an additional assumption.

\vspace{0.5ex}
\begin{assumption}\label{assumption:quadratic_growth}
  The negative log-likelihood  \(\ell_{\mathrm{like}}(\vz) \triangleq - \log p\left(\vx \mid \vz\right)\) is \(\mu\)-quadratically growing for all \(\vz \in \mathbb{R}^d\) such that
  {\setlength{\belowdisplayskip}{.0ex} \setlength{\belowdisplayshortskip}{.0ex}
    \setlength{\abovedisplayskip}{-.5ex} \setlength{\abovedisplayshortskip}{-.5ex}
  \[
    \frac{\mu}{2} {\lVert \vz - \bar{\vz}_{\mathrm{like}} \rVert}_2^2 \leq \ell_{\mathrm{like}}\left(\vz\right) - \ell^*_{\mathrm{like}},
  \]%
  where \(\bar{\vz}_{\mathrm{like}}\) is the projection of \(\vz\) to the set of minimizers of \(\ell_{\mathrm{like}}\), and \(\ell_{\mathrm{like}}^* = \inf_{\vz \in \mathbb{R}^d} \ell_{\mathrm{like}}\left(\vz\right)\).
  }%
\end{assumption}
\vspace{-1ex}
This assumption is weaker than assuming that the likelihood satisfies the Polyak-\L{}ojasiewicz inequality~\citep{karimi_linear_2016}.

\begin{theoremEnd}[all end, category=sgdbbvi]{theorem}\label{thm:nonconvex_sgd_convergence}
  Let the variational family satisfy \cref{assumption:variation_family}, the likelihood satisfy~\cref{assumption:quadratic_growth}, and the assumptions of \cref{thm:elbo_smooth} hold such that the ELBO, \(F\), is \(L_F\)-smooth with \(L_F = L_{\ell} + L_{\phi} + L_{s}\).
  Then, if the stepsize satisfy \(\gamma < {1}/{L_F}\), the iterates of BBVI with SGD and the \(M\)-sample reparameterization gradient estimator satisfy
  {\setlength{\belowdisplayskip}{.0ex} \setlength{\belowdisplayshortskip}{.0ex}
    \setlength{\abovedisplayskip}{1.0ex} \setlength{\abovedisplayshortskip}{1.0ex}
  \begin{align*}
    \min_{0 \leq t \leq T-1} \mathbb{E} \norm{\nabla F\left(\vlambda_t\right) }_2^2
    &\leq
    \gamma \frac{2 L_F L_{\ell} \kappa \, C\left(d, \varphi\right) }{M} \, \left(
      \lVert{ \bar{\vz}_{\mathrm{joint}} - \bar{\vz}_{\mathrm{like}} \rVert}_2^2
      + 2 \left(F^* - f^*_{\mathrm{L}}\right)
    \right)
    \\
    &\quad+  \frac{2}{\gamma T}  {\left(1 + \gamma^2 \frac{4 L_{F} L_{\ell} \, \kappa }{ M} \, C\left(d, \varphi\right) \right)}^T \left( F\left(\vlambda_0\right) - F^* \right).
  \end{align*}
  }%
  where 
  \begin{alignat*}{2}
    \bar{\vz}_{\mathrm{joint}} &= \mathrm{proj}_{\ell}\left(\vz\right) &&\quad\text{is the projection of \(\vz\) onto set of minimizers of \(\ell\)} \\
    \bar{\vz}_{\mathrm{like}} &= \mathrm{proj}_{\ell_{\mathrm{like}}}\left(\vz\right) &&\quad\text{is the projection of \(\vz\) onto set of minimizers of \(\ell_{\mathrm{like}}\),} \\
    \kappa   &= \nicefrac{L_{\ell}}{\mu} &&\quad\text{is the condition number,}  \\
    F^*      &= \inf_{\vlambda \in \Lambda} F\left(\vlambda\right),  \\
    {\ell}^*_{\mathrm{like}} &= \inf_{\vlambda \in \mathbb{R}^d} {\ell}_{\mathrm{like}}\left(\vz\right),  \\
    C(d, \varphi) &= d + k_{\varphi} && \quad\text{for the Cholesky nonlinear,} \\
    C(d, \varphi) &= 2 k_{\varphi} \sqrt{d} + 1 && \quad\text{for the mean-field nonlinear,}  \\
    M             &  && \quad\text{is the number of Monte Carlo samples.} 
  \end{alignat*}
\end{theoremEnd}
\begin{proofEnd}
  \citet[Theorem 2]{khaled_better_2023} show that, if the objective function \(F\) is \(L_F\)-smooth and the stochastic gradients satisfy the \(ABC\) given as
  \[
    \mathbb{E} \lVert{ \widehat{\nabla F}\left(\vlambda\right) \rVert}_2^2
    \leq
    A \left(F\left(\vlambda\right) - F^*\right) + B \norm{\nabla F}_2^2 + C
  \]
  for some \(0 < A, B, C < \infty\), SGD guarantees 
  \[
    \min_{0 \leq t \leq T-1} \mathbb{E} \norm{\nabla F\left(\vlambda_t\right) }_2^2
    \leq
    L_{F} C \gamma + \frac{2 {\left(1 + L_{F} \gamma^2 A\right)}^T }{ \gamma T } \left( F\left(\vlambda_0\right) - F^* \right).
  \]

  Under the conditions of \cref{thm:elbo_smooth}, \(F\) is \(L_F\)-smooth with \(L_F = L_{\ell} + L_s + L_{\phi}\).
  Furthermore, under \cref{assumption:quadratic_growth},~\cite{kim_practical_2023} show that the Monte Carlo gradient estimates satisfy
  \begin{align*}
    \mathbb{E} \lVert{ \widehat{\nabla F}\left(\vlambda\right) \rVert}_2^2
    &\leq
    \frac{ 4 L_{\ell}^2 C\left(d, \varphi\right) }{\mu M}  \left(F\left(\vlambda\right) - F^*\right)
    + B \norm{\nabla F}_2^2
    \\
    &\quad+ \frac{ 2 L_{\ell}^2 C\left(d, \varphi\right) }{\mu M} {\lVert \bar{\vz}_{\mathrm{joint}} - \bar{\vz}_{\mathrm{like}} \rVert}_2^2  + \frac{ 4 L_{\ell}^2 C\left(d, \varphi\right) }{\mu M} \left(F^* - \ell_{\mathrm{like}}^*\right),
  \end{align*}

  This means that the \(ABC\) condition is satisfied with constants
  \[
    A =  \frac{4 L_{\ell}^2}{\mu M} C\left(d, \varphi\right), \qquad B = 1, \qquad C = \frac{2 L_{\ell}^2}{\mu M} C\left(d, \varphi\right) \lVert{ \bar{\vz}_{\mathrm{joint}} - \bar{\vz}_{\mathrm{like}} \rVert}_2^2 + \frac{4 L_{\ell}^2}{\mu M} C\left(d, \varphi\right) \left(F^* - {\ell}^*_{\mathrm{like}}\right).
  \]
  
  Plugging these constants in, we obtain
  \begin{align*}
    \min_{0 \leq t \leq T-1} \mathbb{E} \norm{\nabla f\left(\vlambda_t\right) }_2^2
    &\leq
    \gamma \frac{2 L_F L_{\ell}^2 C\left(d, \varphi\right) }{\mu M} \left(
      \lVert{ \bar{\vz}_{\text{joint}} - \bar{\vz}_{\mathrm{like}} \rVert}_2^2
      + 2 \left(F^* - \ell^*_{\mathrm{like}}\right)
    \right)
    \\
    &\quad+  \frac{2}{\gamma T}  {\left(1 + \gamma^2 L_{F} \frac{4 L_{\ell}^2}{\mu M} C\left(d, \varphi\right) \right)}^T \left( F\left(\vlambda_0\right) - F^* \right).
  \end{align*}
  Substituting the condition number yields the stated result.
\end{proofEnd}

\begin{theoremEnd}[category=sgdbbvi]{theorem}\label{thm:nonconvex_sgd}
Let \cref{assumption:variation_family} hold, the likelihood satisfy~\cref{assumption:quadratic_growth}, and the assumptions of \cref{thm:elbo_smooth} hold such that the ELBO \(F\) is \(L_F\)-smooth with \(L_F = L_{\ell} + L_{\phi} + L_{s}\).
  Then, the iterates generated by BBVI through \cref{eq:sgd} and the \(M\)-sample reparameterization gradient include an \(\epsilon\)-stationary point such that \( \min_{0 \leq t \leq T-1} \mathbb{E}\norm{ \nabla F\left(\vlambda_t\right) }_2 \leq \epsilon \) for any \(\epsilon > 0\) if 
  {\setlength{\belowdisplayskip}{.5ex} \setlength{\belowdisplayshortskip}{.5ex}
    \setlength{\abovedisplayskip}{.5ex} \setlength{\abovedisplayshortskip}{.5ex}
  \[
   T \geq
   \mathcal{O}\left(
   \frac{ {\left( F\left(\vlambda_0\right) - F^* \right)}^2  L_F L_{\ell}^2 C\left(d, k_{\varphi}\right)}{\mu M \epsilon^4}
   \right)
  \]
  }%
  for some fixed stepsize \(\gamma\), where \(C\left(d, \varphi\right) = d + k_{\varphi}\) for the Cholesky family and \(C\left(d, \varphi\right) = 2 k_{\varphi} \sqrt{d} + 1\) for the mean-field family.
\end{theoremEnd}
\vspace{-1.5ex}
\begin{proofEnd}
  As a corollary to \cref{thm:nonconvex_sgd_convergence}, \citet[Corollary 1]{khaled_better_2023} show that, for an \(L_F\)-smooth objective function \(F\), a gradient estimator satisfying the \textit{ABC} condition, an \(\epsilon\)-stationary point can be encountered if
  \[
    \gamma = {\min\left(\frac{1}{\sqrt{L_F A T}}, \frac{1}{L_F B}, \frac{\epsilon}{2 L_F C} \right),\qquad}
    T \geq \frac{12 \left( F\left(\vlambda_0\right) - F^* \right)  L_F}{\epsilon^2} \max\left( B, \frac{12 \left( F\left(\vlambda_0\right) - F^* \right) A}{\epsilon^2}, \frac{2 C}{\epsilon^2} \right).
  \]

  Under \cref{assumption:quadratic_growth},~\cite{kim_practical_2023} show that the Monte Carlo gradient estimates satisfy
  \begin{align*}
    \mathbb{E} \lVert{ \widehat{\nabla F}\left(\vlambda\right) \rVert}_2^2
    &\leq
    \frac{ 4 L_{\ell}^2 C\left(d, \varphi\right) }{\mu M}  \left(F\left(\vlambda\right) - F^*\right)
    + B \norm{\nabla F}_2^2
    \\
    &\quad+ \frac{ 2 L_{\ell}^2 C\left(d, \varphi\right) }{\mu M} {\lVert \bar{\vz}_{\mathrm{joint}} - \bar{\vz}_{\mathrm{like}} \rVert}_2^2  + \frac{ 4 L_{\ell}^2 C\left(d, \varphi\right) }{\mu M} \left(F^* - \ell_{\mathrm{like}}^*\right),
  \end{align*}
  This means that the \(ABC\) condition is satisfied with constants
  \[
    A =  \frac{4 L_f^2}{\mu M} C\left(d, \varphi\right), \qquad B = 1, \qquad C = \frac{2 L_f^2}{\mu M} C\left(d, \varphi\right) \left( \lVert{ \bar{\vz}_{\mathrm{joint}} - \bar{\vz}_{\mathrm{like}} \rVert}_2^2 + 2 \left(F^* - f^*_{\mathrm{L}}\right) \right).
  \]
  where 
  \begin{alignat*}{2}
    \bar{\vz}_{\mathrm{joint}} &= \mathrm{proj}_{\ell}\left(\vz\right) &&\quad\text{is the projection of \(\vz\) onto set of minimizers of \(\ell\)} \\
    \bar{\vz}_{\mathrm{like}} &= \mathrm{proj}_{\ell_{\mathrm{like}}}\left(\vz\right) &&\quad\text{is the projection of \(\vz\) onto set of minimizers of \(\ell_{\mathrm{like}}\),} \\
    F^*      &= \inf_{\vlambda \in \Lambda} F\left(\vlambda\right),  \\
    \ell^*_{\mathrm{like}} &= \inf_{\vlambda \in \mathbb{R}^d} \ell_{\mathrm{like}}\left(\vz\right),  \\
    C(d, \varphi) &= d + k_{\varphi} && \quad\text{for the Cholesky family,} \\
    C(d, \varphi) &= 2 k_{\varphi} \sqrt{d} + 1 && \quad\text{for the mean-field family,}  \\
    M             &  && \quad\text{is the number of Monte Carlo samples.} 
  \end{alignat*}

  Plugging these constants in, we obtain
  \begin{align*}
    T
    &\geq
  \frac{12 \left( F\left(\vlambda_0\right) - F^* \right)  L_F}{\epsilon^2} 
  \max\left(1, 
    \frac{48 \left( F\left(\vlambda_0\right) - F^* \right) L_{\ell}^2 C\left(d, \varphi\right)}{\mu M \epsilon^2},
    \frac{8 L_{\ell}^2 C\left(d, \varphi\right) \left( {\lVert \bar{\vz}_{\mathrm{joint}} - \bar{\vz}_{\mathrm{like}} \rVert}_2^2 + \left(F^* - \ell^*_{\mathrm{like}}\right) \right) }{ \mu M \epsilon^2}
  \right)
  \\
  &=
  \mathcal{O}\left(
  \frac{ {\left( F\left(\vlambda_0\right) - F^* \right)}^2  L_F L_{\ell}^2 C\left(d\right)}{\mu M \epsilon^4} 
  \right),
  \end{align*}
  where we omitted the dependence on \(k_{\varphi}\) and the minimizers of \(\ell\) and \(\ell_{\mathrm{like}}\).
\end{proofEnd}

\vspace{0.5ex}
\begin{remark}
  Finding an \(\epsilon\)-stationary point of the ELBO has an iteration complexity of \(\mathcal{O}\left(d L_{\ell}^2 \kappa M^{-1} \epsilon^{-4} \right)\) for the Cholesky family and {\(\mathcal{O}\left(\sqrt{d} L_{\ell}^2 \kappa M^{-1}\epsilon^{-4} \right)\)} for the mean-field family.
\end{remark}



\vspace{-1.5ex}
\subsection{Black-Box Variational Inference with Proximal SGD}\label{section:proximal_sgd}
\vspace{-1ex}
\paragraph{Proximal SGD}
For a composite objective \(F = f + h\), proximal SGD repeats the steps:
{\setlength{\belowdisplayskip}{.0ex} \setlength{\belowdisplayshortskip}{.0ex}
 \setlength{\abovedisplayskip}{1.0ex} \setlength{\abovedisplayshortskip}{1.0ex}
\begin{align}
  \vlambda_{t+1}
  = \mathrm{prox}_{\gamma_t, h}\left( \vlambda_{t} - \gamma_t \widehat{\nabla f}\left(\vlambda_t\right) \right) 
  =  
   \argmin_{\vlambda \in \Lambda}\;
   \left[\,
   \inner{\widehat{\nabla f}\left(\vlambda_t\right)}{\vlambda}
   +
   h\left(\vlambda\right)
   +
   \frac{1}{2 \gamma_t}
   \norm{
     \vlambda - \vlambda_t
   }_2^2
   \,\right],
   \label{eq:proxsgd}
\end{align}
}%
where \(\mathrm{prox}\) is known as the \textit{proximal} operator and \(\gamma_{1}, \ldots, \gamma_T\) is a stepsize schedule.

In the context of VI, proximal SGD has previously been considered by~\citet{khan_kullbackleibler_2015, khan_faster_2016, altosaar_proximity_2018,diao_forwardbackward_2023}.
Their overall focus has been on developing alternative algorithms by generalizing \(\norm{\vlambda - \vlambda^*}\) to other metrics.
In contrast,~\citet{domke_provable_2020} considered proximal SGD with the regular Euclidean metric \(\norm{\vlambda - \vlambda^*}_2\) for overcoming the non-smoothness of \(h\) under linear parameterizations.
Here, we prove the convergence of this scheme and show that it retrieves the fastest known convergence rates in stochastic first-order optimization.

\vspace{-1ex}
\paragraph{Proximal Operator for BBVI}
In our context, \(h\) is the entropy of \(q_{\vlambda}\) in the location-scale family.
For this,~\citet{domke_provable_2020} show that the the proximal update for \(s_1, \ldots, s_d\), is  
{\setlength{\belowdisplayskip}{.0ex} \setlength{\belowdisplayshortskip}{.0ex}
 \setlength{\abovedisplayskip}{.5ex} \setlength{\abovedisplayshortskip}{.5ex}
\begin{equation*}
  \mathrm{prox}_{\gamma_t, h}\left( s_{i} \right) 
   =
   s_{i} + \frac{1}{2} 
   \left( 
     \sqrt{
       s_{i}^2 + 4 \gamma_{t}
     }
     - s_{i}
   \right).
   \label{eq:proximal_scale}
\end{equation*}
}%
For other parameters, the proximal operator is the regular gradient descent update in~\cref{eq:sgd}.



\vspace{-1ex}
\paragraph{Gradient Variance Bound}
We first establish a bound on the gradient variance.
In ERM, contemporary strategies do this by exploiting the finite sum structure of the objective (\cref{section:theoretical_challenges}).
Here, we establish a variance bound for RP estimator that does not rely on the finite sum assumption.

\begin{theoremEnd}[proof end, category=proximalsgdbbvi]{lemma}[\textbf{Convex Expected Smoothness}]\label{thm:es_bregman}
  Let \(\ell\) be \(L_{\ell}\)-smooth and \(\mu\)-strongly convex with the variational family satisfying \cref{assumption:variation_family} with the linear parameterization.
  Then,
  {
\setlength{\belowdisplayskip}{1.ex} \setlength{\belowdisplayshortskip}{1.ex}
\setlength{\abovedisplayskip}{.5ex} \setlength{\abovedisplayshortskip}{.5ex}
\[
  \mathbb{E} \norm{ \nabla_{\vlambda} f\left(\vlambda; \rvvu\right) - \nabla_{\vlambda^{\prime}} f\left(\vlambda^{\prime}; \rvvu\right) }_2^2
  \leq
  2 L_{\ell} \kappa \, C\left(d, \varphi\right) \, \mathrm{B}_{f}\left(\vlambda, \vlambda^{\prime}\right)
\]
}%
  holds, where \(
  \mathrm{B}_{f}\left(\vlambda, \vlambda^{\prime}\right)
  \triangleq
  f\left(\vlambda\right)
  -
  f\left(\vlambda^{\prime}\right)
  -
  \inner{
    \nabla f\left(\vlambda^{\prime}\right)
  }{
    \vlambda - \vlambda^{\prime}
  }
  \) is the Bregman divergence, \(\kappa = L_{\ell} / \mu\) is the condition number, \(C\left(d, \varphi\right) = d + k_{\varphi}\) for the Cholesky family, and \(C\left(d, \varphi\right) = 2 k_{\varphi} \sqrt{d} + 1\) for the mean-field family.
\end{theoremEnd}
\vspace{-2ex}
\begin{proofEnd}
  First, we have
  \begin{align*}
    \mathbb{E} \norm{
      \nabla_{\vlambda} f\left(\vlambda; \rvvu\right) - \nabla_{\vlambda^{\prime}} f\left(\vlambda^{\prime}; \rvvu\right)
    }_2^2
    &=
    \mathbb{E} \norm{
      \nabla_{\vlambda} \ell\left(\mathcal{T}_{\vlambda}\left(\rvvu\right)\right) - 
      \nabla_{\vlambda'} \ell\left(\mathcal{T}_{\vlambda'}\left(\rvvu\right)\right)
    }_2^2
    \\
    &=
    \mathbb{E} \norm{
      \frac{\partial \mathcal{T}_{\vlambda}\left(\rvvu\right)}{\partial \vlambda} 
      g\left(\vlambda, \rvvu\right) 
      - 
      \frac{\partial \mathcal{T}_{\vlambda'}\left(\rvvu\right)}{\partial \vlambda'} 
      g\left(\vlambda', \rvvu\right)
    }_2^2.
\shortintertext{For the linear parameterization, the Jacobian of \(\mathcal{T}_{\vlambda}\) does not depend on \(\vlambda\). Therefore,}  
    &=
    \mathbb{E} \norm{
      \frac{\partial \mathcal{T}_{\vlambda}\left(\rvvu\right)}{\partial \vlambda} 
      \left(
      g\left(\vlambda, \rvvu\right) 
      -
      g\left(\vlambda', \rvvu\right) 
      \right)
    }_2^2
\shortintertext{and \cref{thm:jacobian_reparam_inner_original} yields}
    &=
    J_{\mathcal{T}}\left(\rvvu\right)
    \mathbb{E} \norm{
      g\left(\vlambda, \rvvu\right) 
      - 
      g\left(\vlambda', \rvvu\right)
    }_2^2,
  \end{align*}
  where 
  \begin{alignat*}{2}
      J_{\mathcal{T}}\left(\vu\right) &= 1 + \norm{\vu}_2^2
      &&\qquad\text{for the Cholesky family and}
      \\
      J_{\mathcal{T}}\left(\vu\right) &= 1 + {\lVert \mU^2 \rVert}_{\mathrm{F}}
      &&\qquad\text{for the mean-field family.}
  \end{alignat*}
  
  From now on, we apply the strategy of~\citet[Theorem 3]{domke_provable_2019} for resolving the randomness \(\rvvu\).
  That is,
  \begin{align*}
    \mathbb{E}
    J_{\mathcal{T}}\left(\rvvu\right)
    \norm{
      g\left(\vlambda, \rvvu\right) 
      - 
      g\left(\vlambda', \rvvu\right)
    }_2^2
    \nonumber
    &=
    J_{\mathcal{T}}\left(\rvvu\right)
    \norm{
      \nabla \ell\left(\mathcal{T}_{\vlambda}\left(\rvvu\right)\right) 
      - \nabla \ell\left(\mathcal{T}_{\vlambda'}\left(\rvvu\right)\right)
    }_2^2
    \nonumber
\shortintertext{from the \(L_{\ell}\)-smoothness of \(f\),}
    &\leq
    L_{\ell}^2 \, \mathbb{E} J_{\mathcal{T}}\left(\rvvu\right) \norm{ \mathcal{T}_{\vlambda}\left(\rvvu\right) - \mathcal{T}_{\vlambda^{\prime}}\left(\rvvu\right) }_2^2,
    \nonumber
\shortintertext{and applying \cref{thm:u_normsquared_marginalization},}
    &\leq
    L_{\ell}^2 \, C\left(d,\varphi\right) \, \norm{ \vlambda - \vlambda^{\prime} }_2^2
    \nonumber
  \end{align*}

  The last step follows the approach of \citet{kim_practical_2023}, where we convert the quadratic bound into a bound involving the energy.
  Recall that the \(\mu\)-strongly convexity of \(\ell\) implies
  \begin{align}
    \frac{\mu}{2}\norm{ \vz^{\prime} - \vz }_2^2 \leq \ell\left(\vz\right) - \ell\left(\vz^{\prime}\right) - \inner{ \nabla \ell\left(\vz^{\prime}\right) }{\vz - \vz^{\prime}}.
  \end{align}
  From \cref{thm:difference_t_z_identity}, we have
  \begin{align*}
    L_{\ell}^2 \, C\left(d,\varphi\right) \, \norm{ \vlambda - \vlambda^{\prime} }_2^2
    &=
    L_f^2 \, C\left(d, \varphi\right) \mathbb{E} \norm{ \mathcal{T}_{\vlambda}\left(\rvvu\right) - \mathcal{T}_{\vlambda^{\prime}}\left(\rvvu\right) }_2^2,
    \nonumber
\shortintertext{and by \(\mu\)-strongly convexity,}
    &\leq
    \frac{2 L_{\ell}^2}{\mu} \, C\left(d, \varphi\right) \mathbb{E} \big( \ell\left( \mathcal{T}_{\vlambda}\left(\rvvu\right)\right) - \ell\left( \mathcal{T}_{\vlambda'}\left(\rvvu\right)\right) - \inner{ \nabla \ell\left(\mathcal{T}_{\vlambda^{\prime}}\left(\rvvu\right)\right) }{\mathcal{T}_{\vlambda}\left(\rvvu\right) - \mathcal{T}_{\vlambda^{\prime}}\left(\rvvu\right)} \big)
    \nonumber
    \\
    &=
    \frac{2 L_{\ell}^2}{\mu} \, C\left(d, \varphi\right) 
    \mathbb{E} \big( 
      f\left(\vlambda; \rvvu\right) - f\left(\vlambda'; \rvvu\right) 
      -  
      \inner{ 
        g\left(\vlambda'; \rvvu\right) 
      }{ 
        \mathcal{T}_{\vlambda}\left(\rvvu\right) - \mathcal{T}_{\vlambda'}\left(\rvvu\right)
      } 
    \big)
    \nonumber
    \\
    &=
    \frac{2 L_{\ell}^2}{\mu} \, C\left(d, \varphi\right) 
    \big( 
      f\left(\vlambda\right) - f\left(\vlambda'\right) 
      -  
      \mathbb{E} 
      \inner{ 
        g\left(\vlambda'; \rvvu\right) 
      }{ 
        \mathcal{T}_{\vlambda}\left(\rvvu\right) - \mathcal{T}_{\vlambda'}\left(\rvvu\right)
      } 
    \big).
    \nonumber
\shortintertext{Finally, by applying the equality in \cref{thm:innerproduct_z_t},}
    &=
    \frac{2 L_{\ell}^2}{\mu} \, C\left(d, \varphi\right) 
    \big( 
      f\left(\vlambda\right) - f\left(\vlambda'\right) 
      -  
      \inner{ 
        \nabla f\left(\vlambda'\right) 
      }{ 
        \vlambda - \vlambda'
      } 
    \big).
  \end{align*}
\end{proofEnd}

\begin{theoremEnd}[all end, category=proximalsgdbbvi]{lemma}[\textbf{Variance Transfer}]\label{thm:variance_transfer}
  Let \(\ell\) be \(L_{\ell}\)-smooth and \(\mu\)-strongly convex with the variational family satisfying \cref{assumption:variation_family} with the linear parameterization.
  Also, let \(\widehat{\nabla f}\) be an \(M\)-sample gradient estimator of the energy.
  Then,
  {
\setlength{\belowdisplayskip}{1.ex} \setlength{\belowdisplayshortskip}{1.ex}
\setlength{\abovedisplayskip}{1.ex} \setlength{\abovedisplayshortskip}{1.ex}
\[
  \mathrm{tr} \, \mathbb{V} \,  \widehat{\nabla f}\left(\vlambda\right)
  \leq
  \frac{4 L_{\ell} \kappa \, C\left(d, \varphi\right)}{M} \, \mathrm{B}_{f}\left(\vlambda, \vlambda^{\prime}\right)
  +
  2 \, \mathrm{tr} \, \mathbb{V} \,  \widehat{\nabla f}\left(\vlambda^{\prime}\right),
\]
}%
  \(\kappa = L_{\ell} / \mu\) is the condition number, \(\mathrm{B}_f\) is the Bregman divergence defined in \cref{thm:es_bregman}, \(C\left(d, \varphi\right) = d + k_{\varphi}\) for the Cholesky family, and \(C\left(d, \varphi\right) = 2 k_{\varphi} \sqrt{d} + 1\)  for the mean-field family.
\end{theoremEnd}
\vspace{-1ex}
\begin{proofEnd}
  First, the \(M\)-sample gradient estimator is defined as
  \[
    \widehat{\nabla f}\left(\vlambda\right) 
    =
    \frac{1}{M} \sum^{M}_{m=1} \nabla_{\vlambda} f\left(\vlambda; \rvvu_m\right),
  \]
  where \(\rvvu_m \sim \varphi\).
  Since \(\rvvu_1, \ldots, \rvvu_m\) are independent and identically distributed, we have
  \[
    \mathrm{tr} \, \mathbb{V} \,  \widehat{\nabla f}\left(\vlambda\right)
    =
    \frac{1}{M} \mathrm{tr} \, \mathbb{V} \, \nabla_{\vlambda} f\left(\vlambda; \rvvu\right).
  \]
  From here, given~\cref{thm:es_bregman}, the proof is identical with that of~\citet[Lemma 8.20]{garrigos_handbook_2023}, except for the constants.
\end{proofEnd}


\vspace{.5ex}
Furthermore, the gradient variance at the optimum must be bounded:
\begin{lemma}[\citealp{domke_provable_2019,kim_practical_2023}]\label{thm:optimum_gradient_variance}
  Let \(\ell\) be \(L_{\ell}\)-smooth with the variational family satisfying \cref{assumption:variation_family} and a 1-Lipschitz diagonal conditioner \(\phi\).
  Then, the gradient variance at the optimum \(\vlambda^* \in \argmin_{\vlambda \in \Lambda} F\left(\vlambda\right)\) is bounded as
  {
\setlength{\belowdisplayskip}{1.0ex} \setlength{\belowdisplayshortskip}{1.ex}
\setlength{\abovedisplayskip}{.5ex} \setlength{\abovedisplayshortskip}{.5ex}
\[
  \sigma^2
  \leq
  \frac{1}{M} C\left(d, \varphi\right) \, L_{\ell}^2 \, \left( \norm{ \bar{\vz} - \vm^* }_2^2 + \norm{\mC^{*}}_{\mathrm{F}}^2 \right),
\]
  }%
where \(\bar{\vz}\) is a stationary point of \(\ell\), \(\vm^*\) and \(\mC^*\) are the location and scale formed by \(\vlambda^*\), the constants are \(C\left(d, \varphi\right) = d + k_{\varphi}\) for the Cholesky family and \(C\left(d, \varphi\right) = 2 k_{\varphi} \, \sqrt{d} + 1\) for the mean-field family, \(k_{\varphi}\) is the kurtosis of \(\varphi\) as defined in~\cref{assumption:symmetric_standard}.
\end{lemma}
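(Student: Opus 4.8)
The plan is to reduce the variance bound to a single-sample fourth-moment computation on the base distribution $\varphi$. Write $\sigma^2 = \mathrm{tr}\,\mathbb{V}\,\widehat{\nabla f}(\vlambda^*)$ for the $M$-sample estimator; since $\nabla h$ is deterministic this equals the variance of the full gradient estimator. Because the $\rvvu_m \sim \varphi$ are i.i.d., the variance of the average is $1/M$ times the single-sample variance, and dropping the nonnegative squared mean gives
\[
  \sigma^2 = \frac{1}{M}\,\mathrm{tr}\,\mathbb{V}\,\nabla_{\vlambda} f(\vlambda^*; \rvvu) \le \frac{1}{M}\,\mathbb{E}\norm{\nabla_{\vlambda} f(\vlambda^*; \rvvu)}_2^2,
\]
so it suffices to control the single-sample second moment.

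Next I would decompose the single-sample gradient by the chain rule as $\nabla_{\vlambda} f(\vlambda^*; \rvvu) = (\partial \mathcal{T}_{\vlambda^*}(\rvvu)/\partial\vlambda)\, g(\vlambda^*; \rvvu)$ and separate the two $\rvvu$-dependent factors via a \emph{pointwise} operator-norm bound on the Jacobian. Reusing the explicit block structure computed in the proof of \cref{thm:reparam_jacobian_bounded} --- the $\vm$ block is the identity, the $\vs$ block is diagonal with entries $\phi'(s_i)\rvu_i$, and the $\mL$ block carries the remaining $\rvu_j$ --- and using that $\phi$ is $1$-Lipschitz so $\abs{\phi'}\le 1$, one extracts a scalar factor $J_{\mathcal{T}}(\rvvu)$ with
\[
  \norm{\nabla_{\vlambda} f(\vlambda^*; \rvvu)}_2^2 \le J_{\mathcal{T}}(\rvvu)\,\norm{g(\vlambda^*; \rvvu)}_2^2,
\]
where $J_{\mathcal{T}}(\vu) = 1 + \norm{\vu}_2^2$ for the Cholesky family and $J_{\mathcal{T}}(\vu) = 1 + \norm{\diag(\vu)^2}_{\mathrm{F}}$ for the mean-field family. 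This is the pointwise analogue of \cref{thm:reparam_jacobian_bounded}, which only controlled the Jacobian in expectation.

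I would then remove the gradient factor using smoothness and the stationarity of $\bar{\vz}$. Since $\nabla\ell(\bar{\vz}) = \boldupright{0}$ and $\ell$ is $L_{\ell}$-smooth,
\[
  \norm{g(\vlambda^*; \rvvu)}_2 = \norm{\nabla\ell(\mathcal{T}_{\vlambda^*}(\rvvu)) - \nabla\ell(\bar{\vz})}_2 \le L_{\ell}\,\norm{\mC^* \rvvu + \vm^* - \bar{\vz}}_2,
\]
so the entire bound collapses to controlling $L_{\ell}^2\,\mathbb{E}\,J_{\mathcal{T}}(\rvvu)\,\norm{\mC^*\rvvu + (\vm^* - \bar{\vz})}_2^2$. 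Note that optimality of $\vlambda^*$ is not actually used here; only the location $\vm^*$ and scale $\mC^*$ it induces enter.

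The main obstacle, and the final step, is this moment evaluation, which is exactly where the kurtosis $k_{\varphi}$ enters. Expanding the square and applying \cref{assumption:symmetric_standard}, the linear (third-moment) cross terms vanish by symmetry since $\mathbb{E}\rvu_i = \mathbb{E}\rvu_i^3 = 0$, while the surviving quadratic and $J_{\mathcal{T}}$-weighted quartic terms evaluate through $\mathbb{E}\rvu_i^2 = 1$ and $\mathbb{E}\rvu_i^4 = k_{\varphi}$. For the Cholesky family this yields exactly $(d+k_{\varphi})\norm{\mC^*}_{\mathrm{F}}^2 + (d+1)\norm{\vm^* - \bar{\vz}}_2^2$, and since $k_{\varphi} \ge (\mathbb{E}\rvu_i^2)^2 = 1$ by Jensen one absorbs $d+1 \le d + k_{\varphi} = C(d,\varphi)$; the mean-field case is analogous with diagonal $\mC^*$ and gives $C(d,\varphi) = 2k_{\varphi}\sqrt{d} + 1$. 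Dividing by $M$ reproduces the stated bound of \citet{domke_provable_2019,kim_practical_2023}. The delicate part is purely the fourth-moment bookkeeping needed to pin down the correct family-dependent constants.
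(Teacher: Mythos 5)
Your proposal is correct, and it is essentially the argument behind the paper's proof --- with the caveat that the paper does not actually write a proof: it defers wholesale to \citet[Theorem 3]{domke_provable_2019} for the full-rank case and \citet[Lemma 2]{kim_practical_2023} for the mean-field case. What you have done is reconstruct those cited proofs in a self-contained way, and every step checks out: the \(1/M\) reduction and dropping of the squared mean, the pointwise factorization \(\norm{\nabla_{\vlambda} f(\vlambda^*;\rvvu)}_2^2 \leq J_{\mathcal{T}}(\rvvu)\norm{g(\vlambda^*;\rvvu)}_2^2\) from the block structure of the Jacobian, the use of \(L_{\ell}\)-smoothness against the stationary point \(\bar{\vz}\) (you are right that optimality of \(\vlambda^*\) is never used, only the induced \(\vm^*,\mC^*\)), and the fourth-moment computation, which reproduces the paper's own auxiliary results (\cref{thm:jacobian_reparam_inner_original} and \cref{thm:u_normsquared_marginalization_original}): for the Cholesky family one gets exactly \((d+1)\norm{\vm^*-\bar{\vz}}_2^2 + (d+k_{\varphi})\norm{\mC^*}_{\mathrm{F}}^2\), and your absorption \(d+1 \leq d+k_{\varphi}\) via \(k_{\varphi}\geq 1\) (Jensen) is valid; the analogous absorption \(\sqrt{d k_{\varphi}} + k_{\varphi}\sqrt{d} + 1 \leq 2k_{\varphi}\sqrt{d}+1\) also needs only \(k_{\varphi}\geq 1\). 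One place where your write-up is arguably \emph{more} complete than the paper's citation chain: the lemma is stated for a 1-Lipschitz (possibly nonlinear) conditioner \(\phi\), while the auxiliary lemmas the paper invokes are stated for the linear parameterization only; your pointwise bound \(\abs{\phi'(s_i)}\leq 1\) on the \(\vs\)-block of the Jacobian is precisely the bridge needed to cover the nonlinear case, and it correctly turns the Gram-matrix identity into an inequality. A minor technical nit that does not affect anything: for the triangular (Cholesky-factor) parameterization the pointwise Gram matrix is dominated by, rather than equal to, \((1+\norm{\vu}_2^2)\boldupright{I}\) (equality holds for the full-matrix parameterization of \citeauthor{domke_provable_2019}), but since you only use the inequality direction, your argument is unaffected.
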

\begin{proof}
\vspace{-3ex}
    The full-rank case is proven by \citet[Theorem 3]{domke_provable_2019}, while the mean-field case is a basic corollary of the result by \citet[Lemma 2]{kim_practical_2023}.
\end{proof}

\vspace{-1ex}
\begin{remark}
  The dimensional dependence in the complexity of BBVI is transferred from the variance bound in~\cref{thm:optimum_gradient_variance}.
  Unfortunately, for the Cholesky family, this dimensional dependence in the variance bound is tight~\citep{domke_provable_2019}.
\end{remark}

\vspace{-1ex}
\paragraph{Main Result}
With the gradient variance bounds, we now present our complexity result.
The proof is identical to Theorem 3.2 by \citet{gower_sgd_2019}, where they use a 2-stage decreasing stepsize schedule: the stepsize is initially held constant and then reduced in a \(1/t\) rate.

\begin{theoremEnd}[all end, category=proximalsgdbbvi]{theorem}\label{thm:strongly_convex_proximal_sgd}
Let \(\ell\) be \(L_{\ell}\)-smooth and \(\mu\)-strongly convex.
Then, BBVI with proximal SGD in \cref{eq:proxsgd}, \(M\)-Monte Carlo samples, a variational family satisfying \cref{assumption:variation_family}, the linear parameterization, and a fixed stepsize \(0 < \gamma \leq \frac{M}{2 L_{\ell} \, \kappa \, C\left(d, \varphi\right) }\), the iterates satisfy
\[
  \mathbb{E}\norm{ \vlambda_T - \vlambda^* }_2^2
  \leq
  {\left(1 - \gamma \mu \right)}^{T} {\lVert \vlambda_0 - \vlambda^2 \rVert}_2^2
  +
  \frac{2 \gamma \sigma^2}{\mu},
\]
where \(\kappa = L_{\ell} / \mu\) is the condition number, \(\sigma^2\) is defined in \cref{thm:optimum_gradient_variance}, \(\vlambda^* = \argmin_{\vlambda \in \Lambda} F\left(\vlambda\right)\), \(C(d, \varphi) = d + k_{\varphi}\) for the Cholesky family, and \(C(d, \varphi) = 2 k_{\varphi} \sqrt{d} + 1 \) for the mean-field family.
\end{theoremEnd}
\begin{proofEnd}
  Provided that
  \begin{enumerate}[label=\textbf{(A.\labelcref*{thm:strongly_convex_proximal_sgd}.\arabic*)},itemsep=-1ex]
    \item the energy \(f\) is \(\mu\)-strongly convex,~\label{eq:proximal_strongly_log_concave_cond1}
    \item the energy \(f\) is \(L_{\ell}\)-smooth,~\label{eq:proximal_strongly_log_concave_cond7}
    \item the regularizer \(h\) is convex,~\label{eq:proximal_strongly_log_concave_cond2}
    \item the regularizer \(h\) is lower semi-continuous,~\label{eq:proximal_strongly_log_concave_cond3}
    \item the convex expected smoothness condition holds,~\label{eq:proximal_strongly_log_concave_cond5}
    \item the variance transfer condition holds, and~\label{eq:proximal_strongly_log_concave_cond8}
    \item the gradient variance \(\sigma^2\) at the optimum is finite such that \(\sigma^2 < \infty\),~\label{eq:proximal_strongly_log_concave_cond6}
  \end{enumerate}
  the proof is identical to that of \citet[Theorem 11.9]{garrigos_handbook_2023}, which is based on the results of~\citet[Corollary A.2]{gorbunov_unified_2020}.

  In our setting,
  \begin{enumerate}[itemsep=-1ex]
    \item[\labelcref*{eq:proximal_strongly_log_concave_cond1}] is established by~\cref{thm:energy_convex},
    \item[\labelcref*{eq:proximal_strongly_log_concave_cond7}] is established by~\cref{thm:energy_smooth},
    \item[\labelcref*{eq:proximal_strongly_log_concave_cond2}] is trivially satisfied since \(h\) is the negative entropy,
    \item[\labelcref*{eq:proximal_strongly_log_concave_cond3}] is trivially satisfied since \(h\) is continuous,
    \item[\labelcref*{eq:proximal_strongly_log_concave_cond5}] is established in \cref{thm:es_bregman},
    \item[\labelcref*{eq:proximal_strongly_log_concave_cond8}] is established in \cref{thm:variance_transfer},
    \item[\labelcref*{eq:proximal_strongly_log_concave_cond6}] is established in \cref{thm:optimum_gradient_variance}.
  \end{enumerate}

  The only difference is that, we replace the constant \(L_{\mathrm{max}}\) in the proof of \citeauthor{garrigos_handbook_2023} to \({L_{\ell} \kappa \, C\left(d, \varphi\right)}/{M}\).
  This stems from the different constants in the variance transfer condition.
\end{proofEnd}

\begin{theoremEnd}[all end, category=proximalsgdbbvi]{theorem}\label{thm:complexity_strongly_convex_proximal_sgd}
  Let \(\ell\) be \(L_{\ell}\)-smooth and \(\mu\)-strongly convex.
  Then, for any \(\epsilon > 0\), BBVI with proximal SGD in \cref{eq:proxsgd}, \(M\)-Monte Carlo samples, a variational family satisfying \cref{assumption:variation_family}, and the linear parameterization guarantees \(\mathbb{E} \norm{\vlambda_T - \vlambda^*}_2^2 \leq \epsilon\) if
  {%
\setlength{\belowdisplayskip}{1.0ex} \setlength{\belowdisplayshortskip}{1.ex}
\setlength{\abovedisplayskip}{.5ex} \setlength{\abovedisplayshortskip}{.5ex}
\[
  \gamma = \min\left(\frac{\epsilon}{2} \frac{\mu}{2 \sigma^2}, \frac{M}{2 L_{\ell} \, \kappa \, C\left(d, \varphi\right)} \right), \qquad
  T
  \geq
  \max\left( \frac{1}{\epsilon} \frac{4 \sigma^2}{\mu^2}, \frac{2 \kappa^2 \, C\left(d, \varphi\right)}{M} \right)
  \log \left( \frac{2 \norm{\vlambda_0 - \vlambda^*}}{\epsilon} \right),
\]
  }%
where \(\kappa = L_{\ell} / \mu\), \(\sigma^2\) is defined in \cref{thm:optimum_gradient_variance}, \(\vlambda^* = \argmin_{\vlambda \in \Lambda} F\left(\vlambda\right)\), \(C(d, \varphi) = d + k_{\varphi}\) for the Cholesky family, and \(C(d, \varphi) = 2 k_{\varphi} \sqrt{d} + 1 \) for the mean-field family.
\end{theoremEnd}
\begin{proofEnd}
  \vspace{-2ex}
  This is a corollary of the fixed stepsize convergence guarantee in \cref{thm:strongly_convex_proximal_sgd} as shown by~\citet[Corollary 11.10]{garrigos_handbook_2023}.
  They guarantee an \(\epsilon\)-accurate solution as long as
  \[
    \gamma = \min\left( \frac{\epsilon}{2} \frac{2}{2 \sigma^*_{\mathrm{F}}}, \frac{1}{2 L_{\mathrm{max}}} \right),\quad
    T \geq \max\left( \frac{1}{\epsilon} \frac{4 \sigma_{\mathrm{F}}^*}{\mu^2}, \frac{2 L_{\mathrm{max}}}{\mu}\right)  \log\left( \frac{2 \norm{\vlambda_0 - \vlambda^*}}{\epsilon} \right).
  \]
  In our notation, \(\sigma^*_{\mathrm{F}} = \sigma^2\) and \(L_{\mathrm{max}} = {L_{\ell} \kappa C\left(d, \varphi\right)}/{M}\).
\end{proofEnd}

\begin{theoremEnd}[all end,category=proximalsgdbbvi]{theorem}\label{thm:strongly_convex_proximal_sgd_decgamma}
Let \(\ell\) be \(L_{\ell}\)-smooth and \(\mu\)-strongly convex.
Then, BBVI with proximal SGD in \cref{eq:proxsgd}, the \(M\)-sample reparameterization gradient estimator, a variational family satisfying \cref{assumption:variation_family}, the linear parameterization, \(T \geq 4 T_{\kappa} \), and a stepsize schedule of 
\[
  \gamma_t = \begin{cases}
    \frac{M}{ 2 L_{\ell} \kappa C\left(d, \varphi\right) } & \quad\text{for}\quad t \leq 4 T_{\kappa} \\
    \frac{2t + 1}{ {\left(t+1\right)}^2 \mu } & \quad\text{for}\quad t > 4 T_{\kappa}, \\
  \end{cases}
\]
where \(T_{\kappa} = {\lceil \kappa^2 C\left(d, \varphi\right) M^{-1} \rceil}\), \(\kappa = L_{\ell} / \mu\) is the condition number, \(C(d, \varphi) = d + k_{\varphi}\) for the Cholesky family, and \(C(d, \varphi) = 2 k_{\varphi} \sqrt{d} + 1 \) for the mean-field family, then the iterates satisfy
\[
  \mathbb{E}\norm{ \vlambda_T - \vlambda^* }_2^2
  \leq
  \frac{ 16 \, T_{\kappa}^2 \, \norm{\vlambda_0 - \vlambda^*}_2^2 }{ \mathrm{e}^2 T^2}
  +
  \frac{8 \sigma^2}{\mu^2 T}
\]
where \(\sigma^2\) is defined in \cref{thm:optimum_gradient_variance}, \(\mathrm{e}\) is Euler's constant, and \(\vlambda^* = \argmin_{\vlambda \in \Lambda} F\left(\vlambda\right)\).
\end{theoremEnd}
\begin{proofEnd}
  \vspace{-1ex}
  Under our assumptions, \cref{thm:strongly_convex_proximal_sgd} holds, of which the proof is essentially obtaining the recursion
  \begin{align*}
    \mathbb{E} \norm{\vlambda_{t+1} - \vlambda^*}_2^2
    =
    \left(1 - \gamma_t \mu\right) \mathbb{E} \norm{\vlambda_{t} - \vlambda^*}_2^2
    +
    2 \gamma^2_t \sigma^2.
  \end{align*}
  Instead of a fixed stepsize, we can apply the decreasing stepsize rule in the proof statement, then which the proof becomes identical to that of~\citet[Theorem 3.2]{gower_sgd_2019}.
  We only need to replace \(\mathcal{L}\) with \(L_{\mathrm{max}}\) in the proof of \citet[Theorem 11.9]{garrigos_handbook_2023}.
  This, in our notation, is \(L_{\mathrm{max}} = {L_{\ell} \kappa C\left(d, \varphi\right)}/{M}\).
\end{proofEnd}

\begin{theoremEnd}[category=proximalsgdbbvi]{theorem}\label{thm:complexity_strongly_convex_proximal_sgd_decgamma}
  Let \(\ell\) be \(L_{\ell}\)-smooth and \(\mu\)-strongly convex.
  Then, for any \(\epsilon > 0\), BBVI with proximal SGD in \cref{eq:proxsgd}, the \(M\)-sample reparameterization gradient estimator, a variational family satisfying \cref{assumption:variation_family} with the linear parameterization guarantees \(\mathbb{E} \norm{\vlambda_T - \vlambda^*}_2^2 \leq \epsilon\) if
  {%
\setlength{\belowdisplayskip}{1.0ex} \setlength{\belowdisplayshortskip}{1.ex}
\setlength{\abovedisplayskip}{0ex} \setlength{\abovedisplayshortskip}{0ex}
\[
  \gamma_t = \begin{cases}
    \frac{M}{ 2 L_{\ell} \kappa C\left(d, \varphi\right) } & \quad\text{for}\quad t \leq 4 T_{\kappa} \\
    \frac{2t + 1}{ {\left(t+1\right)}^2 \mu } & \quad\text{for}\quad t > 4 T_{\kappa}, \\
  \end{cases}
  \qquad
  T
  \geq
  \max\left(
  \frac{8 \sigma^2}{\mu^2 \, \epsilon} 
  + \frac{ 4 T_{\kappa} \norm{\vlambda_0 - \vlambda^*}_2 }{\mathrm{e} \, \sqrt{\epsilon}},\;\;
  4 T_{\kappa} \right)
\]
  }%
  where \(\sigma^2\) is defined in \cref{thm:optimum_gradient_variance}, \(T_{\kappa} = {\lceil \kappa^2 C\left(d, \varphi\right) M^{-1} \rceil}\), \(\kappa = L_{\ell}/\mu\) is the condition number, \(\mathrm{e}\) is Euler's constant, \(\vlambda^* = \argmin_{\vlambda \in \Lambda} F\left(\vlambda\right)\), \(C(d, \varphi) = d + k_{\varphi}\) for the Cholesky family, and \(C(d, \varphi) = 2 k_{\varphi} \sqrt{d} + 1 \) for the mean-field family.
\end{theoremEnd}
\vspace{-1.5ex}
\begin{proofEnd}
The computational complexity follows from the smallest number of iterations \(T\) such that
\[
  \mathbb{E}\norm{ \vlambda_T - \vlambda^* }_2^2
  \leq
  \frac{ 16 T_{\kappa}^2 \norm{\vlambda_0 - \vlambda^*}_2^2 }{ \mathrm{e}^2 T^2}
  +
  \frac{8 \sigma^2}{\mu^2 T}
  \leq
  \epsilon
\]

By multiplying both sides with \(T^2\) as
\begin{alignat}{3}
  T^2 \epsilon
  -
  \frac{8 \sigma^2}{\mu^2} T
  -
  \frac{ 16  T_{\kappa}^2 \norm{\vlambda_0 - \vlambda^*}_2^2 }{ \mathrm{e}^2}
  \geq
  0,
\end{alignat}
we can see that we are looking for the smallest positive integer that is larger than the solution of a quadratic equation with respect to \(T\).
This is given as
\[
  T \geq
  \frac{ \frac{8 \sigma^2}{\mu^2} + \sqrt{ {\left(\frac{8 \sigma^2}{\mu^2}\right)}^2 + 64 \epsilon  \frac{T_{\kappa}^2 \, \norm{\vlambda_0 - \vlambda^*}_2^2 }{ \mathrm{e}^2}  } }{2 \epsilon}.
\]
Applying the inequality \(\sqrt{a + b} \leq \sqrt{a} + \sqrt{b}\),
\begin{align*}
  \frac{ \frac{8 \sigma^2}{\mu^2} + \sqrt{ {\left(\frac{8 \sigma^2}{\mu^2}\right)}^2 + 64 \epsilon   \frac{T_{\kappa}^2 \, \norm{\vlambda_0 - \vlambda^*}_2^2 }{ \mathrm{e}^2}  } }{2 \epsilon}
  &\leq
  \frac{ \frac{8 \sigma^2}{\mu^2} + {\left(\frac{8 \sigma^2}{\mu^2}\right)}  + \sqrt{64 \epsilon \frac{T_{\kappa}^2 \, \norm{\vlambda_0 - \vlambda^*}_2^2 }{ \mathrm{e}^2}  } }{2 \epsilon}
  \\
  &=
  \frac{ \frac{16 \sigma^2}{\mu^2} + \sqrt{\epsilon} \, \frac{ 8  T_{\kappa} \, {\lVert \vlambda_0 - \vlambda^* \rVert}_2 }{ \mathrm{e}}  } {2 \epsilon}
  \\
  &=
  \frac{8 \sigma^2}{\mu^2 \, \epsilon} 
  + \frac{ 4 T_{\kappa} \norm{\vlambda_0 - \vlambda^*}_2 }{\mathrm{e} \, \sqrt{\epsilon}}.
\end{align*}

Thus, \(\mathbb{E} \norm{\vlambda_T - \vlambda^*}_2^2 \leq \epsilon\) can be satisfied with a number of iterations at least
\begin{align*}
  T
  &\geq
  \max\left(
  \frac{8 \sigma^2}{\mu^2 \, \epsilon} 
  + \frac{ 4 T_{\kappa} \norm{\vlambda_0 - \vlambda^*}_2 }{\mathrm{e} \, \sqrt{\epsilon}},\;\;
  4 T_{\kappa} \right).
\end{align*}
\end{proofEnd}

\vspace{1ex}
\begin{remark}
  BBVI with proximal SGD on \(\mu\)-strongly convex and \(L_{\ell}\)-smooth \(\ell\) has a complexity \(\mathcal{O}\left( \kappa^2 d M^{-1} \, \epsilon^{-1} \right)\) for the Cholesky family and \(\mathcal{O}\left( \kappa^2 \sqrt{d} M^{-1} \, \epsilon^{-1}\right)\) for the mean-field family.
\end{remark}

\begin{remark}
  We also provide a  similar result with a fixed stepsize in \cref{thm:complexity_strongly_convex_proximal_sgd} of \cref{section:proximal_bbvi}.
  In this case, the complexity is \(\mathcal{O}\left( \kappa^2 d M^{-1} \epsilon^{-1} \log \epsilon^{-1} \right)\) for the Cholesky family and \(\mathcal{O}\left( \kappa^2 \sqrt{d} M^{-1} \epsilon^{-1} \log \epsilon^{-1} \right)\) for the mean-field family.
\end{remark}


\begin{figure*}[t]
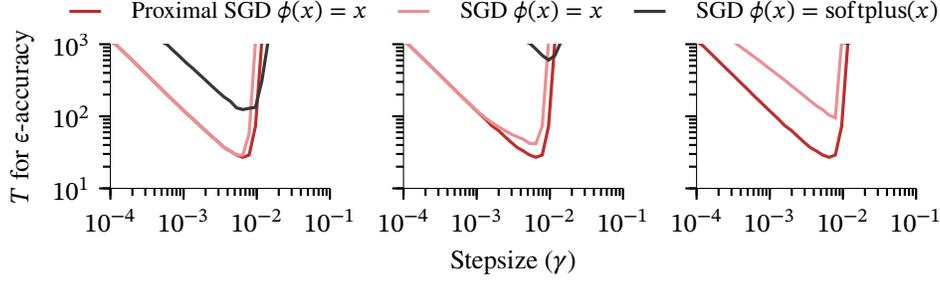

  \vspace{-1ex}
  \centering
  \begin{tikzpicture}
    \begin{groupplot}[
        group style={
          group size=3 by 1,
          horizontal sep=0.07\textwidth
        },
        xmode=log,
        ymode=log,
        axis y line*= left,
        axis x line*= bottom,
        xmin=1e-4, xmax=1e-1,
        ymin=10,   ymax=1000,
        every tick/.style={black,thick}, 
        xtick align=outside,
        ytick align=outside,
        height = 3.5cm,
        width  = 4.5cm,
        xlabel near ticks,
        ylabel near ticks,
        axis on top,
        legend columns=3,
        legend style={
          legend image post style={scale=0.7},
          column sep=2ex,
          at={(1.8,1.07)},
          anchor=south,
          legend cell align=left,
          line width=1pt,
          draw=none 
        },
      ]
      \nextgroupplot[
        ylabel = {\(T\) for \(\epsilon\)-accuracy},
      ]
      \input{figures/group_quadratic_unit}
      \nextgroupplot[
        xlabel = {Stepsize (\(\gamma\))},
        yticklabels={}
      ]
      \input{figures/group_quadratic_narrow}
      \nextgroupplot[
        yticklabels={}
      ]
      \input{figures/group_quadratic_verynarrow}
    \end{groupplot}
  \end{tikzpicture}
  \vspace{-1ex}
  \caption{
    \textbf{Stepsize versus the number of iterations for vanilla SGD and proximal SGD to achieve \( \DKL{q_{\vlambda}}{\pi} \leq \epsilon = 1\) under different initializations for Gaussian posteriors.}
    The initializations \(C\left(\vlambda_0\right)\) are \(\boldupright{I}\), \(10^{-3}\boldupright{I}\), \(10^{-5}\boldupright{I}\) from left to right, respectively.
    The average suboptimality at iteration \(t\) was estimated from 10 independent runs.
    For each run, the target posterior was a 10-dimensional Gaussian with a covariance with a condition number \(\kappa = 10\) and a smoothness of \(L = 100\).
  }\label{fig:quadratic}
  \vspace{-2ex}
\end{figure*}

\begin{figure*}
  \centering
  \hspace{-4em}
  \includegraphics[]{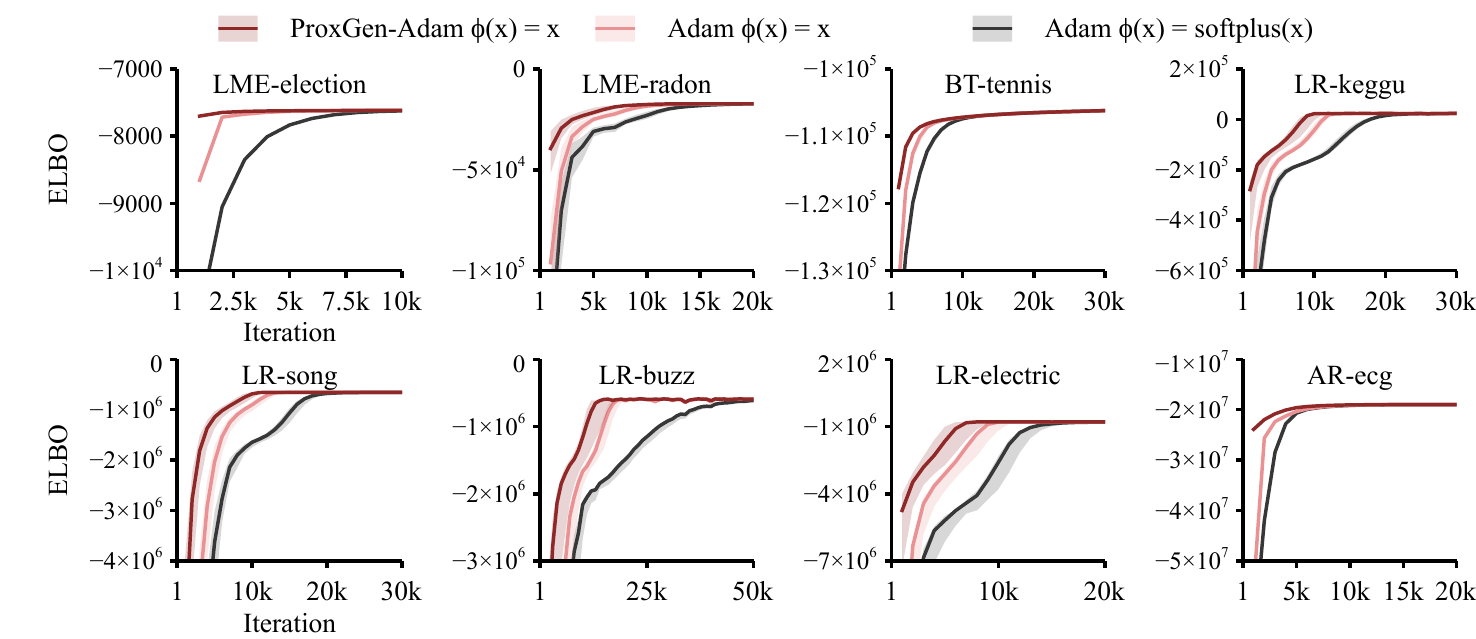}
  \caption{\textbf{Comparison of BBVI convergence speed (ELBO v.s. Iteration) of different optimization algorithms.} The error bands are the 80\% quantiles estimated from 20 (10 for \textsf{AR-eeg}) independent replications.
    The results shown used a base stepsize of \(\gamma = 10^{-3}\), while the initial point was \(\vm_0 = \boldupright{0}, \mC_0 = \boldupright{I}\).
    Details on the setup can be found in the text of~\cref{section:realistic_problems} and \cref{appendix:experimental_setup}.
  }\label{fig:results}
  \vspace{-2ex}
\end{figure*}

\vspace{-1ex}
\section{Experiments}\label{section:experiment}
\vspace{-1ex}

\subsection{Synthetic Problem}
\vspace{-1ex}
\paragraph{Setup}
We first compare proximal SGD against vanilla SGD with linear and nonlinear parameterizations on a synthetic problem, which is log-smooth, strongly log-concave, and the exact solution is known.
While a similar experiment was already conducted by~\citet{domke_provable_2020}, here we include nonlinear parameterizations, which were not originally considered.
We run all algorithms with a fixed stepsize to infer a multivariate Gaussian with a full-rank covariance matrix.
The variational approximation is a full-rank Gaussian formed by \(\varphi = \mathcal{N}(0,1)\) and the Cholesky parameterization.

\vspace{-1ex}
\paragraph{Results}
The results are shown in~\cref{fig:quadratic}.
Proximal SGD is clearly the most robust against initialization.
Also, SGD with the nonlinear parameterization \(\phi(x) = \mathrm{softplus}(x)\) is much slower to converge under all initializations.
This confirms that linear parameterizations are indeed superior for both robustness against initializations and convergence speed.

\vspace{-1ex}
\subsection{Realistic Problems}\label{section:realistic_problems}
\vspace{-1ex}
\paragraph{Setup}
We now evaluate proximal SGD on realistic problems. In practice, Adam~\citep{kingma_adam_2015} is observed to be robust against stepsize choices~\citep{zhang_advances_2019}.
The reason why Adam performs well on non-smooth, non-convex problems is still under investigation~\citep{zhang_adam_2022, reddi_convergence_2023, kunstner_noise_2023}.
Nonetheless, to compare fairly against Adam, we implement a recently proposed variant of proximal SGD called ProxGen~\citep{yun_adaptive_2021}, which includes an Adam-like update rule.
The probabilitic models and datasets are fully described in~\cref{appendix:experimental_setup}. We implement these models and BBVI on top of the Turing~\citep{ge_turing_2018} probabilistic programming framework. Due to the size of these datasets, we implement \textit{doubly stochastic} subsampling~\citep{titsias_doubly_2014} with a batch size of \(B = 100\) (\(B = 500\) for \textsf{BT-tennis}) with \(M = 10\) Monte Carlo samples.
For batch subsampling, we implement random-reshuffling, which is faster than independent subsampling both empirically~\citep{bottou_curiously_2009} and theoretically~\citep{mishchenko_random_2020, nagaraj_sgd_2019, haochen_random_2019,ahn_sgd_2020}. 
We also observe that doubly stochastic BBVI benefits from reshuffling, but leave a detailed investigation to future works.

\vspace{-1ex}
\paragraph{Results}
Representative results are shown in \cref{fig:results}, with additional results in~\cref{appendix:additional}.
Both ProxGen-Adam and Adam with linear parameterizations converge faster than Adam with nonlinear parameterization.
Furthermore, for the case of \textsf{election} and \textsf{buzz}, Adam with the nonlinear parameterization converges much slower than the alternatives.
When using linear parameterizations, ProxGen-Adam appears to be generally faster than Adam.
We note, however, that due to the difference in the update rule between ProxGen-Adam and Adam, proximal operators alone might not fully explain the performance difference.
Nevertheless, the results of our experiment do conclusively suggest that linear parameterizations are superior.


\vspace{-1ex}
\section{Discussions}





\vspace{-1ex}
\paragraph{Conclusions}
In this work, we have proven the convergence of BBVI.
Our assumptions encompass implementations that are actually used in practice, and our theoretical analysis revealed limitations in some of the popular design choices (mainly the use of nonlinear conditioners).
To resolve this issue, we re-evaluated the utility of proximal SGD both theoretically and practically, where it achieved the strongest theoretical guarantees in stochastic first-order optimization.

\vspace{-1ex}
\paragraph{Related Works}
To prove the convergence of BBVI, early works have \textit{a-priori} ``assumed'' the regularity of the ELBO and the gradient estimator~\citep{khan_kullbackleibler_2015, khan_faster_2016, regier_fast_2017, liu_quasimonte_2021, buchholz_quasimonte_2018, alquier_concentration_2020}.
Towards a more rigorous understanding, \citet{fan_fast_2015,xu_variance_2019,domke_provable_2019,kim_practical_2023} studied the reparameterization gradient, \citet{xu_computational_2022} studied the asymptotics of the ELBO, \citet{domke_provable_2020,challis_gaussian_2013,titsias_doubly_2014} established convexity, and \citet{domke_provable_2020} established smoothness.
On the other hand,~\citet{hoffman_blackbox_2020, bhatia_statistical_2022} established rigorous convergence guarantees by considering simplified variant of BBVI where only the scale is optimized, and \citet{fujisawa_multilevel_2021} assumed that the support of \(\varphi\) is bounded almost surely.
Meanwhile, under similar assumptions to ours, \citet{diao_forwardbackward_2023,lambert_variational_2022} recently established convergence guarantees for proximal SGD BBVI with a Bures-Wasserstein metric.
Their computational properties differ from BBVI as they require Hessian evaluations.
Also, understanding BBVI, which is VI with a Euclidean metric, is an important problem due to its practical relevance.

\vspace{-1ex}
\paragraph{Limitations}
Our work has multiple limitations: 
\begin{enumerate*}[label=\textbf{(\roman*)}]
    \item Our results are restricted to the location-scale family,
    \item the reparameterization gradient, and
    \item smooth joint log-likelihoods.
\end{enumerate*}
However, the location-scale family with the reparameterization gradient is the most widely used combination in practice, and replacing the smoothness assumption is an active area of research in stochastic optimization.
For our results on proximal SGD, we further assume that the joint log-likelihood is \(\mu\)-strongly convex (equivalently strongly log-concave posteriors).
It is unclear how to extend the guarantees to only smooth but non-log-concave joint log-likelihoods.

\vspace{-1ex}
\paragraph{Open Problems}
Although we have proven that the mean-field dimensional family has a dimension dependence of {\small\(\mathcal{O}\left(\sqrt{d}\right)\)}, empirical results suggest room for improvement~\citep{kim_practical_2023}.
Therefore, we pose the following conjecture:
\begin{conjecture}
  Under mild assumptions, BBVI for the mean-field variational family converges with only logarithmic dimensional dependence or no explicit dimensional dependence at all.
\end{conjecture}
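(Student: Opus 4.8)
The plan is to reopen the single step in the existing analysis where the explicit $\sqrt{d}$ is born and to replace it by a coordinate-localized argument. Tracing the variance bound of \cref{thm:optimum_gradient_variance} and the convex expected smoothness of \cref{thm:es_bregman} back to their source, the mean-field constant $C(d,\varphi)=2k_{\varphi}\sqrt{d}+1$ enters entirely through the Jacobian weight $J_{\mathcal{T}}(\rvvu)=1+\norm{\mU^2}_{\mathrm{F}}=1+\sqrt{\sum_i \rvu_i^4}$, whose expectation is controlled by Jensen as $\E\sqrt{\sum_i\rvu_i^4}\le\sqrt{d\,k_{\varphi}}$. This inequality treats the global weight $J_{\mathcal{T}}(\rvvu)$ as if it were independent of the directional quantity $\norm{\mathcal{T}_{\vlambda}(\rvvu)-\mathcal{T}_{\vlambda'}(\rvvu)}_2^2$. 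I conjecture that this is precisely where a factor of $\sqrt{d}$ is needlessly lost for the mean-field family, because its diagonal scale $\mC=\mD_{\phi}(\vs)$ makes the gradient decouple across coordinates.

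First I would rewrite the per-sample gradient-variance trace directly, using that for mean-field $\mC$ is diagonal and the components $\rvu_1,\dots,\rvu_d$ are \emph{i.i.d.} under \cref{assumption:symmetric_standard}. The location block contributes $\sum_i \V{\mathrm{D}_i\ell(\mathcal{T}_{\vlambda}(\rvvu))}$ and the scale block contributes $\sum_i \V{\phi'(s_i)\,\rvu_i\,\mathrm{D}_i\ell(\mathcal{T}_{\vlambda}(\rvvu))}$, so that the whole variance is an explicit sum of $d$ coordinatewise terms with no global $\norm{\mU^2}_{\mathrm{F}}$ factor. The only place the kurtosis survives is the fourth moment $\E\rvu_i^4$ of the single coordinate $i$ inside the $i$-th scale term, where crucially it multiplies the \emph{local} curvature $\E(\mathrm{D}_i\ell)^2$ rather than the full Frobenius norm of the Hessian.

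Second, I would control the cross-coordinate coupling that still enters each $\mathrm{D}_i\ell$ through $\nabla\ell$. A Gaussian/Poincaré-type variance inequality gives $\V{\mathrm{D}_i\ell}\lesssim\sum_j\phi(s_j)^2\,\E(\nabla^2\ell)_{ij}^2$, and summing over $i$ reassembles $\norm{\nabla^2\ell}_{\mathrm{F}}^2$. The mild assumption I would impose is a bound on the off-diagonal interaction of the Hessian: for instance a uniform bound on its row sums, a diagonal-dominance or restricted-coupling condition, or that $\ell$ is a coordinatewise-separable part plus a weakly-coupled remainder. Under any such condition the summed contribution is $O(\log d)$ or $O(1)$ instead of the worst-case $\norm{\nabla^2\ell}_{\mathrm{F}}^2\lesssim d\,L_{\ell}^2$, and the improved expected-smoothness constant then propagates unchanged through \cref{thm:variance_transfer} and the decreasing-stepsize recursion of \cref{thm:strongly_convex_proximal_sgd_decgamma} to yield the conjectured complexity.

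The hard part will be the decoupling without silently reintroducing a factor of $d$: the coordinatewise trace is already a sum of $d$ terms, so one must argue that the object governing proximal-SGD convergence for mean-field is not this raw trace but a per-coordinate Lyapunov contraction in which the independent coordinate noises average favorably and the explicit dimension factor collapses. Equivalently, the key obstacle is to pin down the weakest ``mild assumption'' on the posterior's coordinate interactions under which the global $\sqrt{d}$-sized term $\E\norm{\mU^2}_{\mathrm{F}}$ provably never needs to appear, and to verify that the lower-bound construction making the dependence tight for the Cholesky family (noted in the remark after \cref{thm:optimum_gradient_variance}) genuinely degenerates to $\log d$ or a constant once the scale is forced to be diagonal.
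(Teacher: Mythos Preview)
The statement you are attempting to prove is a \emph{conjecture}, explicitly posed by the paper as an open problem in the Discussions section. The paper does not provide a proof; it merely observes that the established \(\mathcal{O}(\sqrt{d})\) dependence for the mean-field family may not be tight based on empirical evidence and leaves resolution to future work. There is therefore no ``paper's own proof'' against which your proposal can be compared.

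What you have written is not a proof but a research plan toward attacking the conjecture. As such it identifies a plausible entry point---the Jensen step that introduces \(\mathbb{E}\sqrt{\sum_i \rvu_i^4}\le\sqrt{d\,k_{\varphi}}\) in the mean-field variance bound---and proposes to replace it by a coordinatewise decomposition. That diagnosis of where the \(\sqrt{d}\) enters is accurate. However, your sketch does not constitute a proof: the ``mild assumption'' you would impose on the Hessian (diagonal dominance, bounded row sums, or near-separability of \(\ell\)) is left unspecified, and you yourself flag the central difficulty, namely that the coordinatewise trace is already a sum of \(d\) terms and one must argue that a per-coordinate Lyapunov analysis collapses this factor. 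That step is the entire content of the conjecture and is not supplied. Your proposal is a reasonable outline of where to look, but it should be understood as a plan rather than a proof, and the paper offers nothing to validate or refute it.
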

\vspace{-1ex}
This would put mean-field BBVI in a regime clearly faster than approximate MCMC \citep{freund_when_2022}.
Also, it is unknown whether the \(\mathcal{O}\left(\kappa^2\right)\) condition number dependence dependence is tight.
In fact, for proximal SGD BBVI in Bures-Wasserstien space,~\citet{diao_forwardbackward_2023} report a dependence of \(\mathcal{O}\left(\kappa\right)\).
Lastly, it would be interesting to see whether natural gradient VI (NGVI;~\citealp{amari_natural_1998,khan_conjugatecomputation_2017}) can achieve similar convergence guarantees.
While it is empirically known that NGVI often converges faster~\citep{lin_fast_2019}, theoretical evidence has yet to follow.

\begin{ack}
The authors would like to thank Justin Domke for discussions on the concurrent results, Javier Burroni for pointing out a mistake in the earlier version of this work, and the anonymous reviewers for their constructive comments.

K. Kim and J. R. Gardner were funded by the National Science Foundation Award [IIS-2145644], while Y.-A. Ma was funded by the National Science Foundation Grants [NSF-SCALE MoDL-2134209] and [NSF-CCF-2112665 (TILOS)], the U.S. Department Of Energy, Office of Science, and the Facebook Research award.
\end{ack}

\bibliography{references}

\newpage
\appendix

\noindent\rule[0.5ex]{\linewidth}{1pt}%
\vspace{-3ex}
\begin{center}
  \LARGE\bf%
  \textsc{%
  On the Convergence of \\ Black-Box Variational Inference \\
  \textit{Appendix}
  }
\end{center}%
\vspace{-1ex}
\noindent\rule[0.5ex]{\linewidth}{1pt}


{\hypersetup{linkbordercolor=black,linkcolor=black}
\tableofcontents
}

\newpage
\vspace{-5ex}
\section{Computational Resources}
\begin{table}[H]
  \centering
  \begin{threeparttable}
  \caption{Computational Resources}\label{table:resources}
  \begin{tabular}{ll}
    \toprule
    \multicolumn{1}{c}{\textbf{Type}}
    & \multicolumn{1}{c}{\textbf{Model and Specifications}}
    \\ \midrule
    System Topology & 2 nodes with 2 sockets each with 24 logical threads (total 48 threads) \\
    Processor       & 1 Intel Xeon Silver 4310, 2.1 GHz (maximum 3.3 GHz) per socket \\
    Cache           & 1.1 MiB L1, 30 MiB L2, and 36 MiB L3 \\
    Memory          & 250 GiB RAM \\
    Accelerator     & 1 NVIDIA RTX A5000 per node, 2 GHZ, 24GB RAM 
    \\ \bottomrule
  \end{tabular}
  \end{threeparttable}
\end{table}

Running the experiments took approximately a week.

\section{Nomenclature}
\begin{table}[H]
  \centering
  \begin{tabular}{clll}
    \toprule
    \multicolumn{1}{c}{\textbf{Symbol}}
    & \multicolumn{1}{c}{\textbf{Definition}}
    & \multicolumn{1}{c}{\textbf{Description}}
    & \multicolumn{1}{c}{\textbf{Section}}
    \\ \midrule
    \(\vlambda\) & & Variational parameters & \labelcref{section:bbvi} \\
    \(\vz\) & & Parameters of the target model \(\pi\) & \labelcref{section:bbvi} \\
    \(\mathcal{T}_{\vlambda}\) & \(\triangleq \mC\left(\vlambda\right) \vu + \vm \) & location-scale reparameterization function & \labelcref{section:family} \\
    \(\rvvu\) & & Random vector before reparameterization & \labelcref{section:family} \\
    \(\varphi\) & & Base distribution of \(\rvvu\) & \labelcref{section:family} \\
    \(\vm\) &  & Location parameter (part of \(\vlambda\)) & \labelcref{section:family} \\
    \(\mC\) &  & Scale parameter (part of \(\vlambda\)) & \labelcref{section:family,section:scale}  \\
    \(\phi\) & & Diagonal conditioner & \labelcref{section:scale} \\
    \(k_{\varphi}\) & & Kurtosis (non-central 4th moment) of \(\rvvu\) & \labelcref{section:scale} \\
    \(\mD_{\phi}\left(\vs\right)\) &  & Diagonal of \(\mC\) using the diagonal conditioner \(\phi\) & \labelcref{section:scale} \\
    \(\mL\) &  & Strictly lower triangular part of \(\mC\) & \labelcref{section:scale} \\
    \(\vs\) &  & Elements forming the diagonal of \(\mC\) & \labelcref{section:scale} \\
    \(\ell\left(\vz\right)\) & \(\triangleq - \log p\left(\vz, \vx\right)\) & Negative joint likelihood & \labelcref{section:taxonomy}  \\
    \(f\left(\vlambda\right)\) & \(\triangleq \mathbb{E}_{\rvvz \sim q_{\vlambda}} \ell\left(\rvvz\right) \) & Energy & \labelcref{section:taxonomy}  \\
    \(h\left(\vlambda\right)\) & \(\triangleq -\mathbb{H}\left(q_{\vlambda}\right)\) & Negative entropy & \labelcref{section:taxonomy} \\
    \(F\left(\vlambda\right)\) & \(\triangleq f(\vlambda) + h\left(\vlambda\right) \) & Negative ELBO & \labelcref{section:bbvi,section:taxonomy} \\
    \(f\left(\vlambda; \vu\right)\) & \(\triangleq \ell\left(\mathcal{T}_{\vlambda}\left(\vu\right)\right) \) & Negative Log-Likelihood under reparameterization & \labelcref{section:taxonomy} \\
    \(g\left(\vlambda; \vu\right)\) & \(\triangleq \nabla \ell\left(\mathcal{T}_{\vlambda}\left(\vu\right)\right) \) & {\small{}Gradient of the Log-likelihood under reparameterization} & \labelcref{section:taxonomy} \\
    \(M\) &  & Number of Monte Carlo samples & \labelcref{section:bbvi} \\
    \(\gamma_t\) &  & Stepsize of (proximal) SGD at iteration \(t\) & \labelcref{section:bbviconvergence,section:proximal_sgd} \\
    \(\widehat{\nabla f}\) &  & Reparameterization gradient estimator of the energy & \labelcref{section:bbviconvergence} \\
    \bottomrule
  \end{tabular}
\end{table}

\newpage

\newpage
\section{Definitions}

For completeness, we provide formal definitions for some of the terms we used throughout the paper.

\begin{definition}[\textbf{Smoothness}]
  A function \(f: \mathcal{Z} \to \mathbb{R}\) is said to be \(L\)-smooth if the inequality
  \[
    \norm{\nabla f\left(\vz\right) - \nabla f\left(\vz'\right) } \leq \norm{\vz - \vz'}
  \]
  holds for all \(\vz, \vz' \in \mathcal{Z}\).
\end{definition}

This assumption, also occasionally called Lipschitz smoothness, restricts the amount the gradient can change for a given distance.
When \(f\) is twice differentiable, an equivalent condition is the Hessian to be bounded:
\begin{definition}[\textbf{Smoothness}]
  A twice differentiable function \(f: \mathcal{Z} \to \mathbb{R}\) is said to be \(L\)-smooth if the inequality
  \[
    {\lVert\nabla^2 f\left(\vz\right)\rVert} \leq L
  \]
  holds for all \(\vz \in \mathcal{Z}\).
\end{definition}
\vspace{1ex}

\begin{remark}
   Assuming a function \(f\) is smooth is equivalent to assuming that \(f\) can be upper bounded by a quadratic function everywhere.
\end{remark}
\vspace{1ex}
\begin{remark}
  When the log-density \(\log \pi\) of a probability measure \(\Pi\) is \(L\)-smooth, \(\log \pi\) can be upper bounded everywhere by the log-density of a Gaussian.
\end{remark}
\vspace{1ex}

\begin{definition}[\textbf{Strong Convexity}]\label{def:stronglyconvex}
  A twice differentiable function \(f: \mathbb{R}^d \to \mathbb{R}\) is said to be \(\mu\)-strongly convex if the inequality
  \[
    \frac{\mu}{2} \norm{\vz - \vz'}^2 + \inner{\nabla f\left(\vz\right)}{\vz - \vz'} + f\left(\vz\right) \leq f\left(\vz'\right) 
  \]
  holds for all \(\vz, \vz' \in \mathbb{R}^d\) and some \(\mu > 0\).
\end{definition}

\begin{remark}
  If \cref{def:stronglyconvex} holds only for \(\mu = 0\), \(f\) is said to be (non-strongly) convex.
\end{remark}
\vspace{1ex}
\begin{remark}
  Assuming a function \(f\) is strongly convex is equivalent to assuming that \(f\) can be lower bounded by a quadratic.
\end{remark}
\vspace{1ex}

\begin{definition}[\textbf{Strongly Log-Concave Measures}]\label{def:stronglylogconcave}
  For a probability measure \(\Pi\) in a Euclidean measurable space \((\mathbb{R}^d, \mathcal{B}\left(\mathbb{R}^d\right), \mathbb{P})\), where \(\mathcal{B}\left(\mathbb{R}^d\right)\) is the \(\sigma\)-algebra of Borel-measurable subsets of \(\mathbb{R}^d\), \(\mathbb{P}\) is the Lebesgue measure, we say \(\Pi\) is \(\mu\)-strongly log-concave if its log-density \(\log \pi\left(\vz\right) : \mathbb{R}^d \to \mathbb{R}\) is \(\mu\)-strongly convex for some \(\mu > 0\).
\end{definition}

\vspace{1ex}
\begin{remark}
  If \cref{def:stronglylogconcave} holds only for \(\mu = 0\), \(\Pi\) is said to be (non-strongly) log-concave.
\end{remark}
\vspace{1ex}
\begin{remark}
  When \(\Pi\) is \(\mu\)-strongly log-concave, \(\log \pi\) can be lower bounded everywhere by the log-density of a Gaussian.
\end{remark}

\newpage

\section{ProxGen Adam for Black-Box Variational Inference}

\begin{figure}[h]
\centering
  \begin{algorithm2e}[H]
    \DontPrintSemicolon
    \SetAlgoLined
    \KwIn{Initial variational parameters \(\vlambda_0\),
      base stepsize \(\alpha\),
      second moment stepsize \(\beta_{2}\),
      momentum stepsize \(\{\beta_{1,t}\}_{t=1}^{T}\),
      small positve constant \(\epsilon\)
    }
    \For{\(t = 1, \ldots, T\)}{
      estimate gradient of energy \(\widehat{\nabla f}\)\;
      \(\vg_t =  \widehat{\nabla f}\left(\vlambda\right) + \nabla h\left(\vlambda\right) \)\;
      \( \overline{\vlambda}_{t+1} = \beta_{1,t} \overline{\vlambda}_{t} + \left(1 - \beta_{1,t}\right) \overline{\vlambda}_{t} \)\;
      \( {\vv}_{t+1} = \beta_2 {\vv}_{t} + \left(1 - \beta_2 \right) \vg_t^2 \)\;
      \( \mGamma_{t+1} = \mathrm{diag}\left( \alpha / \left(\sqrt{\vv_{t+1}} + \epsilon \right) \right) \)\;
      \( \vlambda_{t+1} = \vlambda_{t} - \mGamma_{t+1} \overline{\vlambda}_{t+1} \)\;
      \( \vs_{t+1} \leftarrow \operatorname{getscale}\left( \vlambda_{t+1} \right) \)\;
      \(\vs_{t+1} \leftarrow \vs_{t+1} + \frac{1}{2} 
         \left( 
         \sqrt{
           \vs_{t+1}^2 + 4 \vgamma_{\vs,{t+1}}
         }
         - \vs_{t+1}
      \right)
      \)\;
      \( \vlambda_{t+1} \leftarrow \operatorname{setscale}\left( \vlambda_{t+1}, \vs_{t+1} \right) \)
    }
    \caption{ProxGen-Adam for Black-Box Variational Inference}\label{alg:proxgenadam}
  \end{algorithm2e}
  (By convention, all vector operations are elementwise.)
\end{figure}

Adaptive and matrix-valued stepsize-variants of SGD such as Adam~\citep{kingma_adam_2015}, AdaGrad~\citep{duchi_adaptive_2011} are widely used.
The matrix stepsize of Adam at iteration \(t\) is given as
\begin{align*}
  \mGamma_{t+1} = \mathrm{diag}\left( \alpha / \left(\sqrt{\vv_{t+1}} + \epsilon \right) \right),
\end{align*}
where \(\vv_t\) is the exponential moving average of the second moment, \(\alpha\) is the ``base stepsize.''
Furthermore, the matrix stepsize is applied to the moving average of the gradients, a scheme often called the (heavy-ball) momentum, denoted here as \(\overline{\vlambda}_t\).

Recently,~\citet{yun_adaptive_2021} have proven the convergence for these adaptive, momentum, and matrix-valued stepsize-based SGD methods with proximal steps.
Then, the proximal operator is applied as
\begin{align*}
   \mathrm{prox}_{\mGamma_t,h}\left( \vlambda_t - \mGamma_t  \overline{\vlambda}_t \right) 
   =
   \argmin_{\vlambda}\;
   \left\{\,
   \inner{\overline{\vlambda}_t}{\vlambda}
   +
   h\left(\vlambda\right)
   +
   \frac{1}{2}
   {\left(
     \vlambda - \vlambda_t
   \right)}^{\top}
   \mGamma_t^{-1}
   \left(
     \vlambda - \vlambda_t
   \right)
   \,\right\}.
\end{align*}

For Adam, the matrix-valued stepsize is a diagonal matrix.
Thus, the proximal operator of \citet{domke_provable_2020} for each \(s_{i}\) forms independent 1-dimensional quadratic problems.
Thus, the proximal step is given in the closed-form
\begin{equation*}
  \mathrm{prox}_{\mGamma_t,h}\left( s_{i} \right) 
   =
   s_{i} + \frac{1}{2} 
   \left( 
     \sqrt{
       s_{i}^2 + 4 \gamma_{s_{i}}
     }
     - s_{i}
   \right),
\end{equation*}
where, dropping the index \(t\) for clarity, \(\overline{s}_i\) is the element of \(\overline{\vlambda}_t\) corresponding to \(s_i\), \(\gamma_{s_{i}}\) denotes the stepsize of \(s_{i}\) (a diagonal element of \(\mGamma_t\)).
Combined with the Adam-like stepsize rule, the algorithm is shown in \cref{alg:proxgenadam}.

\paragraph{Difference with Adam}
In \cref{alg:proxgenadam}, we can see the differences with vanilla Adam.
Notably, ProxGenAdam does not perform bias correction of the estimated moments.
Furthermore, while some implementations of Adam decay \(\beta_1\), we keep it constant.
It is possible that these differences could result in a different behavior from vanilla Adam.
However, in this work, we follow the original implementation by \citet{yun_adaptive_2021} as closely as possible and leave the comparison with vanilla Adam to future works.

\newpage

\section{Detailed Comparison Against \citet{domke_provable_2023b}}\label{section:concurrent}
In this section, we contrast our results against those of \citet{domke_provable_2023b}.
First, the main challenge to establishing a convergence guarantee for BBVI has been on bounding the gradient variance.
In particular, \citet{domke_provable_2019} proved that the variance of the reparameterization gradient for the energy, \(\widehat{\nabla f}\), is bounded as
\begin{align}
  \mathbb{E} {\lVert\widehat{\nabla f}\left(\vlambda\right)\rVert}_2^2 \leq \alpha {\lVert\vlambda - \bar{\vlambda}\rVert}_2^2 + \beta
  \label{eq:qvc}
\end{align}
for some finite positive constants \(\alpha, \beta\) depending on the problem constants \(d, L, k_{\varphi}\).
\citet{domke_provable_2023b} call a gradient estimator satisfying this bound to be a ``quadratic variance'' estimator.
Furthermore, they prove that the closed-form entropy (CFE; \citealp{titsias_doubly_2014,kucukelbir_automatic_2017}) estimator: 
\begin{alignat*}{2}
    \widehat{\nabla F}_{\mathrm{CFE}}\left(\vlambda\right) &\triangleq \widehat{\nabla f}\left(\vlambda\right) + \nabla h\left(\vlambda\right)
\shortintertext{and the STL estimator by \citet{roeder_sticking_2017}:}
    \widehat{\nabla F}_{\mathrm{STL}}\left(\vlambda\right) &\triangleq 
    \frac{1}{M}\sum^M_{m=1} 
    -\nabla_{\vlambda}\ell\left(\mathcal{T}_{\vlambda}\left(\rvvu_m\right)\right) 
    +\nabla_{\vlambda} \log q_{\vlambda}\left(\mathcal{T}_{\vnu}\left(\rvvu_m\right)\right) \Big\lvert_{\vnu = \vlambda},
\end{alignat*}
where \(\rvvu \sim \varphi\), also qualify as quadratic variance estimators.

Unfortunately, it has been unknown whether SGD is guaranteed to converge with a quadratic variance estimator except for strongly convex objectives~\citep[p. 85]{wright_optimization_2021}.
\citet{domke_provable_2023b} expand the boundaries of SGD and prove that projected and proximal SGD with a quadratic variance estimator converges for both convex and strongly convex objectives.
In particular, for the location-scale variational family, the linear parameterization, and log-concave objectives, they prove a complexity of \(\mathcal{O}\left(1/\epsilon^2\right)\), and for strongly log-concave objectives, they prove a complexity of \(\mathcal{O}\left(1/\epsilon\right)\).

On the other hand, \citet{kim_practical_2023} and \cref{thm:variance_transfer} developed the bound in \cref{eq:qvc} to be of the form of 
\begin{align}
  \mathbb{E} {\lVert\widehat{\nabla F}\left(\vlambda\right)\rVert}_2^2 \leq A \left( F\left(\vlambda\right) - F^* \right) + \norm{\nabla F\left(\vlambda\right)}_2^2 +  C
  \label{eq:abc}
\end{align}
for some positive finite constants \(A, B, C\), for which the convergence of SGD for convex, strongly convex~\citep{garrigos_handbook_2023,gorbunov_unified_2020}, and non-convex  objectives~\citep{khaled_better_2023} have already been proven.
Applying these results to log-smooth and log-quadratically growing objectives, we prove a complexity of \(\mathcal{O}\left(1/\epsilon^4\right)\), while for strong log-concave objectives, we also prove a complexity of \(\mathcal{O}\left(1/\epsilon\right)\).

Overall, both approaches can be summarized as follows: we focused on establishing gradient variance bounds of known convergence proofs, while \citet{domke_provable_2023b} aimed to prove that the bound by \citet{domke_provable_2019} is sufficient to guarantee convergence.
Note that, for strongly log-concave objectives, \cref{eq:qvc} immediately implies \cref{eq:abc}.
Therefore, both approaches intersect in the case of strongly log-concave objectives.
Indeed, \cref{thm:strongly_convex_proximal_sgd_decgamma} and the analogous result of \citet{domke_provable_2023b} are both based on the same proof strategy by \citet{gower_sgd_2019}.

\newpage
\section{Proofs}

\subsection{Auxiliary Lemmas}
\vspace{2ex}

\begin{theoremEnd}[all end, category=external]{lemma}\label{thm:linearity}
  Let \(\phi\left(x\right) = x\).
  Then, the parameterization is linear in the sense that \(\mathcal{T}_{\vlambda}\) is a bilinear function such that
  \[
    \mathcal{T}_{\vlambda - \vlambda^{\prime}}\left(\vu\right) = \mathcal{T}_{\vlambda}\left(\rvvu\right)
    -
    \mathcal{T}_{\vlambda^{\prime}}\left(\rvvu\right).
  \]
  for any \(\vlambda, \vlambda' \in \Lambda\).
\end{theoremEnd}
\begin{proofEnd}
  \begin{align*}
    \mathcal{T}_{\vlambda - \vlambda^{\prime}}\left(\vu\right)
    &=
    \left( \mC\left(\vlambda - \vlambda^{\prime}\right) \right) \vu + \left(\vm - \vm^{\prime}\right)
    \nonumber
    \\
    &=
    \left( \mD_{\phi}\left(\vs - \vs'\right) + \left(\mL - \mL'\right) \right) \vu + \left(\vm - \vm^{\prime}\right),
    \nonumber
\shortintertext{using the fact that \(\phi\) is the identity function,}
    &=
    \left( \mD_{\phi}\left(\vs\right) - \mD_{\phi}\left(\vs'\right) + \left(\mL - \mL'\right) \right) \vu + \left(\vm - \vm^{\prime}\right)
    \nonumber
    \\
    &=
    \left( \mC\left(\vlambda\right) - \mC\left(\vlambda^{\prime}\right) \right) \vu + \left( \vm + \vm^{\prime} \right)
    \nonumber
    \\
    &=
    \left( \mC\left(\vlambda\right) \vu + \vm\right) - \left( \mC\left(\vlambda^{\prime}\right) \vu + \vm^{\prime} \right)
    \nonumber
    \\
    &=
    \mathcal{T}_{\vlambda}\left(\vu\right)
    -
    \mathcal{T}_{\vlambda^{\prime}}\left(\vu\right).
  \end{align*}
  The linearity with respect to \(\vu\) is obvious.
\end{proofEnd}

\begin{theoremEnd}[all end, category=external]{lemma}\label{thm:jacobian_reparam_inner_original}
  Let the linear parameterization be used.
  Then, for any \(\vlambda, \vlambda' \in \Lambda\), the inner product of the Jacobian of the reparameterization function satisfies the following equalities for any \(\vu \in \mathbb{R}^d\).
  \begin{enumerate}[label=\textbf{(\roman*)},itemsep=-1ex]
    \item For the Cholesky family \citep[Lemma 8]{domke_provable_2019},
      \[
        {\left(\frac{\partial \mathcal{T}_{\vlambda}\left(\vu\right) }{ \partial \vlambda }\right)}^{\top}
        \frac{\partial \mathcal{T}_{\vlambda}\left(\vu\right) }{ \partial \vlambda }
        =
        \left( 1 + \norm{\vu}_2^2 \right) \boldupright{I}
      \]

    \item For the mean-field family~\citep[Lemma 1]{kim_practical_2023},
      \[
        {\left(\frac{\partial \mathcal{T}_{\vlambda}\left(\vu\right) }{ \partial \vlambda }\right)}^{\top}
        \frac{\partial \mathcal{T}_{\vlambda}\left(\vu\right) }{ \partial \vlambda }
        =
        \left( 1 + {\lVert \mU^2 \rVert}_{\mathrm{F}} \right) \boldupright{I},
      \]
      where \(\mU = \mathrm{diag}\left(u_1, \ldots, u_d\right)\).
  \end{enumerate}
\end{theoremEnd}

\begin{theoremEnd}[all end, category=external]{lemma}[]\label{thm:u_normsquared_marginalization_original}
  Let the linear parameterization be used.
  Then, for any \(\vlambda \in \Lambda\) and any \(\vz \in \mathbb{R}^d\), the following relationships hold.
  \begin{enumerate}[label=\textbf{(\roman*)},itemsep=-1ex]
    \item For the Cholesky family~\citep[Lemma 2]{domke_provable_2019},
      \[
        \mathbb{E} \left( 1 + \norm{\rvvu}_2^2 \right) \norm{ \mathcal{T}_{\vlambda}\left(\rvvu\right) - \vz }_2^2
        =
        \left(d + 1\right) \norm{ \vm - \vz }_2^2 + \left(d + k_{\varphi}\right) \norm{ \mC }_{\mathrm{F}}^2
      \]
    \item For the mean-field family~\citep[Lemma 2]{kim_practical_2023},
      \[
        \mathbb{E} \left( 1 + {\lVert \rvmU^2 \rVert}_{\mathrm{F}} \right) \norm{ \mathcal{T}_{\vlambda}\left(\rvvu\right) - \vz }_2^2
        \leq
        \left( \sqrt{d k_{\varphi}} + k_{\varphi} \sqrt{d} + 1 \right) \norm{ \vm - \vz }_2^2
        + \left(2 k_{\varphi} \sqrt{d} + 1\right) \norm{ \mC }_{\mathrm{F}}^2.
      \]
  \end{enumerate}
\end{theoremEnd}

\begin{theoremEnd}[all end, category=external]{corollary}\label{thm:u_normsquared_marginalization}
  Let the linear parameterization be used and \(\vlambda, \vlambda^{\prime} \in \Lambda\) be any pair of variational parameters. 
  \begin{enumerate}[label=\textbf{(\roman*)},itemsep=-1ex]
    \item For the Cholesky family,\label{item:thm_u_normsquared_cholesky}
  \[
    \mathbb{E} \left( 1 + \norm{\rvvu}_2^2 \right) \norm{\mathcal{T}_{\vlambda^{\prime}}\left(\rvvu\right) - \mathcal{T}_{\vlambda}\left(\rvvu\right) }_2^2
    \leq
    (k_{\varphi} + d) \norm{\vlambda - \vlambda^{\prime}}_2^2
  \]
    \item For the mean-field family,\label{item:thm_u_normsquared_meanfield}
  \[
    \mathbb{E} \left( 1 + {\lVert\rvmU^2\rVert}_{\mathrm{F}} \right) \norm{\mathcal{T}_{\vlambda^{\prime}}\left(\rvvu\right) - \mathcal{T}_{\vlambda}\left(\rvvu\right) }_2^2
    \leq
    (2 k_{\varphi} \sqrt{d} + 1) \norm{\vlambda - \vlambda^{\prime}}_2^2
  \]
  \end{enumerate}
\end{theoremEnd}
\begin{proofEnd}
  The results are a direct consequence of \cref{thm:u_normsquared_marginalization_original} and \cref{thm:linearity}.
  \paragraph{Proof of \labelcref*{item:thm_u_normsquared_cholesky}}
  We start from~\cref{thm:u_normsquared_marginalization_original} as
  \begin{align*}
    \mathbb{E} \left( 1 + \norm{\rvvu}_2^2 \right) \norm{ \mathcal{T}_{\vlambda - \vlambda^{\prime}}\left(\rvvu\right) - \vz }_2^2
    &=
    \left(d + 1\right) \norm{ \left( \vm - \vm^{\prime} \right) - \vz }_2^2 + \left(d + k_{\varphi}\right) \norm{ \mC\left(\vlambda\right) - \mC\left(\vlambda^{\prime}\right) }_{\mathrm{F}}^2,
\shortintertext{setting \(\vz = \boldupright{0}\),}
    &=
    \left(d + 1\right) \norm{ \vm - \vm^{\prime}  }_2^2 + \left(d + k_{\varphi}\right) \norm{ \mC\left(\vlambda\right) - \mC\left(\vlambda^{\prime}\right) }_{\mathrm{F}}^2,
\shortintertext{and since \(k_{\varphi} \geq 3\) by the property of the kurtosis,}
    &\leq
    \left(d + k_{\varphi}\right) \left( \norm{ \vm - \vm^{\prime}  }_2^2 + \norm{ \mC\left(\vlambda\right) - \mC\left(\vlambda^{\prime}\right) }_{\mathrm{F}}^2 \right)
    \\
    &=
    \left(d + k_{\varphi}\right) \norm{\vlambda - \vlambda^{\prime}}_2^2.
  \end{align*}

  \paragraph{Proof of \labelcref*{item:thm_u_normsquared_meanfield}}
  Similarly, for the mean-field family, we can apply~\cref{thm:u_normsquared_marginalization_original} as
  \begin{align*}
    \mathbb{E} \left( 1 + {\lVert \rvmU^2 \rVert}_{\mathrm{F}} \right) \norm{ \mathcal{T}_{\vlambda - \vlambda^{\prime}}\left(\rvvu\right) - \bar{\vz} }_2^2
    &\leq
    \left( \sqrt{d k_{\varphi}} + k_{\varphi} \sqrt{d} + 1 \right) \norm{ \left(\vm - \vm^{\prime}\right) - \bar{\vz} }_2^2
    + \left(2 k_{\varphi} \sqrt{d} + 1\right) \norm{ \mC - \mC^{\prime} }_{\mathrm{F}}^2,
\shortintertext{setting \(\vz = \boldupright{0}\),}
    &=
    \left( \sqrt{d k_{\varphi}} + k_{\varphi} \sqrt{d} + 1 \right) \norm{\vm - \vm^{\prime}}_2^2
    + \left(2 k_{\varphi} \sqrt{d} + 1\right) \norm{ \mC - \mC^{\prime} }_{\mathrm{F}}^2,
\shortintertext{and since \(k_{\varphi} \geq 3\) by the property of the kurtosis,}
    &\leq
    \left( 2 k_{\varphi} \sqrt{d} + 1 \right) \left( \norm{ \vm - \vm^{\prime}  }_2^2 + \norm{ \mC\left(\vlambda\right) - \mC\left(\vlambda^{\prime}\right) }_{\mathrm{F}}^2 \right)
    \\
    &=
    \left( 2 k_{\varphi} \sqrt{d} + 1 \right) \norm{\vlambda - \vlambda^{\prime}}_2^2.
  \end{align*}
\end{proofEnd}

\begin{theoremEnd}[all end, category=external]{lemma}\label{thm:difference_t_z_identity}
  For the linear parameterization, 
  \[
    \mathbb{E} \norm{ \mathcal{T}_{\vlambda}\left(\rvvu\right) - \mathcal{T}_{\vlambda'}\left(\rvvu\right) }_2^2
    =
    \norm{ \vlambda - \vlambda' }_2^2
  \]
  for any \(\vlambda, \vlambda' \in \Lambda\).
\end{theoremEnd}
\begin{proofEnd}
    First notice that, for linear parameterizations, we have 
    \begin{align*}
      \mathbb{E} \norm{ 
        \mathcal{T}_{\vlambda}\left(\rvvu\right) 
      }_2^2
      &=
      \mathbb{E} \norm{ 
        \mC \rvvu + \vm
      }_2^2
      \\
      &=
      \mathbb{E} 
      \rvvu^{\top}
      \mC^{\top}
      \mC 
      \rvvu
      +
      \norm{\vm}_2^2
      +
      2 \vm^{\top} \mC \mathbb{E} \rvvu
      \\
      &=
      \mathbb{E} 
      \operatorname{tr}
        \left(
        \rvvu^{\top}
        \mC^{\top}
        \mC 
        \rvvu
      \right)
      +
      \norm{\vm}_2^2
      +
      2 \vm^{\top} \mC \mathbb{E} \rvvu,
\shortintertext{rotating the elements of the trace,}
      &=
      \operatorname{tr}
        \left(
        \mC^{\top}
        \mC 
        \mathbb{E} 
        \rvvu
        \rvvu^{\top}
      \right)
      +
      \norm{\vm}_2^2
      +
      2 \vm^{\top} \mC \mathbb{E} \rvvu,
\shortintertext{applying \cref{assumption:symmetric_standard}}      
      &=
      \operatorname{tr}
        \left(
        \mC^{\top}
        \mC 
      \right)
      +
      \norm{\vm}_2^2
      \\
      &=
      \norm{\mC}_\mathrm{F}^2
      +
      \norm{\vm}_2^2
      \\
      &=
      \norm{\vlambda}_2^2.
    \end{align*}
    Combined with \cref{thm:linearity}, we have
    \begin{align*}
      \mathbb{E} \norm{ 
        \mathcal{T}_{\vlambda}\left(\rvvu\right) 
        -
        \mathcal{T}_{\vlambda'}\left(\rvvu\right) 
      }_2^2
      =
      \mathbb{E} \norm{ 
        \mathcal{T}_{\vlambda - \vlambda'}\left(\rvvu\right) 
      }_2^2
      =
      \norm{\vlambda - \vlambda'}_2^2.
    \end{align*}
    
\end{proofEnd}


\printProofs[external]
\newpage
\subsection{Properties of the Evidence Lower Bound}
\subsubsection{Smoothness}
\vspace{2ex}
\printProofs[regularity]
\vspace{2ex}
\printProofs[smoothnesskey]
\newpage
\printProofs[smoothnessboundedjacobian]
\newpage
\printProofs[smoothness]
\newpage
\printProofs[examplesmoothness]

\newpage
\subsubsection{Convexity}
\vspace{2ex}
\printProofs[convexitylemma]
\printProofs[convexityanalysis]
\newpage
\printProofs[convexity]

\newpage
\subsection{Convergence of Black-Box Variational Inference}
\subsubsection{Vanilla Black-Box Variational Inference}
\vspace{2ex}
\printProofs[sgdbbvi]

\newpage
\subsubsection{Proximal Black-Box Variational Inference}\label{section:proximal_bbvi}
\vspace{2ex}
\printProofs[proximalsgdbbvi]

\newpage

\section{Details of Experimental Setup}\label{appendix:experimental_setup}

{\hypersetup{linkbordercolor=black,linkcolor=black}
\begin{table*}[ht]
  \vspace{-2ex}
\caption{Summary of Datasets and Problems}\label{table:datasets}
\vspace{-2ex}
\setlength{\tabcolsep}{.2em}
\begin{center}
  {\small
\begin{threeparttable}
\begin{tabular}{lllrr}
    \toprule
    \multicolumn{1}{c}{\textbf{Abbrev.}}
    & \multicolumn{1}{c}{\textbf{Model}}
    & \multicolumn{1}{c}{\textbf{Dataset}}
    & \multicolumn{1}{c}{\(d\)}
    & \multicolumn{1}{c}{\(N\)}
    \\ \midrule
    \textsf{LME-election} & \multirow{2}{*}{Linear Mixed Effects} & 1988 U.S. presidential election~\citep{gelman_data_2007} & 90   & 11,566 \\
    \textsf{LME-radon} & & U.S. household radon levels~\citep{gelman_data_2007}  & 391  & 12,573 \\ \arrayrulecolor{gray!40}\midrule
    \textsf{BT-tennis} & Bradley-Terry & ATP World Tour tennis  & 6030 & 172,199 \\ \arrayrulecolor{gray!40}\midrule
    \textsf{LR-keggu} & \multirow{4}{*}{Linear Regression} & KEGG-undirected~\citep{shannon_cytoscape_2003}      & 31   & 63,608 \\
    \textsf{LR-song}     & & million songs~\citep{bertin-mahieux_million_2011} & 94   & 515,345 \\
    \textsf{LR-buzz}     & & buzz in social media~\citep{kawala_predictions_2013} & 81   & 583,250 \\
    \textsf{LR-electric} & & household electric   & 15   & 2,049,280 \\ \arrayrulecolor{gray!40}\midrule
    \textsf{AR-ecg} & Sparse Autoregression & Long-term ST ECG~\citep{jager_longterm_2003} & 63   & 20,642,000 \\
    \arrayrulecolor{black}\bottomrule
\end{tabular}
\end{threeparttable}
  }%
\end{center}
\vspace{-2ex}
\end{table*}
}


\paragraph{Linear Regression (\textsf{LR-*})}
We consider a basic Bayesian hierarchical linear regression model
\begin{alignat*}{2}
  \sigma_{\alpha} &\sim \mathcal{N}_+\left(0, 10^2 \right), \quad
  \sigma_{\vbeta} \sim \mathcal{N}_+\left(0, 10^2 \right), \quad
  \sigma          \sim \mathcal{N}_+\left(0, 0.3^2 \right) \\
  \vbeta         &\sim \mathcal{N}\left( \mathbf{0}, \sigma_{\vbeta}^2 \mI \right), \quad 
  \alpha         \sim \mathcal{N}\left( 0, \sigma_{\alpha}^2 \right), \quad \\
  y_i            &\sim \mathcal{N}\left( \vbeta^{\top} \vx_i + \alpha, \, \sigma^2 \right),
\end{alignat*}
where a weakly informative half-normal hyperprior \(\mathcal{N}_+\), a normal distribution with the support restricted to \(\mathbb{R}_+\), is assigned on the hyperparameters.
For the datasets, we consider large-scale regression problems obtained from the UCI repository~\citep{dua_uci_2017}, shown in \cref{table:datasets}.
For all datasets, we standardize the regressors \(\vx_i\) and the outcomes \(y_i\).

\paragraph{Radon Levels (\textsf{MLE-radon})}
\textsf{MLE-radon} is a radon level regression problem by~\citet{gelman_data_2007}.
It fits a hierarchical mixed-effects model for estimating household radon levels across different counties while considering the floor elevation of each site.
The model is described as
\begin{alignat*}{5}
  \sigma          &\sim \mathcal{N}_+\left(0, 1^2\right), \quad
  \sigma_{\alpha} \sim \mathcal{N}_+\left(0, 1^2\right), \quad
  \mu_{\alpha}    \sim \mathcal{N}\left(0, 10^2\right), \quad
  \vepsilon       \sim \mathcal{N}\left(\mathbf{0}, 10^2 \boldupright{I}\right) \\
  \beta_{1}      &\sim \mathcal{N}\left(0, 10^2\right), \quad
  \beta_{2}      \sim \mathcal{N}\left(0, 10^2\right) \\
  \valpha       &= \mu_{\alpha} + \sigma_{\alpha} \vepsilon \\
  \mu_i    &=  \alpha[\mathrm{county}_i] + \beta_1 \log \left(\mathrm{uppm}_i\right) + \mathrm{floor}_i \, \beta_2 \\
  \log {\mathrm{radon}_i}  &\sim \mathcal{N}\left(\mu_i, \sigma^2\right),
\end{alignat*}
which uses variable slopes and intercepts with non-centered parameterization.
The dataset was obtained from PosteriorDB~\citep{magnusson_posteriordb_2022}.
Also, for the radon regression problem, the Minnesota subset is often used due to computational reasons.
Here, we use the full national dataset.

\paragraph{Presidential Election (\textsf{MLE-election})}
\textsf{MLE-election} is a model for studying the effects of sociological factors on the 1988 United States presidential election~\citep{gelman_data_2007}.
The model is described as
\begin{align*}
    \sigma_{\mathrm{age}}    &\sim \mathcal{N}\left(0, 100^2 \right), \quad 
    \sigma_{\mathrm{edu}}    \sim \mathcal{N}\left(0, 100^2 \right), \quad
    \sigma_{\mathrm{age} \times \mathrm{edu}} \sim \mathcal{N}\left(0, 100^2 \right)
    \\
    \sigma_{\mathrm{state}}  &\sim \mathcal{N}\left(0, 100^2 \right), \quad
    \sigma_{\mathrm{region}} \sim \mathcal{N}\left(0, 100^2 \right), 
    \\
    \\
    \vb_{\mathrm{age}}    &\sim \mathcal{N}\left(\mathbf{0}, \sigma_{\mathrm{age}}^2 \boldupright{I}\right), \quad
    \vb_{\mathrm{edu}}    \sim \mathcal{N}\left(\mathbf{0}, \sigma_{\mathrm{edu}}^2 \boldupright{I}\right), \quad
    \vb_{\mathrm{age} \times \mathrm{edu}} \sim \mathcal{N}\left(\mathbf{0}, \sigma_{\mathrm{age} \times \mathrm{edu}}^2 \boldupright{I} \right), 
    \\
    \vb_{\mathrm{state}}  &\sim \mathcal{N}\left(\mathbf{0}, \sigma_{\mathrm{state}}^2 \boldupright{I}  \right), \quad
    \vb_{\mathrm{region}} \sim \mathcal{N}\left(\mathbf{0}, \sigma_{\mathrm{region}}^2 \boldupright{I} \right) 
    \\
    \\
    \vbeta    &\sim \mathcal{N}\left(\mathbf{0}, 100^2 \boldupright{I}\right) 
    \\
    p_i &= \beta_1 + \beta_2 \, {\mathrm{black}}_i + \beta_3 \, {\mathrm{female}}_i + \beta_4 \, v_{\mathrm{prev},i} + \beta_5 \, {\mathrm{female}}_i \, {\mathrm{black}}_i 
    \\
    &\quad  + b_{\mathrm{age}}[\mathrm{age}_i] + b_{\mathrm{edu}}[\mathrm{edu}_i] 
    + b_{\mathrm{age} \times \mathrm{edu}}[{\mathrm{age}}_i \, {\mathrm{edu}}_i]  + b_{\mathrm{state}}[{\mathrm{state}}_i] + b_{\mathrm{region}}[{\mathrm{region}}_i] \\
    y_i &\sim \mathsf{bernoulli}\left(p_i\right).
\end{align*}
The dataset was obtained from PosteriorDB~\citep{magnusson_posteriordb_2022}.

\paragraph{Bradley-Terry (\textsf{BT-Tennis})}
\textsf{BT-Tennis} is a Bradley-Terry model for estimating the skill of professional tennis players used by~\citet{giordano_black_2023}.
The model is described as
\begin{align*}
  \sigma  &\sim \mathcal{N}_+\left(0, 1\right) \\
  \vtheta &\sim \mathcal{N}\left(\boldupright{0}, \sigma^2 \boldupright{I} \right) \\
  p_i     &\sim \theta[\mathrm{win}_i] - \theta[\mathrm{los}_i] \\
  y_i     &\sim \mathsf{bernoulli}\left(p\right),
\end{align*}
where \(\mathrm{win}_i\), \(\mathrm{los}_i\) are the indices of the winning and losing players for the \(i\)th game, respectively.
While we subsample over the games \(i = 1, \ldots, N\), each player's involvement is sparse in that each player plays only a handful of games.
Consequently, the subsampling noise is substantial.
Therefore, we use a larger batch size of 500.
Similarly to~\citet{giordano_black_2023}, we use the ATP World Tour data publically available online~\footnote{\url{https://datahub.io/sports-data/atp-world-tour-tennis-data}}.

\paragraph{Autoregression (\textsf{AR-ecg})}
\textsf{AR-ecg} is a linear autoregressive model.
Here, we use a Student-t likelihood as originally proposed by \citet{christmas_robust_2011}.
While they originally imposed an automatic relevance detection prior on the autoregressive coefficients, we instead set a horseshoe shrinkage prior~\citep{carvalho_horseshoe_2010, carvalho_handling_2009}.
Since the horseshoe is known to result in complex posterior geometry, this should make the problem more challenging.
The model is described as
\begin{align*}
  \alpha_{d} &= 10^{-2}, \quad
  \beta_{d}  = 10^{-2}, \quad
  \alpha_{d} = 10^{-2}, \quad
  \beta_{d}  = 10^{-2}, \\
  d          &\sim \mathsf{gamma}\left(\alpha_{d}, \beta_{d}\right),  \\
  \sigma^{-1} &\sim \mathsf{inverse\text{-}gamma}\left(\alpha_{\sigma}, \beta_{\sigma}\right), \\
  \tau       &\sim \mathsf{cauchy}_+\left( 0, 1 \right), \\
  \vlambda   &\sim \mathsf{cauchy}_+\left( \boldupright{0}, \boldupright{1} \right), \\
  \vtheta    &\sim \mathcal{N}\left(0, \tau \, \mathrm{diag}\left(\vlambda\right) \right) \\ 
  y[n]       &\sim \mathsf{stduent\text{-}t}\left(d,\, \theta_1 y[n-1] + \theta_2 y[n-2] + \cdots + \theta_P y[n-P], \sigma\right),
\end{align*}
where \(d\) is the degrees-of-freedom for the Student-t likelihood, \(\mathrm{cauchy}_+\) is a half-Cauchy prior.

For the dataset, we use the long-term electrocardiogram measurements of~\citet{jager_longterm_2003} obtained from Physionet~\citep{goldberger_physiobank_2000}.
The data instance we used has a duration of 23 hours sampled at 250 Hz with 12-bit resolution over a range of \(\pm{}10\) millivolts.
During the experiments, we observed that the hyperparameters suggested by \citeauthor{christmas_robust_2011} are sensitive to the signal amplitude.
Therefore, we scaled the signal amplitude to be \(\pm{}\)10.


\newpage

\section{Additional Experimental Results}\label{appendix:additional}
\begin{figure}[H]
  \subfloat[\textsf{LME-election}]{
    \includegraphics[scale=1.0]{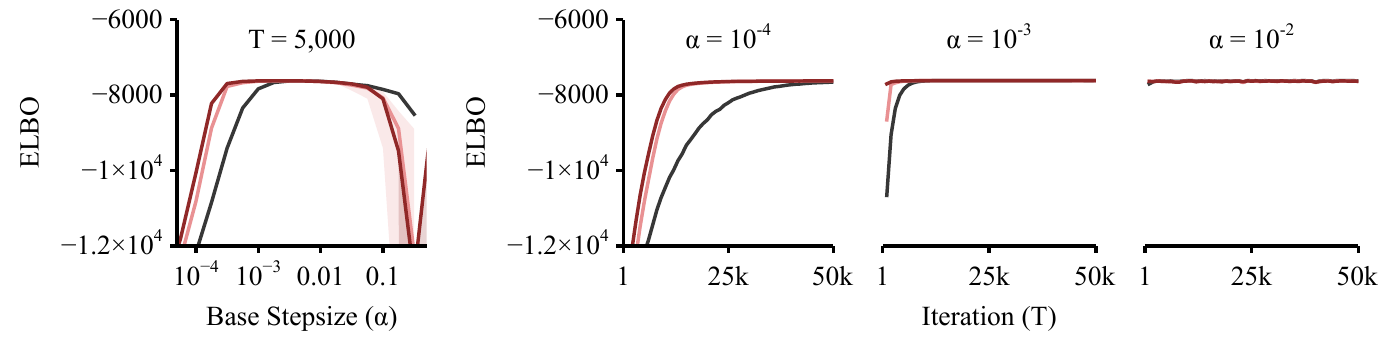}
    \vspace{-1ex}
  }
  \\
  \vspace{1ex}
  \subfloat[\textsf{LME-radon}]{
    \includegraphics[scale=1.0]{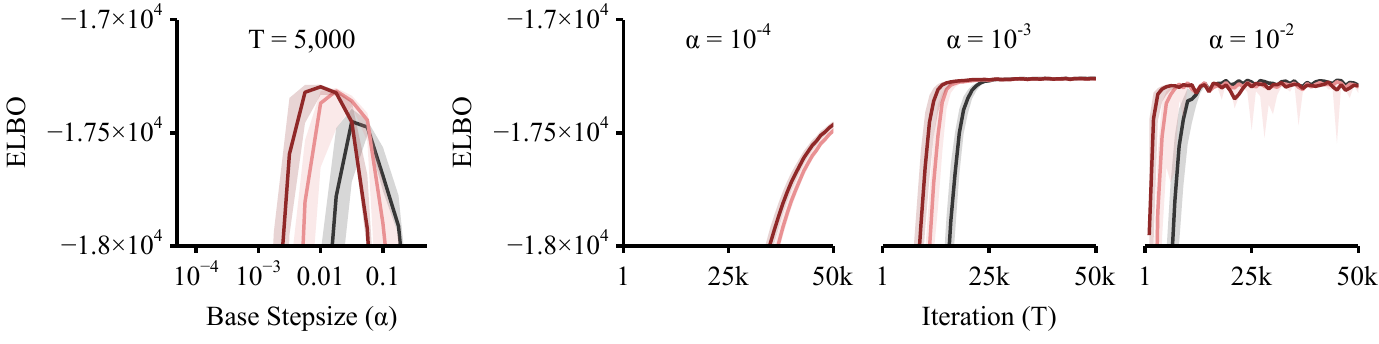}
    \vspace{-1ex}
  }
  \\
  \vspace{1ex}
  \subfloat[\textsf{BT-tennis}]{
    \includegraphics[scale=1.0]{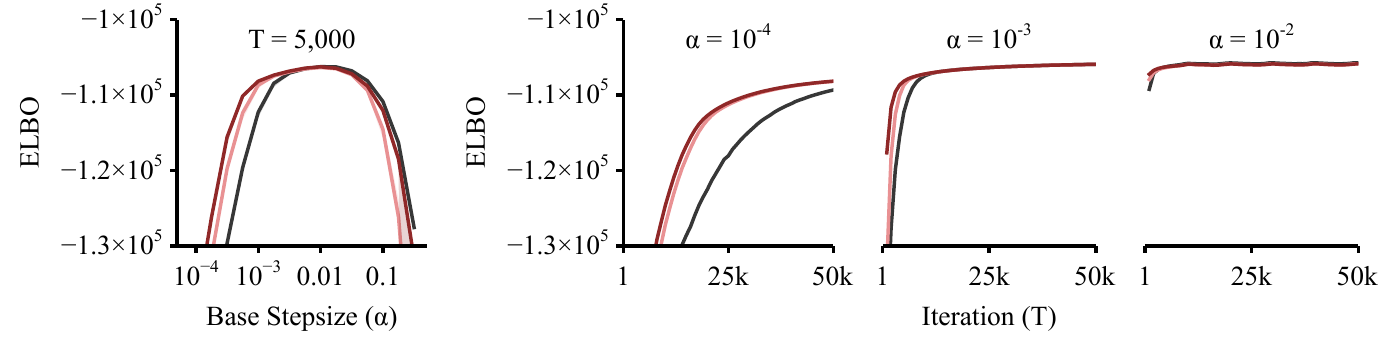}
    \vspace{-1ex}
  }
  \\
  \vspace{1ex}
  \subfloat[\textsf{LR-keggu}]{
    \includegraphics[scale=1.0]{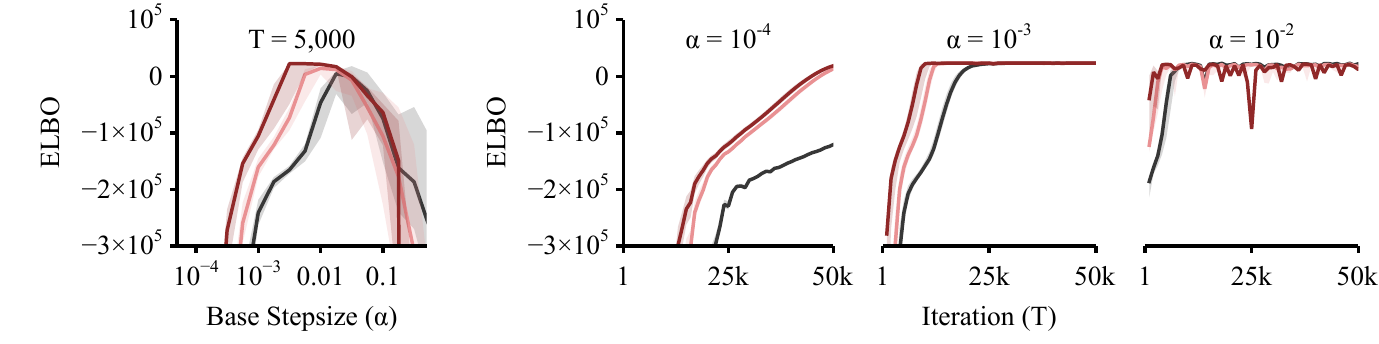}
  }
  \caption{
  \textbf{
    BBVI convergence speed (ELBO v.s. Iteration) and robustness against stepsize (ELBO at \(T=50,000\) v.s. Base stepsize).} The error bands are the 80\% quantiles estimated from 20 (10 for \textsf{AR-eeg}) independent replications.
    The initial point was \(\vm_0 = \boldupright{0}, \mC_0 = \boldupright{I}\).
    }
\vspace{10ex}
\end{figure}

\begin{figure}[H]
  \vspace{1ex}
  \subfloat[\textsf{LR-song}]{
    \includegraphics[scale=1.0]{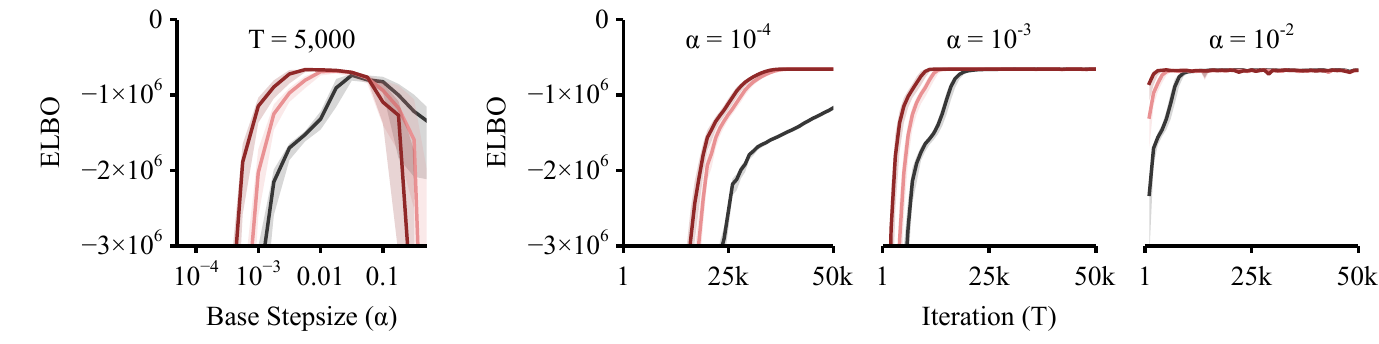}
  }
  \\
  \subfloat[\textsf{LR-buzz}]{
    \includegraphics[scale=1.0]{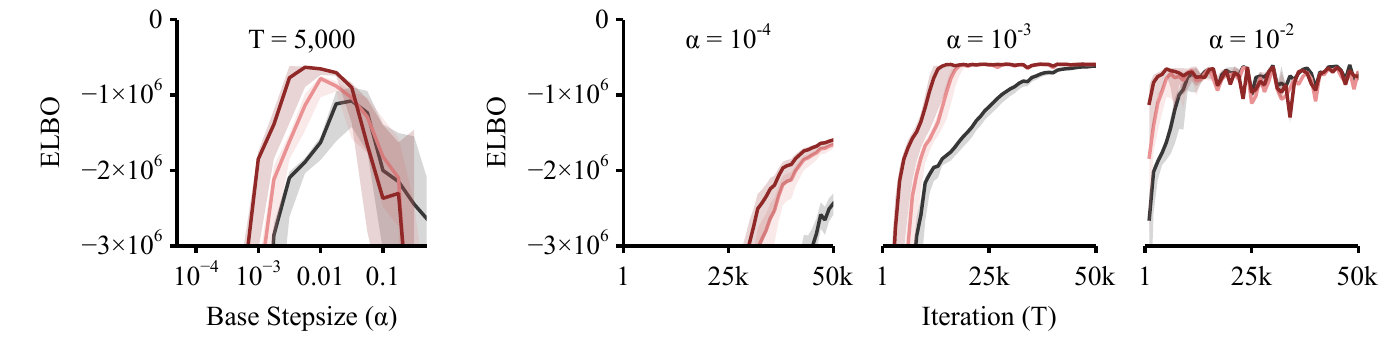}
  }
  \\
  \subfloat[\textsf{LR-electric}]{
    \includegraphics[scale=1.0]{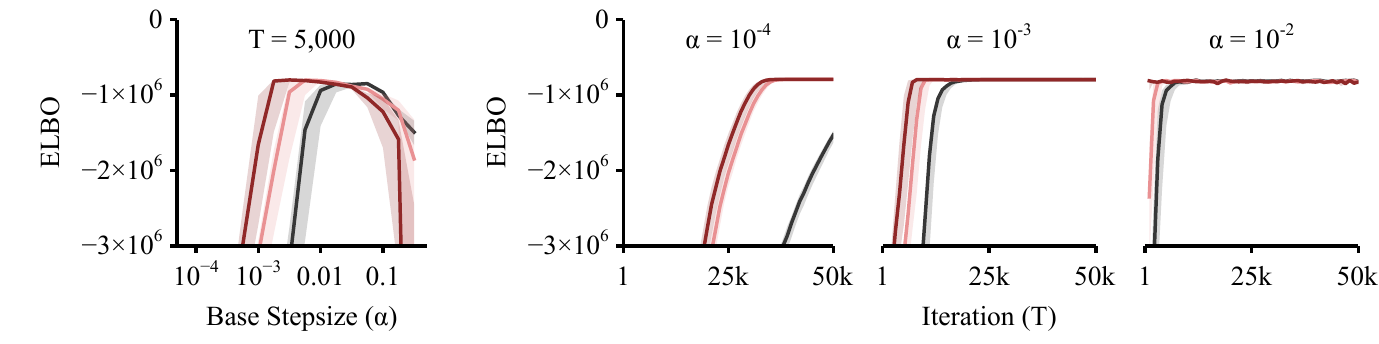}
  }
  \\
  \subfloat[\textsf{AR-ecg}]{
    \includegraphics[scale=1.0]{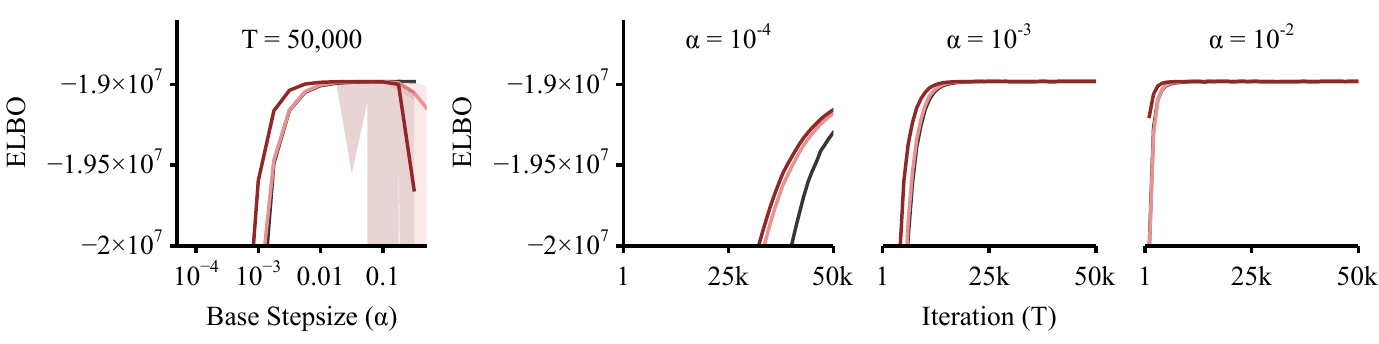}
  }
  \caption{
    \textbf{BBVI convergence speed (ELBO v.s. Iteration) and robustness against stepsize (ELBO at \(T=50,000\) v.s. Base stepsize).}
    The error bands are the 80\% quantiles estimated from 20 (10 for \textsf{AR-eeg}) independent replications.
    The initial point was \(\vm_0 = \boldupright{0}, \mC_0 = \boldupright{I}\).
  }
\end{figure}


\end{document}